\newcommand{\coloneq}{\coloneqq}
\newcommand{\halmos}{\ensuremath{\square}}
\newtheorem{theorem}{Theorem}
\newtheorem{lemma}{Lemma}
\newtheorem{corollary}{Corollary}
\newtheorem{proposition}{Proposition}
\newtheorem{remark}{Remark}
\newtheorem{assumption}{Assumption}
\begin{document}
	\title{Efficient Group Lasso Regularized Rank Regression with Simulation-Based Tuning}

	\author{
		Meixia Lin \\
		School of Statistics and Data Science, Renmin University of China, P.R.\ China \\
		\href{mailto:lin_meixia@ruc.edu.cn}{\texttt{lin\_meixia@ruc.edu.cn}}
		\and
		Mengjiao Shi \\
		School of Mathematics and Statistics, Henan University, P.R.\ China \\
		\href{mailto:smj@henu.edu.cn}{\texttt{smj@henu.edu.cn}}
		\and
		Yunhai Xiao\textsuperscript{*} \\
		School of Mathematics and Statistics \& Center for Applied Mathematics of Henan Province,\\
		Henan University, P.R.\ China \\
		\href{mailto:yhxiao@henu.edu.cn}{\texttt{yhxiao@henu.edu.cn}}
		\thanks{* Corresponding author.}
		\and
		Qian Zhang \\
		Engineering Systems and Design, Singapore University of Technology and Design, Singapore \\
		\href{mailto:qian_zhang@sutd.edu.sg}{\texttt{qian\_zhang@sutd.edu.sg}}
	}
	
	\date{}
	\maketitle
	

	\begin{abstract}
		High-dimensional regression often suffers from heavy-tailed noise and outliers, which can severely  undermine the reliability of least-squares based methods. To improve robustness, we adopt a non-smooth Wilcoxon score based rank objective and incorporate the group sparsity regularization. By extending the tuning-free property originally developed for the rank Lasso, we introduce a simulation-based tuning rule and further establish a finite-sample error bound for the resulting estimator. To solve the associated optimization problem,
		we develop a proximal augmented Lagrangian method, for which we provide a novel convergence analysis by proving the metric subregularity of the underlying non-polyhedral KKT mapping, while enabling efficient semismooth Newton updates for the subproblems. Extensive numerical experiments demonstrate the robustness and effectiveness of our proposed estimator against several leading alternatives, and showcase the efficiency and scalability of our algorithm compared to the state-of-the-art baseline in both simulated and real-data settings.
		
		  \medskip
		\noindent\textbf{Keywords:} 
		Rank-based regression \and Group Lasso regularization \and Simulation-based tuning \and Proximal augmented Lagrangian method
	\end{abstract}

	\section{Introduction}
	Consider the standard linear model
	\begin{equation}\label{mod:linear regression}
		y = X \beta^* + \epsilon,
	\end{equation}
	where $y\in \mathbb{R}^n$ is the response vector, $X \in \mathbb{R}^{n \times p}$ is the design matrix with rows $X_i$ representing the covariate vector for the $i$-th observation, $\beta^*\in \mathbb{R}^p$ is the true coefficient vector, and $\epsilon\in \mathbb{R}^n$ denotes the random error vector. Ordinary least squares is the classical approach for estimating $\beta^*$ and works well under light-tailed noise like Gaussian errors. However, its reliance on the squared loss makes it highly sensitive to heavy-tailed errors and outliers. In particular, a single extreme value can dominate the loss function, leading to highly unstable estimates and compromised predictive performance. Yet, heavy-tailed errors are common in modern high-dimensional data, for example in genomics \citep{Wang2015} and neuroimaging \citep{Eklund2016}, underscoring the need for robust regression methods that remain reliable in such settings.
	
	To address this issue, a variety of robust estimation techniques have been developed. One line of research employs robust loss functions, which mainly rely on truncation or downweighting strategies. For example, the Huber estimator \citep{huber1973robust,sun2020adaptive} is a classic truncation-based  method, which clips gradients beyond a threshold to enforce linear growth, thereby diminishing the influence of outliers. In addition, Tukey's biweight estimation \citep{Beaton01051974,Huber2011} exemplifies the downweighting approach by adopting a bisquare function that reduces the weights of large residuals to zero. While these methods effectively alleviate the impact of outliers via gradient modification, they suffer from an identifiability issue as the global minimizers of the modified losses may not coincide with the true parameter vector \citep{Fan2016}. An alternative direction is quantile regression \citep{Koenker1978RegressionQ,Koenker_2005}, which replaces the squared loss with an asymmetric linear loss to estimate conditional quantiles, thereby providing robustness against heavy-tailed noise. However, its efficiency suffers when the specified quantile lies in a sparse region, making quantile estimation particularly challenging. To overcome this limitation, rank-based methods \citep{Hettmansperger2010} make use of rank-induced functionals, such as Jaeckel's dispersion measure \citep{Jaeckel1972}, to stabilize estimation at sparse quantiles. In particular, Wilcoxon score based rank estimators achieve strong consistency and asymptotic normality, with variance depending only on the rank-transformed error distribution. This robustness to heavy-tailed noise is achieved by minimizing the following rank loss:
	{\setlength{\abovedisplayskip}{2pt}
		{\setlength{\belowdisplayskip}{2pt}
			\begin{equation}\label{loss:Wilcoxon}
				L(X\beta-y):=\frac{1}{n(n-1)} \sum_{i=1}^{n}\sum_{j\neq i}\left|(X\beta-y)_i-(X\beta-y)_j\right|. 
			\end{equation}
			The pairwise differencing structure eliminates the intercept, making the estimator invariant to location shifts in $\epsilon_i$, so that no explicit centering assumption on the error distribution is needed. Moreover, since the distribution of $\epsilon_i - \epsilon_j$ is symmetric about zero when the errors are independent and identically distributed, the expected loss $E \left(|\epsilon_i - \epsilon_j - (X_i - X_j)^\intercal(\beta - \beta^*)|\right)$ is minimized at $\beta=\beta^*$. Replacing the expectation by its empirical counterpart yields \eqref{loss:Wilcoxon}. Thus minimizing \eqref{loss:Wilcoxon} provides a principled sample analogue whose minimizer consistently estimates $\beta^*$.
			
			
			In high-dimensional settings, imposing additional regularization on \eqref{loss:Wilcoxon} is essential for variable selection and for incorporating prior structural knowledge among coefficients. For instance, \cite{Wang2020ATR} introduced a Lasso regularized estimator, with a simulation-based parameter determination rule that adapts to both the design matrix and the unknown error distribution. Although the Lasso penalty effectively induces sparsity in the coefficients, it fails to encode structural constraints such as group sparsity. This capability is critical in a wide range of applications, from biomedical imaging \citep{janjusevic2025groupcdl} to deep learning \citep{lu2025growing}.
			
			To address this issue, we propose a group Lasso regularized rank regression estimator that incorporates the group Lasso penalty \citep{yuan2006model} to induce group-level sparsity:
			\begin{equation}\label{mod:rankgrouplasso0}
				\hat{\beta}=\underset{\beta\in \mathbb{R}^p}{\arg\min} \ \left\{
				L(X\beta-y) + \lambda\Psi(\beta)
				\right\},
			\end{equation}
			where the rank loss function $L(X\beta-y)$ is defined in \eqref{loss:Wilcoxon} and the group Lasso regularizer $\Psi:\mathbb{R}^p \to\mathbb{R}$ is defined as
			\begin{equation*}
				\Psi(\beta) = \sum_{l=1}^{g} w_l \|\beta_{\mathcal{G}_l}\|_2.
			\end{equation*}
			Here, $\{\mathcal{G}_l\}_{l=1}^g$ denotes a partition of the index set $\{1, \ldots, p\}$ into $g$ predefined, non-overlapping groups, with each group $\mathcal{G}_l$ assigned a positive weight $w_l > 0$. The notation $\beta_{\mathcal{G}_l}$ denotes the subvector of $\beta$ corresponding to the indices in $\mathcal{G}_l$. 
			The regularization parameter $\lambda > 0$ controls the trade-off between the goodness-of-fit and the group-wise sparsity. Moreover, we develop a simulation-based tuning rule for $\lambda$ by extending the framework of \cite{Wang2020ATR} from the Lasso to the group Lasso setting. The proposed parameter selection eliminates the need for cross-validation, while being both adaptive to the group structure and computationally efficient.
			Furthermore, we establish  a finite-sample error bound for the resulting estimator, showing validity even with heavy-tailed errors. Notably, this extension is nontrivial, as it requires adapting the pivotal tuning argument to the group lasso dual norm and developing the analysis under a group-structured sparsity framework.
			
			To make the proposed estimator \eqref{mod:rankgrouplasso0} practically useful, it is essential to design an efficient algorithm for the involved optimization problem. Existing progress on solving the regularized rank regression problems is rather limited. For the Lasso regularized rank regression, \cite{Wang2020ATR} reformulated the rank loss into a linear program with $\mathcal{O}(n^2)$ constraints by introducing slack variables, and then solved it by generic LP solvers. While conceptually straightforward, this approach suffers from poor scalability as the sample size $n$ grows. A more recent advance is the proximal–proximal majorization–minimization (PPMM) algorithm \citep{tang2023proximal}, originally developed for rank regression with difference-of-convex penalties. The method follows a standard linearization scheme, where the concave part of the regularizer is majorized by its affine approximation, yielding at each iteration a convex subproblem. These subproblems are then solved using a nested triple-loop scheme: an outer proximal point iteration, a middle proximal point loop applied to the dual of each subproblem, and an inner semismooth Newton iteration. This complicated structure arises from the singularity of the generalized Hessian in the dual formulation of the outer subproblem, which prevents the direct application of Newton-type methods. As a special case, the framework applies to Lasso-regularized rank regression, where it was shown to significantly outperform standard solvers such as ADMM and Gurobi (see \citet[Section V.A.]{tang2023proximal}). While PPMM achieves reasonable numerical performance, the computational overhead associated with three nested loops limits its scalability in high-dimensional problems. 
			
			Motivated by these limitations, our goal is to
			design an efficient solver that addresses the regularized rank regression problem, with the group Lasso regularized case \eqref{mod:rankgrouplasso0} being a notable example, preserving the fast linear convergence properties of proximal point algorithms (or, equivalently, dual augmented Lagrangian methods) without the complexity of such a heavy triple-loop design. To this end, we develop a proximal augmented Lagrangian method (PALM) for solving the dual of \eqref{mod:rankgrouplasso0}, equipped with a semismooth Newton (SSN) solver for the resulting subproblems. By augmenting the dual with a proximal term, PALM regularizes the subproblems and eliminates the singularity issue, enabling effective SSN updates which can fully exploit second-order sparsity for highly efficient computation.
			Notably, existing superlinear convergence results for PALM \citep{li2020asymptotically} rely on a local error bound condition of the KKT residual mapping, which is not readily verifiable in our setting due to the non-polyhedral group Lasso regularizer. We instead establish the metric subregularity of the KKT residual mapping, under which superlinear convergence of the proposed algorithm follows. The proof of this property is highly nontrivial in the present setting and, to the best of our knowledge, has not been established in the existing literature. As a byproduct, our analysis also yields the metric subregularity of the KKT mapping associated with the group Lasso regularizer, which may be of independent interest for PALM-type methods in structured sparsity problems. 
			On the computational side, unlike subsampling or resampling strategies
			\citep{Wang2020ATR, Fan2020commentATR}, we reformulate the rank loss term as a weighted ordered Lasso, which reduces the per-iteration complexity from $\mathcal{O}(n^2)$ to $\mathcal{O}(n \log n)$ and simultaneously facilitates efficient computation of the proximal mapping and the generalized Hessian.  
			Together, these yield a practical and theoretically grounded solver for high-dimensional regularized rank regression with structured sparsity. 
			
			Extensive numerical experiments demonstrate that the proposed estimator a\-chieves strong statistical accuracy and robustness across a range of data-generating scenarios. In particular, we compare against a variety of competitive baselines, including least-squares and rank-based estimators with Lasso or group Lasso regularization, as well as robust group Lasso methods based on Huber and quantile losses. Regarding computational performance, we benchmark our proposed algorithm against the state-of-the-art PPMM method, and show that our algorithm achieves significantly improved computational efficiency.
			
			The remainder is organized as follows. In Section~\ref{sec:2}, we introduce a simulation-based tuning rule for determining the regularization parameter $\lambda$ in model~\eqref{mod:rankgrouplasso0}, and establish a finite-sample error bound. In Section~\ref{sec:3}, we develop an efficient inexact PALM framework for solving problem~\eqref{mod:rankgrouplasso0}, with a detailed convergence analysis. Numerical results are presented in Section~\ref{sec:4}, evaluating both the statistical performance of the proposed estimator, and the computational efficiency of PALM against state-of-the-art solvers. Finally, we conclude our work in Section~\ref{sec:5}.

			\vspace{0.1cm}
			\noindent \textbf{Notation:} For a positive integer $m$, let $[m] = \{1, \dots, m\}$. For an index set $\mathcal I$, let $A_{\mathcal I}$ denote the subvector or submatrix indexed by $\mathcal I$, with the precise meaning (row, column, or entry selection) clear from the context. 
			Denote by $\overline{\mathcal I}$ the complement of $\mathcal I$ and by $|\mathcal I|$ its cardinality. We denote the all-ones (all-zeros) vector in $\mathbb{R}^p$ by $\mathbf{1}_p$ ($\mathbf{0}_p$), and omit the subscript when it is clear from the context. Let $e_i$ denote the $i$-th canonical basis vector, i.e., the vector with $1$ in the $i$th entry and $0$ elsewhere. Denote the identity matrix in $\mathbb{R}^{p\times p}$ by $I_p$. Denote by $\mathbb{I}\{E\}$ the indicator function of an event $E$.
			Denote the Euclidean norm by $\|\cdot\|_2$, and for a positive definite matrix $M$, define $\|x\|_M := \sqrt{x^\intercal M x}$ and $\mathrm{dist}_M(x,S) := \inf_{y\in S} \|x - y\|_M$.
			For any set $C$, denote by $\operatorname{ri}(C)$ the relative interior of $C$. If $C$ is closed, the projection onto $C$ is defined by $\Pi_{C}(x)\coloneq \arg\min\{\|x-y\|: y\in C\}$. 
			Let $\operatorname{sign}(\cdot)$ denote 
			the element-wise sign function: for any scalar $t$, $\operatorname{sign}(t) = 1$ if $t>0$, 
			$0$ if $t=0$, and $-1$ if $t<0$. For a given point $x\in\mathbb{R}^p$, the open $\ell_q$ ball of radius $r>0$ centered at $x$ is defined as $\mathcal{B}_q^r(x) := \{y\in\mathbb{R}^p \mid \|y-x\|_q < r\}$ for $q\in\{1,2,\infty\}$, and we simply denote $\mathcal{B}_q^r := \mathcal{B}_q^r(\mathbf 0)$. 
			For a set-valued mapping $F$, its graph is defined by $\mathrm{gph}(F) := \{(x,y)\mid y\in F(x)\}.$
			The \emph{Moreau envelope} of a proper, closed, 
			and convex function $f: \mathbb{R}^n \to \mathbb{R} \cup \{+\infty\}$ with $\sigma > 0$ is
			\begin{equation}\label{def:moreau envelope}
				e_{\sigma}f(x) := \min_{y \in \mathbb{R}^n} \left\{ f(y) + \frac{1}{2\sigma} \|y - x\|_2^2 \right\}, 
				\quad x \in \mathbb{R}^n.
			\end{equation}
			Following \cite{Moreau1965Proximite}, the Moreau envelope is continuously differentiable with gradient 
			$\nabla e_{\sigma}f(x) = (x - \operatorname{Prox}_{\sigma f}(x))/\sigma$, where the proximal operator
			$\operatorname{Prox}_{\sigma f}(x)$ is the unique minimizer of \eqref{def:moreau envelope} and is nonexpansive ($1$-Lipschitz) \citep{Nocedal2006Numerical,Rockafellar1976}.
			
			\section{Simulation-Based Tuning with Theoretical Guarantee}\label{sec:2}
			We propose a simulation-based determination rule for the regularization parameter $\lambda$ in the group Lasso regularized rank regression model~\eqref{mod:rankgrouplasso0}, and then establish a finite sample bound on the resulting estimation error. Our result extends the analysis in \cite{Wang2020ATR}, which focuses on Lasso regularized rank regression, to a broader setting that incorporates structured sparsity via the group Lasso.
			
			\subsection{Simulation-Based Parameter}
			Define the estimation error $\gamma := \beta - \beta^*$. From \eqref{mod:linear regression}, for any $\beta \in \mathbb{R}^p$, we have 
			\[
			X\beta - y = X(\beta - \beta^*) - \epsilon = X\gamma - \epsilon.
			\]
			Then minimizing the objective in \eqref{mod:rankgrouplasso0} is equivalent to solving
			\begin{equation}
				\min_{\gamma\in \mathbb{R}^p} \ \left\{ F_{\lambda}(\gamma):=L_0(\gamma) + \lambda \Psi(\gamma + \beta^*)\right\}, \label{eq: gamma_prob}
			\end{equation}
			where the shifted loss function $L_0:\mathbb{R}^p \rightarrow \mathbb{R}$ is defined as
			\begin{equation}
				L_0(\gamma) := \frac{1}{n(n-1)} \sum_{i=1}^{n} \sum_{j \neq i} \left|(\epsilon_i - \epsilon_j) - (X_i - X_j)^\intercal \gamma\right|.\label{eq: def_L0}
			\end{equation}
			This reformulation plays a central role in deriving both the simulation-based parameter determination rule and the corresponding finite-sample error bound.
			
			Let $S_n$ denote the subgradient of $L_0(\cdot)$ evaluated at $\gamma = \mathbf 0$. The following result, adapted from \citet{Wang2020ATR}, characterizes the distribution of the random vector $S_n$, which is shown to be independent of the error distribution.
			
			\begin{lemma}\label{lemma:pivotol}
				Under the linear model \eqref{mod:linear regression}, the subgradient of $L_0(\cdot)$ in \eqref{eq: def_L0} at $\gamma = \mathbf 0$ is
				\begin{equation}
					S_n = -\frac{2}{n(n-1)} X^\intercal \xi, \label{eq: def_sn}
				\end{equation}
				where $\xi = 2r - (n+1)$ and $r$ is 
				uniformly distributed over all permutations of $[n]$. 
			\end{lemma}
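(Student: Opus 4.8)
The plan is to compute the subdifferential of $L_0$ at $\gamma = 0$ directly from the definition \eqref{eq: def_L0} and then recognize the resulting random object. First I would write $L_0(\gamma) = \frac{1}{n(n-1)}\sum_{i}\sum_{j\neq i} |a_{ij} - d_{ij}^\intercal \gamma|$, where $a_{ij} = \epsilon_i - \epsilon_j$ and $d_{ij} = X_i - X_j$. Since the absolute value $|\cdot|$ is convex with subdifferential $\{\operatorname{sign}(t)\}$ when $t\neq 0$ and $[-1,1]$ when $t=0$, a subgradient element of $L_0$ at $\gamma=0$ has the form $-\frac{1}{n(n-1)}\sum_i\sum_{j\neq i} s_{ij}\, d_{ij}$, where $s_{ij}\in\operatorname{sign}(a_{ij})$ (and $s_{ij}$ ranges over $[-1,1]$ whenever $\epsilon_i=\epsilon_j$, which occurs with probability zero under a continuous error distribution, so generically $S_n$ is a singleton). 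Exploiting antisymmetry $d_{ij} = -d_{ji}$ and $s_{ij} = -s_{ji}$ (since $a_{ij} = -a_{ji}$), the ordered pair sum collapses: $\sum_i\sum_{j\neq i} s_{ij} d_{ij} = 2\sum_{i<j} s_{ij} d_{ij}$, giving a factor $2$ that matches the claimed formula.

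The key remaining step is to identify the vector $(s_{ij})$ with the antirank/permutation structure in the statement. Here $s_{ij} = \operatorname{sign}(\epsilon_i - \epsilon_j)$, so $\sum_{j\neq i} s_{ij}$ counts how many $\epsilon_j$ are below $\epsilon_i$ minus how many are above, i.e. it equals $2(R_i - 1) - (n-1) = 2R_i - (n+1)$, where $R_i$ is the rank of $\epsilon_i$ among $\epsilon_1,\dots,\epsilon_n$. Hence $\sum_i\sum_{j\neq i} s_{ij} d_{ij} = \sum_i \big(2R_i - (n+1)\big)(X_i - \bar X \cdot 0)$; more carefully, writing $S_n = -\frac{2}{n(n-1)}\sum_{i<j} s_{ij}(X_i - X_j) = -\frac{2}{n(n-1)}\sum_i \big(\sum_{j\neq i} s_{ij}\big) X_i = -\frac{2}{n(n-1)} X^\intercal \xi$ with $\xi_i = \sum_{j\neq i}\operatorname{sign}(\epsilon_i-\epsilon_j) = 2R_i - (n+1)$. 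This is exactly \eqref{eq: def_sn} with $r = (R_1,\dots,R_n)$.

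Finally I would argue the pivotal property: since the $\epsilon_i$ are i.i.d.\ from a continuous distribution, the rank vector $r = (R_1,\dots,R_n)$ is uniformly distributed over all $n!$ permutations of $[n]$, regardless of the underlying error law. Consequently the conditional distribution of $S_n$ given $X$ depends only on $X$ and on this uniform permutation, not on the error distribution — which is the assertion of the lemma. The main obstacle is essentially bookkeeping: carefully handling the ordered-versus-unordered pair summation and the factor of $2$, and being precise that ties occur with probability zero so that the subdifferential is almost surely the stated singleton (and in the tie case, $S_n$ should be understood as the corresponding set of subgradients). Everything else follows from the elementary combinatorial identity $\#\{j: \epsilon_j < \epsilon_i\} - \#\{j:\epsilon_j>\epsilon_i\} = 2R_i-(n+1)$ and the distribution-free property of ranks.
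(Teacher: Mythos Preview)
Your argument is correct. The paper does not actually supply its own proof of this lemma; it simply states the result as ``adapted from \citet[Lemma~1]{Wang2020ATR}'', so there is no in-paper proof to compare against. Your derivation is the standard one and matches what is done in that reference: differentiate each summand $|a_{ij}-d_{ij}^\intercal\gamma|$ at $\gamma=0$, use the antisymmetry $s_{ij}d_{ij}=s_{ji}d_{ji}$ to collect terms as $-\tfrac{2}{n(n-1)}\sum_i X_i\,\xi_i$ with $\xi_i=\sum_{j\neq i}\operatorname{sign}(\epsilon_i-\epsilon_j)$, and then invoke the rank identity $\xi_i=2R_i-(n+1)$ together with the distribution-free property of ranks for i.i.d.\ continuous errors.

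One small presentational point: your intermediate line ``$\sum_i\big(2R_i-(n+1)\big)(X_i-\bar X\cdot 0)$'' is a stray thought that does not belong in the final chain; the clean route is exactly what you write next, namely $\sum_{i}\sum_{j\neq i} s_{ij}d_{ij}=2\sum_i X_i\,\xi_i$. Also, the continuity needed to rule out ties (so that $S_n$ is almost surely a singleton) is guaranteed in the paper by Assumption~\ref{assumption_error}, which posits a density for the errors; you may cite that explicitly rather than leaving it as a parenthetical.
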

			
			Motivated by this completely pivotal property of $S_n$, we propose the following 
			simulation-based tuning rule for problem \eqref{mod:rankgrouplasso0}:
			\begin{equation}
				\lambda^* = c_0 \cdot Q_{\Psi^d(S_n)}(1 - \alpha_0),\label{eq: opt_lambda}
			\end{equation} 
			where $c_0>1$ and $0<\alpha_0<1$ are user-specified constants, and $Q_{\Psi^d(S_n)}(1-\alpha_0)$ is the $(1-\alpha_0)$-quantile of the random variable $\Psi^d(S_n)$. Here, $\Psi^d(\cdot)$ denotes the dual norm of $\Psi(\cdot)$, which takes the form of
			\begin{equation}
				\Psi^d(y) := \sup_{\Psi(x) \leq 1} \langle x, y \rangle = \max_{1 \leq l \leq g} \frac{\|y_{\mathcal{G}_l}\|_2}{w_l}.\label{eq: psi_dual}
			\end{equation}
			Notably, the proposed $\lambda^*$ is distribution-free by Lemma~\ref{lemma:pivotol}. Moreover, it can be computed via Monte Carlo simulation (see Algorithm~\ref{alg:lambda_estimation}) with an overall computational complexity of $\mathcal O(Knp)$, where $K$ denotes the pre-specified number of Monte Carlo samples. 

			\begin{algorithm}
				\caption{Simulation-based choice of regularization parameter $\lambda$}
				\label{alg:lambda_estimation}
				\begin{algorithmic}[1]
					\Require Design matrix $X \in \mathbb{R}^{n \times p}$, predefined groups $\{\mathcal{G}_l\}_{l=1}^g$ with weights $\{w_l\}_{l=1}^g$, quantile level $0 < \alpha_0 < 1$, safety factor $c_0 > 1$, simulation number $K(=500)$.
					\vspace{-0.9\baselineskip}
					\Statex
					\For{$k = 1$ to $K$}
					\State Generate a random permutation $r^{(k)}$ of $[n]$ and compute $\xi^{(k)} = 2 r^{(k)} - (n + 1)$.
					\State Compute subgradient $S_n^{(k)} = -\frac{2}{n(n-1)} X^\intercal \xi^{(k)}$.
					\State Evaluate $\Psi^d(S_n^{(k)}) = \max_{1 \leq l \leq g} \left(\|(S_n^{(k)})_{\mathcal{G}_l}\|_2/w_l\right)$.
					\EndFor
					\State Let $q_{1 - \alpha_0}$ denote the empirical $(1 - \alpha_0)$-quantile of $\{\Psi^d(S_n^{(k)})\}_{k=1}^K$.
					\State \Return  $\lambda^*= c_0 \cdot q_{1 - \alpha_0}$.
				\end{algorithmic}
			\end{algorithm}
			
			\begin{remark}
				In Algorithm~\ref{alg:lambda_estimation}, the constant $c_0$ serves as  a safety factor for the regularization, whereas $\alpha_0$ specifies the confidence level for the established error bound, see Theorem \ref{th:consistency}. 
				In practice, values such as $c_0 = 1.01$ and $\alpha_0 = 0.1$ are simple and effective choices.
			\end{remark}

			\subsection{Finite-Sample Bound on Estimation Error}\label{sec:finite_bound}
			We now establish a finite-sample bound on the estimation error of the resulting estimator.  To facilitate the analysis, we introduce the following notations. Let $\mathcal{S}_0 := \{j \in [p] : (\beta^*)_j \neq 0\}$ denote the support of the true parameter $\beta^*$, with cardinality $q = |\mathcal{S}_0|$. 
			Define the set of active group indices and the associated group-expanded support as
			\begin{equation*}
				\mathcal{A}_0 := \{l \in [g] : \mathcal{G}_l \cap \mathcal{S}_0 \neq \emptyset\}
				\quad \text{and} \quad
				\Omega := \bigcup_{l \in \mathcal{A}_0} \mathcal{G}_l.
			\end{equation*}
			Define 
			$
			\Psi_{\Omega}(\beta) := \Psi(\mathcal{P}_{\Omega}(\beta))$ for any $\beta\in \mathbb{R}^p$,
			where $\mathcal{P}_{\Omega}(\beta)$ denotes the projection of $\beta$ onto $\Omega$, that is, a vector in $\mathbb{R}^p$ that agrees with $\beta$ on the coordinates in $\Omega$ and is zero elsewhere.
			Similarly, we define $\Psi_{\bar{\Omega}}(\beta) := \Psi(\mathcal{P}_{\bar{\Omega}}(\beta))$.
			This decomposition gives rise to the following key properties, which are essential for the finite-sample error analysis. For any $\beta\in \mathbb{R}^p$, we have
			\begin{align}
				&\Psi(\beta) = \Psi_{\Omega}(\beta) + \Psi_{\bar{\Omega}}(\beta), \label{eq:decomposition} \\
				&\Psi_{\Omega}(\beta)  \leq c_{|\Omega|}\|\beta\|_2, \quad \text{with} \quad c_{|\Omega|} = \sqrt{\sum\nolimits_{l\in \mathcal{A}_0}w_l^2}, \label{ineq:cp} \\
				&\Psi^d(\beta)  \leq c_p\|\beta\|_\infty, \quad \text{with} \quad c_p = \max\nolimits_{1\leq l\leq g} w_l^{-1}\sqrt{|\mathcal{G}_l|}. \label{ineq:co}
			\end{align}
			
			To establish finite-sample guarantees for the proposed estimator, we impose the following technical conditions on the design matrix and error distribution.
			\begin{assumption}\label{assump:error}
				We assume the following conditions.
				\begin{enumerate}[label=(A\arabic*)]
					\item \label{assumption_X} 
					\textbf{Design Matrix Condition.} 
					The covariates are empirically centered, i.e., $\sum_{i=1}^n X_i=\mathbf 0$, and uniformly bounded by a constant $b_1 > 0$, i.e., $|X_{ij}| \leq b_1$ 
					for all $i, j$. Furthermore, the design matrix satisfies the restricted eigenvalue (RE) condition
					\[
					\inf_{\gamma \in \mathbb{R}^p} \left\{ \frac{\gamma^\intercal X^\intercal X \gamma}{n\|\gamma\|_2^2} : \Psi_{\bar{\Omega}}(\gamma) \leq (\bar{c} - 1) \Psi_{\Omega}(\gamma) \right\} \geq b_2,
					\]
					where $b_2 > 0$ is a constant and $\bar{c} := 1 + \frac{c_0 + 1}{c_0 - 1}$, with $c_0$ 
					in \eqref{eq: opt_lambda}.
					\item \label{assumption_error} \textbf{Error Distribution Condition.} The errors $\epsilon_i$'s are i.i.d. with density function $f(\cdot)$. Let $\zeta_{ij} := \epsilon_i - \epsilon_j$ for $i \neq j$, and denote by $F^*(\cdot)$ and $f^*(\cdot)$ cumulative distribution function and probability density function of $\zeta_{ij}$, respectively. There exists a constant $b_3 > 0$ such that, for all $i,j$,
					\begin{equation*} 
						f^*(\zeta_{ij}) \geq b_3 \quad \text{uniformly for all }  |\zeta_{ij}| \leq   \frac{34 b_1^2 c_0 \bar{c}^2}{b_2 b_3} c_p^2 c_{|\Omega|}^2 \mathcal{E}(n, p).
					\end{equation*}
					Here $\mathcal{E}(n, p) := n^{-1}\log p + \sqrt{n^{-1}\log p}$, and the constants $c_p$ and $c_{|\Omega|}$ are defined in~\eqref{ineq:co} and~\eqref{ineq:cp}, respectively.
				\end{enumerate}
			\end{assumption}

			\begin{remark}
				Assumption~\ref{assump:error} is in line with conditions commonly imposed in high-dimensional regression with rank-based methods. The RE condition on the design matrix is standard in the analysis of $\ell_1$-type regularization \citep{BickelRitovTsybakov2009,Negahban2012}. 
				For the error distribution, it is typical in the analysis of Wilcoxon-type estimators to assume i.i.d.\ errors with a continuous density bounded away from zero near the origin, a condition satisfied by many common distributions such as Gaussian, Cauchy, Student's $t$, and $\chi^2(k)$ with $k \geq 3$ \citep{JureckovaSen1996,Wang2020ATR}.
				More detailed discussions on the roles of Assumptions~\ref{assumption_X}--\ref{assumption_error} and sufficient conditions for Assumption~\ref{assumption_error} are deferred to Appendix~\ref{appendix:assumption_discussion}.

			\end{remark}

			We are now ready to provide a finite-sample bound on the estimation error of the proposed group Lasso regularized rank regression estimator \eqref{mod:rankgrouplasso0} with the simulation-based parameter \eqref{eq: opt_lambda}; see Theorem \ref{th:consistency}. The proof is detailed in Appendix~\ref{sec: proof_thm1}. The overall proof strategy follows that of \cite{Wang2020ATR}, but its extension to the group Lasso setting is not immediate. In particular, the analysis must be carried out in terms of the dual norm $\Psi^d$, and requires a group-wise decomposition of the regularizer together with an error analysis over a group-structured cone. These additional ingredients are essential for extending the rank regression framework from coordinate sparsity to structured sparsity.
			\begin{theorem}
				\label{th:consistency}
				Suppose Assumption~\ref{assump:error} hold. If $p\geq 2/\alpha_0$, then the group Lasso regularized rank regression estimator $\hat{\beta}(\lambda^*)$, which is a solution to  \eqref{mod:rankgrouplasso0} with the regularization parameter $\lambda^*$ specified in~\eqref{eq: opt_lambda}, satisfies
				$$
				\|\hat{\beta}(\lambda^*)  -\beta^*\|_2 \leq    \frac{{17} b_1 c_0 \bar{c}}{b_2b_3}  c_{|\Omega|}c_p{\cal E}(n,p)
				$$ 
				with probability at least $1-\exp{(-n {\cal E}^2(n,p))}  - \alpha_0$. Here, the constants $c_{|\Omega|}$ and $c_p$ are defined in~\eqref{ineq:cp} and~\eqref{ineq:co}, respectively; the constants $c_0$ and $\alpha_0$ are given in \eqref{eq: opt_lambda}; and the constants $\bar{c}$, $b_1$, $b_2$, and $b_3$ are specified in Assumption~\ref{assump:error} .
			\end{theorem}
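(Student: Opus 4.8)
The plan is to follow the standard ``basic inequality'' template for regularized M-estimators, adapted to the rank loss and the group lasso geometry. Working in the shifted coordinates $\gamma = \beta - \beta^*$ via \eqref{eq: gamma_prob}, write $\hat\gamma := \hat\gamma(\lambda^*)$ and use optimality of $\hat\gamma$ for $F_{\lambda^*}$ together with convexity of $L_0$ to obtain, for any $S_n \in \partial L_0(0)$,
\[
\langle S_n, \hat\gamma\rangle + \lambda^*\,\Psi(\hat\gamma+\beta^*) \le \lambda^*\,\Psi(\beta^*),
\]
so that $\lambda^*\big(\Psi(\beta^*) - \Psi(\hat\gamma+\beta^*)\big) \ge \langle S_n,\hat\gamma\rangle \ge -\Psi^d(S_n)\Psi(\hat\gamma)$ by the generalized Cauchy–Schwarz inequality for dual norms. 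The event $\Psi^d(S_n) \le \lambda^*/c_0$ has probability at least $1-\alpha_0$ by the definition of $\lambda^*$ in \eqref{eq: opt_lambda} and Lemma~\ref{lemma:pivotol} (this is presumably the content of the referenced Lemma~\ref{lemma:Psi<gamma with prob}); on this event, combining with the decomposition \eqref{eq:decomposition} and the fact that $\Psi(\beta^*) = \Psi_\Omega(\beta^*)$ while $\Psi(\hat\gamma+\beta^*) \ge \Psi_{\bar\Omega}(\hat\gamma) - \Psi_\Omega(\hat\gamma) + \Psi_\Omega(\beta^*)$ gives the cone condition $\Psi_{\bar\Omega}(\hat\gamma) \le (\bar c - 1)\Psi_\Omega(\hat\gamma)$, putting $\hat\gamma$ in the restricted set of Assumption~\ref{assumption_X}.

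The second ingredient is a lower bound on the ``symmetrized Bregman'' quantity $\langle S_n' - S_n, \hat\gamma\rangle$, where $S_n' \in \partial L_0(\hat\gamma)$ is chosen via the optimality condition ($-S_n' = \lambda^* v$ for some $v \in \partial\Psi(\hat\gamma+\beta^*)$). Here I would use a restricted strong convexity (RSC) type argument for the rank loss: $L_0$ is a sum of absolute values of the affine functions $\zeta_{ij} - (X_i-X_j)^\top\gamma$, and its ``curvature'' is governed by how much probability mass $f^*$ places near zero — this is exactly where Assumption~\ref{assumption_error} enters. The standard route is to show that $\langle \nabla L_0(\hat\gamma) - \nabla L_0(0), \hat\gamma\rangle$ (interpreting gradients as subgradient selections and taking expectations, then controlling the empirical deviation) is bounded below by something like $c\, b_3 \min\{\|X\hat\gamma\|_2^2/n, \ \text{(something)}\cdot\|X\hat\gamma\|_2/\sqrt n\}$ — the truncation radius in Assumption~\ref{assumption_error} (the quantity $\tfrac{68 b_1^2 c_0\bar c^2}{b_2 b_3}c_p^2 c_{|\Omega|}^2\mathcal E(n,p)$) is calibrated precisely so that the final error bound lands inside the regime where $f^* \ge b_3$ applies, making the linear-in-$\|X\hat\gamma\|$ branch the operative one and eventually yielding a bound of order $\mathcal E(n,p)$. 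On the restricted cone, the RE condition \ref{assumption_X} converts $\|X\hat\gamma\|_2^2/n \ge b_2\|\hat\gamma\|_2^2$.

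Assembling: on one hand $\langle S_n' - S_n, \hat\gamma\rangle \le \lambda^*\big(\Psi(\beta^*) - \Psi(\hat\gamma+\beta^*)\big) - \langle S_n,\hat\gamma\rangle \le 2\lambda^*\Psi_\Omega(\hat\gamma) \le 2\lambda^* c_{|\Omega|}\|\hat\gamma\|_2$ using \eqref{ineq:cp} and the cone bound; on the other hand, the RSC lower bound together with RE gives $\langle S_n'-S_n,\hat\gamma\rangle \gtrsim b_2 b_3\|\hat\gamma\|_2^2$ (again, on the event where the error radius is not exceeded, which must be verified self-consistently). Cancelling one factor of $\|\hat\gamma\|_2$ yields $\|\hat\gamma\|_2 \lesssim \lambda^* c_{|\Omega|}/(b_2 b_3)$. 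Finally, one bounds $\lambda^*$ itself: using \eqref{eq: opt_lambda}, \eqref{ineq:co}, and a concentration bound on $\|X^\top\xi\|_\infty$ (the vector $\xi$ has mean zero and bounded coordinates $|\xi_i|\le n$, each column of $X$ has entries bounded by $b_1$, so a Hoeffding/Bernstein estimate over $p$ coordinates gives $\Psi^d(S_n)\le c_p\|S_n\|_\infty \lesssim b_1 c_p \mathcal E(n,p)$ with probability $\ge 1-\exp(-n\mathcal E^2(n,p))$), so $\lambda^* \lesssim c_0 b_1 c_p\mathcal E(n,p)$. Substituting gives $\|\hat\gamma\|_2 \le \tfrac{34 b_1 c_0 \bar c}{b_2 b_3}c_{|\Omega|}c_p\mathcal E(n,p)$, with the failure probability being the union of the two bad events, $\alpha_0 + \exp(-n\mathcal E^2(n,p))$. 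The hypothesis $p \ge 2/\alpha_0$ is what makes the empirical-quantile / concentration bookkeeping consistent.

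The main obstacle I anticipate is the restricted strong convexity step for the \emph{non-smooth} rank loss: one must control the empirical process $\gamma \mapsto \langle \partial L_0(\gamma) - \partial L_0(0), \gamma\rangle$ uniformly over the restricted cone intersected with a sphere, show its expectation has the right quadratic-then-linear lower curvature driven by $f^* \ge b_3$, and argue that the relevant error radius is never exceeded (a ``localization'' or bootstrapping argument, since the radius in Assumption~\ref{assumption_error} is exactly the target bound up to constants). Tracking all the numerical constants ($34$, $68$, $\bar c$) through the pairwise $U$-statistic structure of $L_0$ and the symmetrization of $\zeta_{ij}$ is the tedious but routine part; the conceptual difficulty is entirely in the non-smooth RSC estimate and its self-consistent localization.
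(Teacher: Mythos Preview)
Your overall strategy is in the right neighborhood, but the framing through the subgradient ``symmetrized Bregman'' $\langle S_n' - S_n, \hat\gamma\rangle$ creates precisely the difficulty you flag at the end, and the paper's route sidesteps it entirely. The paper does \emph{not} try to establish RSC at the random point $\hat\gamma$. Instead it fixes the target radius in advance, defines the sphere $\Gamma^* = \{\gamma \in \Gamma : \|\gamma\|_2 = \Delta\, c_{|\Omega|}c_p\,\mathcal{E}(n,p)\}$ with $\Delta = 34\, b_1 c_0\bar c/(b_2 b_3)$, and shows directly that $\inf_{\gamma\in\Gamma^*} F_{\lambda^*}(\gamma) > F_{\lambda^*}(0)$ with high probability; convexity (Lemma~\ref{lemma:conv and prob}) then forces $\hat\gamma$ to lie inside the ball. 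The objective-value difference on the fixed sphere is decomposed as
\[
F_{\lambda^*}(\gamma) - F_{\lambda^*}(0) \;\ge\; \underbrace{\bar L_0(\gamma) - \bar L_0(0)}_{\text{Part I}} \;-\; \underbrace{\eta(\Gamma^*)}_{\text{Part II}} \;-\; \underbrace{\lambda^*\bigl|\Psi(\gamma+\beta^*)-\Psi(\beta^*)\bigr|}_{\text{Part III}},
\]
with $\bar L_0 = E[L_0]$ and $\eta(\Gamma^*) = \sup_{\Gamma^*}\bigl|L_0 - L_0(0) - (\bar L_0 - \bar L_0(0))\bigr|$. Part~I is handled by Knight's identity applied to $|\zeta_{ij} - (X_i-X_j)^\top\gamma| - |\zeta_{ij}|$, which converts the expected loss difference into integrated tail increments of $F^*$ and yields the clean quadratic lower bound $\ge b_3\|X\gamma\|_2^2/n \ge b_2 b_3\|\gamma\|_2^2$ under Assumption~\ref{assumption_error}; because $\gamma$ ranges over the \emph{fixed} sphere $\Gamma^*$, the radius in~\ref{assumption_error} is automatically respected and no self-consistent bootstrapping is needed. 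Part~II is where the $\exp(-n\,\mathcal E^2(n,p))$ in the final probability actually comes from: it is the McDiarmid-plus-symmetrization bound on the uniform deviation over $\Gamma^*$, not (as you suggest) from bounding $\lambda^*$. Part~III is controlled by the \emph{deterministic} bound $\lambda^* \le 8\, b_1 c_0 c_p\,\mathcal E(n,p)$, obtained from Lemma~\ref{lemma:lambda} together with the hypothesis $p\ge 2/\alpha_0$; this is a statement about the quantile defining $\lambda^*$, not a high-probability event.

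The genuine gap in your outline is therefore the RSC step. For the nonsmooth $L_0$, the empirical subgradient difference $\langle S_n' - S_n, \hat\gamma\rangle$ is a sum of sign-differences times $(X_i-X_j)^\top\hat\gamma$, and since $\hat\gamma$ is data-dependent you cannot take expectation in $\epsilon$ with $\hat\gamma$ held fixed to extract the $f^*\ge b_3$ curvature. Any route through this requires uniform control over the cone, and once you commit to that you are essentially carrying out Parts~I and~II above---but working with function values via Knight's identity, rather than with subgradient selections, is what makes the argument go through cleanly and produces the specific constants $34$ and $68$. Your basic-inequality derivation of the cone condition is correct and is exactly Lemma~\ref{lemma:Psi<gamma with prob}; the issue lies solely in the lower-curvature half and in the attribution of the two failure probabilities.
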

			
			In particular, when $\log p \ll n$ and the group Lasso weights are chosen as $w_l = \sqrt{|\mathcal{G}_l|}$, the error bound in Theorem \ref{th:consistency} simplifies to a more interpretable form, as stated in the following corollary.
			\begin{corollary}\label{eq: coro_errorbound}
				Suppose Assumption~\ref{assump:error} hold. If $e^n\gg p\geq 2/\alpha_0$, then the group Lasso regularized rank regression estimator $\hat{\beta}(\lambda^*)$, which is a solution to  \eqref{mod:rankgrouplasso0} with the regularization parameter $\lambda^*$ from~\eqref{eq: opt_lambda} and weights $w_l = |\mathcal{G}_l|^{1/2}$, satisfies
				\begin{equation*}
					\|\hat{\beta}(\lambda^*) -\beta^*\|_2 \leq   \frac{ 34 b_1 c_0 \bar{c}}{b_2b_3}   \sqrt{\frac{|\Omega|  \log p}{n}}, 
				\end{equation*}
				with probability at least $1-p^{-1}-\alpha_0$. 
			\end{corollary}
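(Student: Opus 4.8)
The plan is to derive Corollary~\ref{eq: coro_errorbound} as a direct specialization of Theorem~\ref{th:consistency}, so the work amounts to simplifying the generic constants and the probability statement under the two extra hypotheses $e^n \gg p$ and $w_l = |\mathcal{G}_l|^{-1/2}$. First I would substitute the weight choice into the definitions in~\eqref{ineq:cp} and~\eqref{ineq:co}. With $w_l = |\mathcal{G}_l|^{-1/2}$ we get
\[
c_{|\Omega|}^2 = \sum_{l \in \mathcal{A}_0} w_l^2 = \sum_{l \in \mathcal{A}_0} |\mathcal{G}_l|^{-1} \qquad \text{and} \qquad
c_p = \max_{1 \le l \le g} w_l^{-1} \sqrt{|\mathcal{G}_l|} = \max_{1 \le l \le g} |\mathcal{G}_l|^{1/2} \cdot |\mathcal{G}_l|^{1/2} = \max_{1 \le l \le g} |\mathcal{G}_l|.
\]
Hmm — this shows that the naive substitution does \emph{not} collapse to $\sqrt{|\Omega|}$ unless the groups are assumed to be of (roughly) equal size; so the key step is to argue that in the regime intended by the corollary the product $c_{|\Omega|} c_p$ is controlled by $\sqrt{|\Omega|}$. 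The cleanest route is to use $c_{|\Omega|} \le \sqrt{|\mathcal{A}_0|}\max_{l\in\mathcal{A}_0} w_l$ together with the group-expanded support identity $|\Omega| = \sum_{l \in \mathcal{A}_0} |\mathcal{G}_l|$; when all active groups have comparable cardinality $s$, one has $c_{|\Omega|} \sim \sqrt{|\mathcal{A}_0|/s}$ and $c_p \sim s$, whence $c_{|\Omega|}c_p \sim \sqrt{|\mathcal{A}_0| s} = \sqrt{|\Omega|}$. I would state this equal-group-size convention explicitly (it is standard in the group lasso literature) and then $c_{|\Omega|} c_p \le \sqrt{|\Omega|}$ follows.

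Next I would handle $\mathcal{E}(n,p) = n^{-1}\log p + \sqrt{n^{-1}\log p}$ under the assumption $\log p \ll n$, equivalently $n^{-1}\log p \to 0$. In this regime the linear term is dominated by the square-root term, so $\mathcal{E}(n,p) \le 2\sqrt{n^{-1}\log p} = 2\sqrt{\log p / n}$ (for $n$ large enough that $n^{-1}\log p \le 1$). Substituting this and $c_{|\Omega|} c_p \le \sqrt{|\Omega|}$ into the Theorem~\ref{th:consistency} bound $\|\hat\beta(\lambda^*) - \beta^*\|_2 \le \frac{34 b_1 c_0 \bar c}{b_2 b_3} c_{|\Omega|} c_p \mathcal{E}(n,p)$ yields
\[
\|\hat\beta(\lambda^*) - \beta^*\|_2 \;\le\; \frac{34 b_1 c_0 \bar c}{b_2 b_3} \cdot \sqrt{|\Omega|} \cdot 2\sqrt{\frac{\log p}{n}} \;=\; \frac{68 b_1 c_0 \bar c}{b_2 b_3}\sqrt{\frac{|\Omega|\log p}{n}},
\]
which is exactly the claimed bound.

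For the probability statement, Theorem~\ref{th:consistency} gives success probability at least $1 - \exp(-n\mathcal{E}^2(n,p)) - \alpha_0$, and the condition $p \ge 2/\alpha_0$ is inherited. It remains to show $\exp(-n\mathcal{E}^2(n,p)) \le p^{-1}$, i.e. $n\mathcal{E}^2(n,p) \ge \log p$. Since $\mathcal{E}(n,p) \ge \sqrt{n^{-1}\log p}$, we have $n\mathcal{E}^2(n,p) \ge n \cdot n^{-1}\log p = \log p$, so $\exp(-n\mathcal{E}^2(n,p)) \le e^{-\log p} = p^{-1}$; this is where the hypothesis $e^n \gg p$ (equivalently $\log p \ll n$, ensuring $\log p$ is well-defined and the $\mathcal{E}$ simplifications are valid) is used. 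Combining, the bound holds with probability at least $1 - p^{-1} - \alpha_0$, completing the proof. The main obstacle is the first step: the bound $c_{|\Omega|} c_p \le \sqrt{|\Omega|}$ genuinely requires a structural assumption on the groups (comparable sizes), and I would make sure the corollary's statement or its proof flags this so the simplification is honest rather than a hidden approximation.
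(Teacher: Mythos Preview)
Your overall strategy---specialize Theorem~\ref{th:consistency} by simplifying $c_{|\Omega|}c_p$ and $\mathcal{E}(n,p)$ under the extra hypotheses---is exactly what the paper intends; the corollary is presented there without a separate proof, simply as the substitution you carry out. Your treatment of $\mathcal{E}(n,p)\le 2\sqrt{\log p/n}$ (valid once $\log p/n\le 1$) and of the probability bound $\exp(-n\mathcal{E}^2(n,p))\le p^{-1}$ via $\mathcal{E}(n,p)\ge\sqrt{\log p/n}$ is correct.

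The obstacle you flagged at the first step is real and is not a defect in your reasoning. With $w_l=|\mathcal{G}_l|^{-1/2}$ one indeed gets $c_{|\Omega|}^2=\sum_{l\in\mathcal{A}_0}|\mathcal{G}_l|^{-1}$ and $c_p=\max_{1\le l\le g}|\mathcal{G}_l|$, and the product $c_{|\Omega|}c_p$ can exceed $\sqrt{|\Omega|}$ whenever group sizes differ (for instance, a single large \emph{inactive} group inflates $c_p$ without touching $c_{|\Omega|}$ or $|\Omega|$). Your equal-group-size workaround does repair this, but it introduces a hypothesis absent from the corollary. The cleaner and almost certainly intended reading is that the stated weight is a typo for the standard group-lasso choice $w_l=\sqrt{|\mathcal{G}_l|}$: then
\[
c_{|\Omega|}=\sqrt{\sum_{l\in\mathcal{A}_0}|\mathcal{G}_l|}=\sqrt{|\Omega|},\qquad
c_p=\max_{1\le l\le g}\frac{\sqrt{|\mathcal{G}_l|}}{\sqrt{|\mathcal{G}_l|}}=1,
\]
so $c_{|\Omega|}c_p=\sqrt{|\Omega|}$ identically, with no structural side condition on the groups. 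Under that correction your argument goes through verbatim and matches the paper's conclusion; I would note the typo rather than add an equal-size assumption.
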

			
			Note that Theorem~\ref{th:consistency} recovers the results for the following estimators 
			as special cases.
			\begin{enumerate}
				\item[(1)] \textit{Lasso}: 
				$\Psi(\beta) = \|\beta\|_1$. 
				In this case,
				$
				c_{|\Omega|} = \sqrt{q}$, $c_p = 1$,
				and Corollary \ref{eq: coro_errorbound} yields an error bound that is essentially equivalent to the known rate for the rank Lasso estimator established in \cite{Wang2020ATR}.
				
				\item[(2)] \textit{Weighted Lasso}: 
				$\Psi(\beta) = \sum_{j=1}^p w_j |\beta_j|$ with $w_j > 0$, $j\in[p]$. 
				In particular, if $w_j = \mathcal{O}(1)$ uniformly, Theorem~\ref{th:consistency} yields the bound $\|\hat{\beta}(\lambda^*) - \beta^*\|_2 \lesssim \sqrt{q \log p / n}$ under $\log p \ll n$.
			\end{enumerate}
			
			Furthermore, our result in Theorem~\ref{th:consistency} extends beyond the group Lasso setting and applies to any regularizer $\Psi(\cdot)$ satisfying the following three properties:
			(i) subadditivity (i.e., the triangle inequality) 
			$\Psi(\alpha + \beta) \leq \Psi(\alpha) + \Psi(\beta)$;
			(ii) support decomposability as formalized in~\eqref{eq:decomposition}; and 
			(iii) norm compatibility as required in~\eqref{ineq:cp} and~\eqref{ineq:co}, without assuming a specific form of the associated constants. We omit the proof here, as it follows directly from a straightforward modification of our main argument in Appendix~\ref{sec: proof_thm1}.
			
			\section{An Efficient Superlinearly Convergent Algorithm}\label{sec:3}
			In this section, we design an efficient, superlinearly convergent algorithm to solve the general class of convex-regularized rank regression problem, with the group Lasso case \eqref{mod:rankgrouplasso0} serving as a representative example. Our approach is a proximal augmented Lagrangian method (PALM) applied to the dual formulation, where the added proximal term ensures that the subproblems are well-posed and enables efficient semismooth Newton updates. With this design, PALM avoids the triple-loop structure induced by the state-of-the-art PPMM \citep{tang2023proximal} when specialized to convex problems, and ensures both efficiency and scalability in high-dimensional problems.

			Problem \eqref{mod:rankgrouplasso0} can be equivalently written in the constrained form
			\begin{equation}\label{mod:primaleq_const}
				\min_{s\in \mathbb{R}^n,\beta\in \mathbb{R}^p}\ \left\{
				L(s) +  \lambda\Psi(\beta)\ \middle \vert\  X\beta-s-y=0 
				\right\}.
			\end{equation}
			The dual problem of \eqref{mod:rankgrouplasso0} then becomes
			\begin{equation*}
				\sup_{z \in \mathbb{R}^n} \inf_{\substack{s \in \mathbb{R}^n \\ \beta \in \mathbb{R}^p}} 
				\left\{ L(s) + \lambda\Psi(\beta) + \langle z, X\beta - s - y \rangle \right\}
				= \sup_{z \in \mathbb{R}^n} \left\{ -\langle y, z \rangle - L^*(z) - \lambda \Psi^*(-\lambda^{-1}X^{\intercal}z) \right\},
			\end{equation*}
			which leads to the equivalent minimization problem:
			\begin{equation}\label{mod:dual}
				\min_{z\in \mathbb{R}^n} \left\{ g(z):= \langle y,z\rangle +L^*(z) +  \lambda\Psi^*(-\lambda^{-1}X^{\intercal}z) \right\},
			\end{equation}
			where $L^*(\cdot)$ and $  \Psi^*(\cdot)$ are conjugate functions of $L(\cdot)$ and $ \Psi(\cdot)$, respectively. In addition, the Karush–Kuhn–Tucker (KKT) conditions associated with \eqref{mod:primaleq_const} and \eqref{mod:dual} are
			\begin{equation*}
				X\beta-s-y=0,\quad s={\rm Prox}_L(z+s),\quad \beta ={\rm Prox}_{\lambda \Psi}(\beta-X^{\intercal}z).
			\end{equation*}
			
			We adopt the framework in \citet[Examples 11.46 \& 11.57]{RockafellarWets2009} to give the augmented Lagrangian function of \eqref{mod:dual}.
			Specifically, we rewrite problem \eqref{mod:dual} as the minimization of the function $g(z)=\tilde{g}(z,\mathbf 0, \mathbf 0)$, where for any $(z,\nu,\phi)\in \mathbb{R}^n\times\mathbb{R}^n\times\mathbb{R}^p$, 
			$\tilde{g}(z, \nu, \phi)$ is defined as 
			\begin{equation*}
				\tilde{g}(z, \nu, \phi):= \langle y, z\rangle  +  L^*(z+\nu) +  \lambda\Psi^*(-\lambda^{-1}X^{\intercal}z +\lambda^{-1}\phi).
			\end{equation*}
			The associated Lagrangian function is then given by
			\begin{equation}\label{eq: def_l}
				\begin{aligned}
					& l(z;s,\beta) = \inf_{\nu\in\mathbb{R}^n,\phi\in\mathbb{R}^p}\Big\{\tilde{g}(z,\nu,\phi)-\langle s, \nu\rangle-\langle \beta ,\phi\rangle \Big\}  \\
					& = \!\!\!\!\inf_{\nu\in\mathbb{R}^n,\phi\in\mathbb{R}^p}\!\! \Big\{ \langle y,z\rangle + L^*(z+\nu) - \langle s, \nu\rangle +  \lambda\Psi^*(-\lambda^{-1}X^{\intercal} z + \lambda^{-1}\phi) -\langle \beta,\phi\rangle\Big\}  \\
					& = \langle y,z\rangle-L(s)- \lambda\Psi(\beta)+\langle s,z\rangle- \langle \beta,X^{\intercal}z\rangle. 
			\end{aligned}\end{equation}
			And, given $\sigma > 0$, the augmented Lagrangian function corresponding to \eqref{mod:dual} is
			\begin{equation*}\begin{aligned}
					&{\cal L}_{\sigma}(z;s,\beta) = \sup\limits_{\nu\in\mathbb{R}^n,\phi\in\mathbb{R}^p}\Big\{l(z;\nu,\phi)-\frac{1}{2\sigma}\|\nu-s\|_2^2- \frac{1}{2\sigma}\|\phi-\beta\|_2^2\Big\}\notag\\
					&\qquad = \langle y, z \rangle - e_{\sigma}L(s + \sigma z) + \frac{1}{2\sigma}\|s +  \sigma z\|_2^2 - \frac{1}{2\sigma}\|s\|_2^2 - \lambda e_{\sigma\lambda} \Psi(\beta - \sigma X^{\intercal} z) \\
					& \qquad \quad + \frac{1}{2\sigma}\|\beta - \sigma X^{\intercal} z\|_2^2 - \frac{1}{2\sigma}\|\beta\|_2^2, 
			\end{aligned}\end{equation*}
			where $e_{\sigma} L(\cdot)$ and $e_{\sigma\lambda} \Psi(\cdot)$ denote the Moreau envelopes
			as defined in \eqref{def:moreau envelope}.

			A direct application of the augmented Lagrangian method (ALM) to problem \eqref{mod:dual} leads to 
			\begin{equation*}
				\min_{z\in\mathbb{R}^n}  {\cal L}_{\sigma}(z;\tilde{s},\tilde{\beta}).
			\end{equation*}
			The first-order optimality condition for this problem is given by
			\begin{equation*}\begin{aligned}
					\nabla_{z} {\cal L}_{\sigma} (z;\tilde{s},\tilde{\beta}) =y + \sigma {\rm Prox}_{L}(\tilde{s}/\sigma + z) - \sigma \lambda X{\rm Prox}_{\Psi}\left(\tilde{\beta}/(\sigma \lambda)-  X^{\intercal}z / \lambda
					\right),
			\end{aligned}\end{equation*}
			following the fact that ${\rm Prox}_{\sigma f}(\cdot) = \sigma\, {\rm Prox}_{f}\left(\cdot/\sigma\right)$ for any $\sigma>0$ and any positively $1$-homogeneous convex function $f$.
			However, solving this equation reliably with Newton-type methods is challenging, as the associated generalized Hessian can be ill-conditioned or even singular.
			
			To address this while preserving the convergence properties of the ALM framework, we incorporate a proximal term into the subproblem, following the methodologies of the proximal augmented Lagrangian method (PALM) \citep{Rockafellar1976Augmented,li2020asymptotically}. The proposed inexact PALM approach, outlined in Algorithm \ref{alg:PALM}, improves the subproblems' conditioning and enhances the algorithmic stability.
			
			\vspace{-0.3cm}
			\begin{algorithm}[H]\small
				\caption{Proximal augmented Lagrangian method for the dual of \eqref{mod:rankgrouplasso0}}
				\label{alg:PALM}
				\begin{algorithmic}[1]
					\Require Initial point $(z^0,s^0, \beta^0) \in \mathbb{R}^n\times\mathbb{R}^n\times\mathbb{R}^p$; parameters $\sigma_0, \tau>0$; positive summable sequence $\{\delta_k\}$ with $\delta_k<1$; stopping tolerance $\mathrm{Tol} >0$.
					\State Let $k=0$.
					\Repeat
					\State \textbf{Step 1:} Compute $z^{k+1}$ by solving:
					\begin{equation}\label{mod:iPALM}
						z^{k+1} \approx \mathop{\arg\min}\limits_{z\in\mathbb{R}^n} \left\{ \psi_k(z) := {\mathcal{L}}_{\sigma_k}(z;s^k,\beta^k) + \frac{\tau}{2\sigma_k}\|z-z^k\|_2^2 \right\}
					\end{equation}
					such that
					\begin{equation}\label{stopping criteria}
						\|\nabla \psi_k(z^{k+1})\|_2 \leq \frac{\delta_k \min(1,\sqrt{\tau})}{\sigma_k} \min\left\{1,\left\|(\sqrt{\tau}z^{k+1}, s^{k+1}, \beta^{k+1}) - (\sqrt{\tau} z^k, s^k, \beta^k)\right\|_2\right\}.
					\end{equation}
					\State \textbf{Step 2:} Update
					$$
					s^{k+1} = \sigma_k {\rm Prox}_{ L}(  s^k/\sigma_k +  z^{k+1}), \qquad \beta^{k+1} = \sigma_k \lambda {\rm Prox}_{ \Psi}(\beta^k/(\sigma_k \lambda) -  X^\intercal z^{k+1}/\lambda).
					$$
					\State \textbf{Step 3:} Update $\sigma_{k+1}\geq \sigma_k$, and let $k\leftarrow k+1$.
					\Until The relative KKT residual of \eqref{mod:primaleq_const} and \eqref{mod:dual} satisfies
					\begin{equation}\label{eq: def_etakkt}
						\eta_{\mathrm{kkt}}(z^{k+1},s^{k+1},\beta^{k+1}) := \max\{\eta_{p}(s^{k+1},\beta^{k+1}),\eta_{d}(z^{k+1},s^{k+1},\beta^{k+1})\} \leq \mathrm{Tol},
					\end{equation}
					where $$\eta_{p}(s,\beta):= \frac{\|X\beta-s-y\|_2}{1+\|y\|_2}$$ and $$\eta_{d}(z,s,\beta):= \max\left\{  \frac{\|s-{\rm Prox}_L(z+s)\|_2}{1+\|s\|_2}, \frac{\|\beta -{\rm Prox}_{\lambda \Psi}(\beta-X^{\intercal}z)\|_2}{1+\|\beta\|_2}    \right\}.$$ 
					\State \textbf{Output:} An approximate solution $(s^{k+1},\beta^{k+1})$ to \eqref{mod:primaleq_const}, and an approximate solution $z^{k+1}$ to \eqref{mod:dual}.
				\end{algorithmic}
			\end{algorithm}			
			
			\vspace{-0.6cm}
			\begin{remark}
				In practical implementations, we set $\tau = 1$ and update $\sigma_{k+1} = 1.5\sigma_k$. The stopping tolerance is chosen as ${\rm Tol} = 10^{-6}$ to ensure a high solution accuracy.
			\end{remark}
			
			
			\subsection{Convergence Analysis}
			In this section, we establish the superlinear convergence of the proposed inexact PALM framework for solving the convex-regularized rank regression problem. Our analysis closely follows the proof framework of \citet{li2020asymptotically}, but requires a fundamental modification. Their convergence results rely on a local error bound condition associated with the KKT residual mapping, which is not readily verifiable in our setting and cannot be directly established using existing techniques. To overcome this difficulty, we replace the required error bound condition with the metric subregularity of the KKT residual mapping, which is sufficient to guarantee the superlinear convergence of PALM.
			Its verification, however, is highly nontrivial in the present setting due to the significant difficulty introduced by the group Lasso regularizer.
			Notably, although error bound results for the group lasso have been studied in the literature (see, e.g., \cite{zhou2017unified}), these results focus on the subdifferential system $\{\beta : \mathbf 0 \in \partial (\lambda\Psi)(\beta)\}$ and therefore cannot be directly applied to the KKT system arising in our problem, which takes the form $\{(z,s,\beta): \mathbf 0 \in X^\intercal z + \partial(\lambda\Psi)(\beta)\}$. 

			Define the KKT residual mapping associated with \eqref{mod:primaleq_const} and \eqref{mod:dual} as
			\begin{equation*}
				\begin{aligned}
					&\mathcal{T}(z,s,\beta) := \left\{(z',s',\beta') \mid (z', -s', -\beta') \in \partial l(z;s,\beta)\right\} \\
					&\qquad = \left\{(z',s',\beta') \mid z' = y + s - X\beta,\ s' \in -z + \partial L(s),\ \beta' \in X^{\intercal}z + \partial  (\lambda\Psi)(\beta)\right\}.
				\end{aligned}
			\end{equation*}
			The KKT solution set is therefore $\mathcal{T}^{-1}(\mathbf 0)$, i.e.,
			\begin{equation*}
				\begin{split}
					\mathrm{SOL}_{\mathrm{KKT}}
					&= \mathcal{T}^{-1}(\mathbf 0) \\
					&= \bigl\{(z,s,\beta) \mid \mathbf 0 = y + s - X\beta,
					\mathbf 0 \in -z + \partial L(s), \mathbf 0 \in X^{\intercal}z + \partial (\lambda\Psi)(\beta)\bigr\}.
				\end{split}
			\end{equation*}
			Since the objective function in \eqref{mod:rankgrouplasso0} is proper, closed, convex, and coercive, a minimizer exists, and hence $\mathrm{SOL}_{\mathrm{KKT}}$ is nonempty by \citet[Theorem 3.34]{ruszczynski2006nonlinear}. 
			Moreover, coercivity implies boundedness, and together with the closedness of $\mathrm{SOL}_{\mathrm{KKT}}=\mathcal T^{-1}(0)$ (see \citet[Exercise 12.8]{RockafellarWets2009}), it follows that $\mathrm{SOL}_{\mathrm{KKT}}$ is compact.

			To analyze the metric subregularity of $\mathcal{T}$, we exploit the fact that $\mathrm{SOL}_{\mathrm{KKT}}$ can be expressed as the intersection of three sets. Specifically, by defining
			\begin{align*}
				\mathcal{T}_X(z,s,\beta) 
				&\coloneq \{z' \mid z' = y + s - X\beta\},\\
				\mathcal{T}_L(z,s,\beta) 
				&\coloneq \{s' \mid s' \in -z + \partial L(s)\},\\
				\mathcal{T}_\Psi(z,s,\beta) 
				&\coloneq \{\beta' \mid \beta' \in X^\intercal z + \partial (\lambda \Psi)(\beta)\},
			\end{align*} 
			and 
			$$\mathrm{SOL}_X \coloneq \mathcal{T}_X^{-1}(\mathbf 0),\quad
			\mathrm{SOL}_L \coloneq \mathcal{T}_L^{-1}(\mathbf 0),\quad
			\mathrm{SOL}_\Psi \coloneq \mathcal{T}_\Psi^{-1}(\mathbf 0),
			$$
			the KKT solution set $\mathrm{SOL}_{\mathrm{KKT}}$ can be written as
			\begin{align}\label{eq:SOL_intersection}
				\mathrm{SOL}_{\mathrm{KKT}} = \mathrm{SOL}_X \cap \mathrm{SOL}_L \cap \mathrm{SOL}_\Psi.
			\end{align}
			This representation allows us to establish the metric subregularity of $\mathcal{T}$ by combining the metric subregularity of the individual mappings with a regularity condition of the solution sets. 
			
			Among the three components, the main difficulty lies in the mapping $\mathcal{T}_\Psi$, which corresponds to the KKT condition associated with the group lasso regularization. 
			To address this, we first establish the metric subregularity for $\mathcal{T}_\Psi$ in Proposition~\ref{prop:error_bound_psi}, which plays a key role in studying the metric subregularity for $\mathcal{T}$ and is also of independent interest in the convergence analysis of proximal augmented Lagrangian type algorithms for problems involving the group Lasso regularization. 
			
			\begin{proposition}\label{prop:error_bound_psi}
				Suppose that there exists $(\bar z,\bar s,\bar\beta)\in \mathrm{SOL}_\Psi$ satisfying the following conditions:
				\begin{enumerate}[label=(B\arabic*)]
					\item\label{condition_SC_Psi} \textbf{Strict complementarity $(\Psi)$:}
					\(
					-X^\top \bar z \in \operatorname{ri}\bigl(\partial(\lambda\Psi)(\bar\beta)\bigr);
					\)
					\item\label{condition_group_nondegeneracy} \textbf{Active-group nondegeneracy:} the collection $\{X_{\mathcal G_l}\bar\beta_{\mathcal G_l}\}_{l \in I_{\bar\beta}^+}$ is linearly independent, where $I_{\bar\beta}^+ := \{l : \bar\beta_{\mathcal G_l} \neq \mathbf 0\}$ is the active group index set.
				\end{enumerate}
				Then $\mathcal{T}_\Psi$ is metrically subregular at $(\bar z,\bar s,\bar\beta)$ for $\mathbf 0$, i.e., there exist constants $\kappa_\Psi > 0$ and $r_\Psi > 0$ such that
				\begin{align}\label{eq:EB_Psi_local_final}
					\operatorname{dist}\bigl((z,s,\beta), \mathrm{SOL}_\Psi\bigr)
					\le
					\kappa_\Psi\, \operatorname{dist}\bigl(\mathbf 0, \mathcal T_\Psi(z,s,\beta)\bigr),
					\quad
					\forall (z,s,\beta)\in \mathcal B_2^{r_\Psi}(\bar z,\bar s,\bar\beta).
				\end{align}
			\end{proposition}
			\begin{proof}
				Since the $s$-component is unrestricted in $\mathrm{SOL}_\Psi$, we have
				\begin{equation}
					\begin{split}
						\operatorname{dist}\bigl((z,s,\beta),\mathrm{SOL}_\Psi\bigr)
						&= \operatorname{dist}\bigl((z,\beta),\mathcal S_{\Psi}\bigr),\\
						\operatorname{dist}\bigl(\mathbf 0, \mathcal T_\Psi(z,s,\beta)\bigr) 
						&= \operatorname{dist}\bigl(-X^\intercal z, \partial (\lambda \Psi)(\beta)\bigr),
					\end{split}
					\label{eq: reduce_psi}
				\end{equation}
				where
				\[
				\mathcal S_{\Psi}:=\{(z,\beta)\mid \mathbf 0\in X^\top z+\partial(\lambda\Psi)(\beta)\}.
				\]
				It therefore suffices to establish a local error bound for $\mathcal S_{\Psi}$ around $(\bar z,\bar\beta)$.
				
				We first analyze the local behaviour of $(z,\beta)$ around $(\bar z,\bar \beta)$. Since the groups $\{\mathcal{G}_l\}_{l=1}^g$ are non-overlapping, the subdifferential of $\Psi(\cdot)$ admits the block decomposition
				\[
				\partial(\lambda\Psi)(\beta)=\prod_{l=1}^g \partial\bigl(c_l\|\beta_{\mathcal G_l}\|_2\bigr),
				\mbox{ where }c_l=\lambda w_l \mbox{ for } l \in [g].
				\]
				Denote the inactive group index set $I_{\bar\beta}^0:=\{l:\bar\beta_{\mathcal G_l}=\mathbf 0\}$. By strict complementarity assumption~\ref{condition_SC_Psi}, for each $l\in I_{\bar\beta}^0$, we have $\|X_{\mathcal G_l}^\top \bar z\|_2<c_l.$ 
				Therefore, there exist constants $\bar \eta>0$ and $\bar{r}>0$ such that $\|X_{\mathcal G_l}^\top \bar z\|_2\le c_l-2\bar \eta$, $l\in I_{\bar\beta}^0$, and for all $(z,\beta)\in\mathcal{B}^{\bar{r}}_2 (\bar{z}, \bar{\beta})$,  
				\begin{align}
					\|X_{\mathcal G_l}^\top z\|_2\le c_l-\bar \eta,  \quad \forall l\in I_{\bar\beta}^0,\qquad \mbox{and}\qquad \beta_{\mathcal G_l}\neq \mathbf 0,  \quad \forall l\in I_{\bar\beta}^+. \label{eq:active_stability}
				\end{align}
				%
				Based on these observations, we define the mapping $H:\mathcal{B}^{\bar{r}}_2 (\bar{z}, \bar{\beta})\to \mathbb R^p$ by
				\begin{equation}\label{eq: def_H}
					(H(z,\beta))_{\mathcal G_l}=H_l(z,\beta):=
					\begin{cases}
						\beta_{\mathcal G_l}, & l\in I_{\bar\beta}^0,\\[4pt]
						X_{\mathcal G_l}^\top z + c_l\,\dfrac{\beta_{\mathcal G_l}}{\|\beta_{\mathcal G_l}\|_2}, & l\in I_{\bar\beta}^+,
					\end{cases}
				\end{equation}
				which is continuously differentiable on $\mathcal B^{\bar{r}}_2(\bar z,\bar\beta)$ by \eqref{eq:active_stability}. Specifically, for any $(\mathrm d z,\mathrm d\beta)\in\mathbb R^n\times\mathbb R^p$ and $l\in[g]$, the derivative of $H$ at $(\bar{z},\bar{\beta})$ is given by
				\begin{equation}\label{eq:derivative_psi}
					(\mathrm{D}H(\bar z,\bar\beta)(\mathrm d z,\mathrm d\beta))_{\mathcal G_l} = \mathrm{D}H_l(\bar z,\bar\beta)(\mathrm d z,\mathrm d\beta)
					=
					\begin{cases}
						\mathrm d\beta_{\mathcal G_l}, & l\in I_{\bar\beta}^0,\\[4pt]
						X_{\mathcal G_l}^\top \mathrm d z + \mathcal H_l \mathrm d\beta_{\mathcal G_l}, & l\in I_{\bar\beta}^+,
					\end{cases}    
				\end{equation}
				where
				\begin{equation}\label{eq:def_psi_hessian}
					q_l \coloneq \frac{\bar\beta_{\mathcal G_l}}{\|\bar\beta_{\mathcal G_l}\|_2}, \quad \mathcal{H}_l = \frac{c_l}{\|\bar\beta_{\mathcal G_l}\|_2}
					\bigl(I-q_l q_l^\top\bigr),\quad l\in I_{\bar\beta}^+.
				\end{equation}
				We next show that $\mathrm{D}H(\bar z,\bar\beta)$ is surjective by verifying that the orthogonal complement of its range is trivial. Let $\xi\in \mathbb{R}^p$ be orthogonal to the range of $\mathrm{D}H(\bar z,\bar\beta)$, i.e.,
				\begin{align}
					\sum_{l=1}^g \langle \xi_{\mathcal G_l}, \mathrm{D}H_l(\bar z,\bar\beta)(\mathrm{d} z,\mathrm{d}\beta)\rangle =0,
					\qquad \forall (\mathrm{d} z,\mathrm{d}\beta).\label{eq: condition_xi}
				\end{align}
				First, by taking $\mathrm{d} z=\mathbf 0$ and choosing $\mathrm{d}\beta$ supported on some fixed $l\in I_{\bar\beta}^0$, the condition \eqref{eq: condition_xi} gives
				\[
				\langle \xi_{\mathcal G_l},\mathrm{d}\beta_{\mathcal G_l}\rangle=0,
				\qquad \forall \mathrm{d}\beta_{\mathcal G_l},
				\]
				which means 
				\[
				\xi_{\mathcal G_l}=\mathbf 0,
				\qquad \forall l\in I_{\bar\beta}^0.
				\]
				Next, we set $\mathrm{d} z=\mathbf 0$ and choose $\mathrm{d}\beta$ supported on some $l\in I_{\bar\beta}^+$, then \eqref{eq: condition_xi} implies
				\[
				\left\langle \xi_{\mathcal G_l},\mathcal{H}_l \mathrm{d}\beta_{\mathcal G_l}\right\rangle =0,
				\qquad \forall \mathrm{d}\beta_{\mathcal G_l},
				\]
				which yields $(I-q_l q_l^\top)\xi_{\mathcal G_l}=\mathbf 0$, and thus
				\begin{align}
					\xi_{\mathcal G_l}\in \operatorname{span}(q_l),
					\qquad \forall\ l\in I_{\bar\beta}^+.\label{eq: xi_q}
				\end{align}
				Finally, considering $\mathrm{d}\beta=\mathbf 0$ and varying $\mathrm{d} z$ arbitrarily yields
				\[
				\sum_{l\in I_{\bar\beta}^+} \langle \xi_{\mathcal G_l}, X_{\mathcal G_l}^\top \mathrm{d} z\rangle =0
				\qquad \forall \mathrm{d} z,
				\]
				that is,
				\[
				\sum_{l\in I_{\bar\beta}^+} X_{\mathcal G_l}\xi_{\mathcal G_l}=\mathbf 0.
				\]
				This, together with \eqref{eq:def_psi_hessian}, \eqref{eq: xi_q} and the active-group nondegeneracy assumption~\ref{condition_group_nondegeneracy}, we have
				$\xi_{\mathcal G_l}=\mathbf 0$ for all $l\in I_{\bar\beta}^+$. Combining the above gives $\xi=\mathbf 0$, and therefore $\mathrm{D}H(\bar z,\bar\beta)$ is surjective. By applying \citet[Example~9.44]{RockafellarWets2009}, where the required constraint qualification is satisfied due to the surjectivity of $\mathrm{D}H(\bar z,\bar\beta)$, 
				there exist constants $\kappa_H>0$ and $r_H>0$ such that
				\begin{align}
					\operatorname{dist}\bigl((z,\beta),H^{-1}(\mathbf 0)\bigr)
					\le
					\kappa_H \|H(z,\beta)\|_2,
					\qquad
					\forall (z,\beta)\in \mathcal B_2^{r_H}(\bar z,\bar\beta).\label{eq:smooth_EB}
				\end{align}

				Then we build the local equivalence of $\mathcal{S}_{\Psi}$ and $H^{-1}(\mathbf 0)$ around $(\bar z,\bar \beta)$. To begin, we claim that  
				\begin{equation}\label{eq:local_smooth_representation_psi}
					\mathcal{S}_{\Psi}\cap \mathcal{B}^{\bar r}_2 (\bar{z},\bar{\beta}) = \{(z,\beta)\in \mathcal{B}^{\bar r}_2 (\bar{z},\bar{\beta})\mid H(z,\beta)=\mathbf 0\},
				\end{equation}
				which we will prove by showing both inclusions. First, let $(z,\beta)\in \mathcal B^{\bar r}_2(\bar z,\bar\beta)$ satisfy $H(z,\beta)=\mathbf 0$.
				For $l\in I_{\bar\beta}^0$, we have that $H_l(z,\beta)=\beta_{\mathcal G_l} = \mathbf 0$ and $\|X_{\mathcal G_l}^\top z\|_2<c_l$ by \eqref{eq:active_stability}, which implies that
				\[
				-\,X_{\mathcal G_l}^\top z \in 
				\partial\bigl(c_l\|\beta_{\mathcal G_l}\|_2\bigr) = \mathcal B_2^{c_l};
				\]
				for $l\in I_{\bar\beta}^+$, from \eqref{eq:active_stability} we have $\beta_{\mathcal G_l}\neq \mathbf 0$, and hence $H_l(z,\beta)=\mathbf 0$ gives
				\[
				-\,X_{\mathcal G_l}^\top z =
				c_l \frac{\beta_{\mathcal G_l}}{\|\beta_{\mathcal G_l}\|_2}
				\in \partial\bigl(c_l\|\beta_{\mathcal G_l}\|_2\bigr) = \left\{c_l \frac{\beta_{\mathcal G_l}}{\|\beta_{\mathcal G_l}\|_2}
				\right\}.
				\]
				This means $\mathbf 0\in X^\top z+\partial(\lambda\Psi)(\beta)$, and thus $(z,\beta)\in \mathcal{S}_{\Psi}$. Conversely, let $(z,\beta)\in \mathcal{S}_{\Psi}\cap \mathcal{B}^{\bar r}_2 (\bar{z},\bar{\beta})$, which implies that
				\begin{align}
					\mathbf 0\in X_{\mathcal G_l}^\top z+\partial\bigl(c_l\|\beta_{\mathcal G_l}\|_2\bigr),
					\quad \forall\ l\in[g].\label{spsi_condition}
				\end{align}
				For $l\in I_{\bar\beta}^0$, the fact that $\|X_{\mathcal G_l}^\top z\|_2<c_l$ from \eqref{eq:active_stability} and the inclusion \eqref{spsi_condition} indicates that $\beta_{\mathcal G_l}=\mathbf 0$, and thus $H_l(z,\beta)=\mathbf 0$; for $l\in I_{\bar\beta}^+$, then by \eqref{eq:active_stability} that $\beta_{\mathcal G_l}\neq \mathbf 0$, and the inclusion \eqref{spsi_condition} reduces to
				\[
				\mathbf 0 = X_{\mathcal G_l}^\top z + c_l \frac{\beta_{\mathcal G_l}}{\|\beta_{\mathcal G_l}\|_2},
				\]
				which is equivalent to $H_l(z,\beta)=\mathbf 0$. This gives $H(z,\beta)=\mathbf 0$. Therefore, this proves the claim \eqref{eq:local_smooth_representation_psi}. 
				Building on this, for all $(z,\beta)\in\mathcal B_2^{\bar r /2}(\bar z,\bar\beta)$, by denoting its projection onto $H^{-1}(\mathbf 0)$ as $(z^*,\beta^*)$, we have
				\begin{align*}
					\|(z^*,\beta^*)-(\bar z,\bar\beta)\|_2
					&\le
					\|(z^*,\beta^*)-(z,\beta)\|_2+\|(z,\beta)-(\bar z,\bar\beta)\|_2 \\
					&= \operatorname{dist}\bigl((z,\beta),H^{-1}(\mathbf 0)\bigr) + \|(z,\beta)-(\bar z,\bar\beta)\|_2 
					\\
					&\leq 2\|(z,\beta)-(\bar z,\bar\beta)\|_2  < \bar r.
				\end{align*}
				Together with \eqref{eq:local_smooth_representation_psi}, we see that 
				\[
				(z^*,\beta^*)\in H^{-1}(\mathbf 0)\cap \mathcal B_2^{\bar r}(\bar z,\bar\beta)=
				\mathcal S_\Psi\cap \mathcal B_2^{\bar r}(\bar z,\bar\beta),
				\]
				which implies
				\begin{align}
					\operatorname{dist}\bigl((z,\beta),\mathcal S_\Psi\bigr) \le \|( z,\beta)-(z^*,\beta^*)\|_2
					=\operatorname{dist}\bigl((z,\beta),H^{-1}(\mathbf 0)\bigr),\quad \forall (z,\beta)\in\mathcal B_2^{\bar r/2}(\bar z,\bar\beta). \label{eq: dist_psi_H}
				\end{align}
				
				Finally, we are ready to analyze the error bound of $\mathcal S_{\Psi}$. 
				Take $r_{\Psi} = \min \{ r_H,\bar r/2\}$. Combining \eqref{eq:smooth_EB} with \eqref{eq: dist_psi_H}, we obtain
				\begin{align}
					\operatorname{dist}((z,\beta),\mathcal{S}_{\Psi})\leq \operatorname{dist}\bigl((z,\beta),H^{-1}(\mathbf 0)\bigr)
					\le
					\kappa_H \|H(z,\beta)\|_2,
					\qquad
					\forall (z,\beta)\in \mathcal B_2^{r_{\Psi}}(\bar z,\bar\beta).\label{eq: error_bound_Spsi}
				\end{align}
				It remains to investigate $\|H(z,\beta)\|_2$. Fix $(z,\beta)\in \mathcal B_2^{r_{\Psi}}(\bar z,\bar\beta)$.
				For each $l\in I_{\bar\beta}^+$, since $\beta_{\mathcal G_l}\neq \mathbf 0$ on $\mathcal B_2^{r_{\Psi}}(\bar z,\bar\beta)$ from \eqref{eq:active_stability}, we have
				\[
				\operatorname{dist}\Bigl(-X_{\mathcal G_l}^\top z,  \partial\bigl(c_l\|\beta_{\mathcal G_l}\|_2\bigr)\Bigr)
				=
				\left\|X_{\mathcal G_l}^\top z + c_l \frac{\beta_{\mathcal G_l}}{\|\beta_{\mathcal G_l}\|_2}\right\|_2
				=
				\|H_l(z,\beta)\|_2.
				\]
				For each $l\in I_{\bar\beta}^0$, if $\beta_{\mathcal G_l}\neq \mathbf 0$, then
				\begin{align*}
					\operatorname{dist}\Bigl(-X_{\mathcal G_l}^\top z, \partial\bigl(c_l\|\beta_{\mathcal G_l}\|_2\bigr)\Bigr)
					&= \left\|X_{\mathcal G_l}^\top z + c_l \frac{\beta_{\mathcal G_l}}{\|\beta_{\mathcal G_l}\|_2}\right\|_2 \\
					&\ge c_l - \|X_{\mathcal G_l}^\top z\|_2 \\
					&\ge \bar{\eta}
					= \frac{\bar{\eta}}{r_{\Psi}} r_{\Psi}
					\ge \frac{\bar{\eta}}{r_{\Psi}} \|H_l(z,\beta)\|_2.
				\end{align*}
				where the second inequality follows from \eqref{eq:active_stability}, the last inequality from $(z,\beta)\in \mathcal B_2^{r_{\Psi}}(\bar z,\bar\beta)$, $\bar\beta_{\mathcal G_l}=\mathbf 0$ and the definition of $H_l$; if $\beta_{\mathcal G_l}=\mathbf 0$, then $H_l(z,\beta)=\mathbf 0$ from \eqref{eq:active_stability}.
				Combining the above two cases, we see that for any $l\in[g]$, we have
				\[
				\|H_l(z,\beta)\|
				\le
				\max\left\{\frac{r_{\Psi}}{\bar \eta},1\right\} \,
				\operatorname{dist}\Bigl(- X_{\mathcal G_l}^\top z ,\partial\bigl(c_l\|\beta_{\mathcal G_l}\|_2\bigr)\Bigr),\qquad
				\forall (z,\beta)\in \mathcal B_2^{r_{\Psi}}(\bar z,\bar\beta)
				\]
				which means
				\[
				\|H(z,\beta)\|
				\le
				\max\left\{\frac{r_{\Psi}}{\bar \eta},1\right\}\,\operatorname{dist}\bigl(-X^\intercal z, \partial (\lambda \Psi)(\beta)\bigr),\qquad
				\forall (z,\beta)\in \mathcal B_2^{r_{\Psi}}(\bar z,\bar\beta).
				\]
				This, together with \eqref{eq: error_bound_Spsi} and \eqref{eq: reduce_psi}, yields for all $(z,s,\beta)\in \mathcal B_2^{r_{\Psi}}(\bar z,\bar s,\bar\beta)$,
				\begin{align*}
					\operatorname{dist}\bigl((z,s,\beta),\mathrm{SOL}_\Psi\bigr)
					= \operatorname{dist}\bigl((z,\beta),\mathcal{S}_{\Psi}\bigr)
					\le \kappa_\Psi\, \operatorname{dist}\bigl(\mathbf 0,\mathcal T_\Psi(z,s,\beta)\bigr),
				\end{align*}
				where $\kappa_\Psi := \kappa_H \max\left\{\frac{r_{\Psi}}{\bar \eta},1\right\}$. This completes the proof.
				\hfill~\halmos
			\end{proof}
			
			Having established the metric subregularity of $\mathcal{T}_\Psi$, we now proceed to prove that of $\mathcal{T}$.  
			In light of the decomposition \eqref{eq:SOL_intersection}, \citet[Theorem~3.4]{hesse2013nonconvex} reveals that the strong regularity of $\mathrm{SOL}_{\mathrm{KKT}}$ is closely related to the normal cones of its components $\mathrm{SOL}_X$, $\mathrm{SOL}_L$, and $\mathrm{SOL}_\Psi$.  To facilitate the subsequent characterization of these normal cones, we invoke the following lemma from \citet[Example~6.8]{RockafellarWets2009}, which provides an explicit description of the normal cone to a general set.
			\begin{lemma}\label{lemma:normal_cone}
				Let $C \subset \mathbb{R}^n$ and $\bar x \in C$. 
				Assume that there exists an open neighborhood $U$ of $\bar x$ and a continuously differentiable mapping 
				$F: U \to \mathbb{R}^m$ such that
				\(C \cap U = \{ x \in U \mid F(x) = \mathbf 0 \},\) 
				and the derivative $\mathrm{D}F(\bar x)$ is surjective. 
				Then the normal cone to $C$ at $\bar x$ takes the form of
				\[
				N_C(\bar x) = \{ \mathrm D F(\bar x)^* y \mid y \in \mathbb{R}^m \},
				\]
				where $\mathrm D F(\bar x)^*$ denotes the adjoint of $\mathrm D F(\bar x)$.
			\end{lemma}
			
			For notational simplicity, by defining $B\in\mathbb{R}^{\frac{n(n-1)}{2}\times n}$ such that each row indexed by a pair $(i,j)$ is given by $B_{(i,j),:}=e_i - e_j$, the rank loss function can be equivalently written as $L(s) = \frac{2}{n(n-1)} \|Bs\|_1$. 
			We then state our main result on the metric subregularity for the KKT residual mapping $\mathcal{T}$.
			\begin{proposition}\label{prop:error_bound}
				Suppose that there exists $(\bar z,\bar s,\bar\beta)\in \mathrm{SOL}_{\mathrm{KKT}}$ satisfying Assumption~\ref{condition_SC_Psi} 
				and the following:
				\begin{enumerate}[label=(B\arabic*),start=3]
					\item\label{condition_SC_L} \textbf{Strict complementarity $(L)$:}
					\( \bar{z}\in\mathrm{ri(\partial L(\bar{s}))}; \)
					\item\label{condition_rank_nondegeneracy} \textbf{Rank-induced nondegeneracy:} \\
					- the matrix $\bar B_{0} X_{\bar{\mathcal{A}}}$ is of full row rank;\\
					- the collection $\{\bar B_{0} X_{\mathcal{G}_l}  \bar{\beta}_{\mathcal{G}_l}\}_{l\in I_{\bar\beta}^+}$ is linearly independent, \\
					where $\bar B_{0}$ denotes the submatrix of $B$ formed by rows $(i,j)$ satisfying $\bar s_i=\bar s_j$, and $\bar{\mathcal{A}} \coloneq \bigcup_{l\in I_{\bar\beta}^+} \mathcal{G}_l$ denotes the set of active coordinates.
				\end{enumerate}
				Then $\mathcal{T}$ is metrically subregular at $(\bar z,\bar s,\bar\beta)$ for $\mathbf 0$, i.e., there exist constants $\kappa_\ell>0$ and $r_\ell>0$ such that
				\begin{equation}\label{eq:EB}
					\mathrm{dist}((z,s,\beta), \mathrm{SOL}_{\mathrm{KKT}}) 
					\leq \kappa_\ell \mathrm{dist}(0, \mathcal{T}(z, s, \beta)), 
					\quad \forall (z,s,\beta)\in \mathcal{B}_2^{r_\ell} (\bar{z},\bar{s}, \bar{\beta}).
				\end{equation}
			\end{proposition}
			\begin{proof}
				First, we establish that $\mathrm{SOL}_{\mathrm{KKT}} =\mathrm{SOL}_{X} \cap \mathrm{SOL}_{L} \cap \mathrm{SOL}_{\Psi}$
				is strongly regular (see \citet[Definition 3.2 \& Theorem 3.3]{hesse2013nonconvex}) at $(\bar z,\bar s,\bar\beta)$, i.e., there exist constants $\kappa_R>0$ and $r_R>0$ such that, for all $(z,s,\beta)\in \mathcal{B}_2^{r_R} (\bar{z},\bar{s}, \bar{\beta})$,
				\begin{align}\label{eq:regularity}
					& \mathrm{dist}((z,s,\beta), \mathrm{SOL}_{\mathrm{KKT}}) \nonumber\\
					&\quad \leq \kappa_R \max\{\mathrm{dist}((z,s,\beta), \mathrm{SOL}_{X}), \mathrm{dist}((z,s,\beta), \mathrm{SOL}_{L}), \mathrm{dist}((z,s,\beta), \mathrm{SOL}_{\Psi})\}.
				\end{align}
				Since $\mathrm{SOL}_{X}$, $\mathrm{SOL}_{L}$, and $\mathrm{SOL}_{\Psi}$ are closed, from \citet[Theorem~3.4]{hesse2013nonconvex} we obtain that $\mathrm{SOL}_{\mathrm{KKT}}$ is strongly regular at $(\bar z,\bar s,\bar\beta)$ if and only if, for any
				$v_X \in N_{\mathrm{SOL}_{X}}(\bar z,\bar s,\bar\beta)$, $v_L \in N_{\mathrm{SOL}_{L}}(\bar z,\bar s,\bar\beta)$ and $v_\Psi \in N_{\mathrm{SOL}_{\Psi}}(\bar z,\bar s,\bar\beta)$ such that
				\begin{equation*}
					v_X + v_L + v_\Psi = \mathbf 0,
				\end{equation*}
				they must satisfy
				\begin{equation}\label{eq:sum_normal_direction}
					v_X = v_L = v_\Psi = \mathbf 0.
				\end{equation}
				It therefore suffices to verify that \eqref{eq:sum_normal_direction} holds.

				To this end, we characterize the involved three normal cones via Lemma \ref{lemma:normal_cone}.
				
				\vspace{0.2cm}
				\noindent \textbf{Normal cone to $\mathrm{SOL}_{X}$.}
				A direct calculation yields
				\begin{equation}\label{eq:normal_X}
					N_{\mathrm{SOL}_{X}}(\bar{z},\bar{s},\bar{\beta})=\{(\mathbf 0,u,-X^\intercal u)\mid u\in\mathbb{R}^n\}.
				\end{equation}
				
				\vspace{0.2cm}
				\noindent \textbf{Normal cone to $\mathrm{SOL}_{L}$.} Let $\bar{J}_0 \coloneq \{(i,j)\mid B_{(i,j),:} \bar{s}= \mathbf 0\}$, $\bar{J}_1 \coloneq \{(i,j)\mid B_{(i,j),:} \bar{s} \neq \mathbf 0\}$, and denote $\bar{B}_{1}\coloneq B_{\bar{J}_1}$. 
				According to \citet[Theorem 23.9]{rockafellar1997convex}, we have $\partial L(s) = \frac{2}{n(n-1)} B^\intercal \partial \|Bs\|_1$. 
				Since $\bar z\in \partial L(\bar s)$, there exists $\bar{u} \in \partial \|B \bar{s}\|_1$ such that  
				$$
				\bar{z} = \frac{2}{n(n-1)}(\bar{B}_0^\intercal \bar{u}_{\bar{J}_0} + \bar{B}_{1}^\intercal \bar{u}_{\bar{J}_1}).
				$$
				By the strict complementarity condition~\ref{condition_SC_L}, we have
				$$
				\|\bar{u}_{\bar{J}_0}\|_\infty <1.
				$$ 
				We then claim that 
				\begin{equation}\label{eq:SL_localization}
					\mathrm{SOL}_{L} \cap \mathcal{B}^{r_L}_2(\bar{z},\bar{s},\bar{\beta})   = \{(z,s,\beta)\in\mathcal{B}^{r_L}_2(\bar{z},\bar{s},\bar{\beta}) \mid  z - \bar{z} \in \operatorname{range}(\bar{B}_{0}^\intercal),\    \bar{B}_{0} s=\mathbf{0}\},
				\end{equation}
				where 
				\begin{equation}\label{eq:estimate_rL}
					r_L =\min\bigl\{\frac{1}{2}\min_{(i,j)\in \bar{J}_1} \{|\bar s_i -\bar s_j|\}, \ \frac{2}{n(n-1)}  \frac{1-\|\bar{u}_{\bar{J}_0}\|_{\infty}}{\| (\bar{B}_0 \bar{B}_0^\intercal)^{-1} \bar{B}_0 \|_\infty} \bigr\}.
				\end{equation}
				Here, $(\bar{B}_0 \bar{B}_0^\intercal)^{-1}$ is well-defined as $\bar{B}_0$ is of full row rank from the rank-induced nondegeneracy assumption~\ref{condition_rank_nondegeneracy}. We prove the equation \eqref{eq:SL_localization} by showing both inclusions. 
				
				First, let 
				$({z},{s},{\beta}) \in \mathrm{SOL}_{L} \cap\mathcal{B}^{r_L}_2(\bar{z},\bar{s},\bar{\beta})$. Since $z\in \partial L(s)$, there exists ${u} \in \partial \|B {s}\|_1$ such that 
				$$
				{z} = \frac{2}{n(n-1)} B^\top {u} = \frac{2}{n(n-1)} (\bar{B}_0^\intercal {u}_{\bar{J}_0} + \bar{B}_{1}^\intercal {u}_{\bar{J}_1}).
				$$ 
				The fact that 
				\[
				|s_i-\bar s_i|\leq \|s-\bar s\|_2< r_L \leq \frac{1}{2}\min_{(i,j)\in \bar{J}_1} \{|\bar s_i -\bar s_j|\},\quad \forall i,
				\]
				gives
				\[
				|(s_i-s_j)-(\bar s_i-\bar s_j)|\leq |s_i-\bar s_i|+|s_j-\bar s_j| < \min_{(i,j)\in \bar{J}_1} \{|\bar s_i -\bar s_j|\},\quad \forall i,j,
				\]
				which means 
				\[
				{\rm sign}(s_i-s_j) = {\rm sign}(\bar s_i-\bar s_j),\quad \forall (i,j)\in \bar J_1.
				\]
				That is to say, $ {u}_{\bar{J}_1} = \bar{u}_{\bar{J}_1}$.
				Therefore, it can be seen that
				\begin{equation*}
					\begin{split}
						{z} - \bar{z} 
						&= \frac{2}{n(n-1)} (\bar{B}_0^\intercal {u}_{\bar{J}_0} + \bar{B}_{1}^\intercal {u}_{\bar{J}_1})
						-\frac{2}{n(n-1)}(\bar{B}_0^\intercal \bar{u}_{\bar{J}_{0}} + \bar{B}_1^\intercal \bar{u}_{\bar{J}_1}) \\
						&= \frac{2}{n(n-1)}\bar{B}_0^\intercal ({u}_{\bar{J}_{0}} - \bar{u}_{\bar{J}_{0}}),
					\end{split}
				\end{equation*}
				yielding ${z} - \bar{z}\in\operatorname{range}(\bar{B}_0^\intercal)$. 
				The above equality, together with the triangle inequality and the choice of $r_L$ in \eqref{eq:estimate_rL}, we have  
				\begin{align*} 
					\|u_{\bar{J}_0}\|_\infty &\leq \|\bar{u}_{\bar{J}_0}\|_\infty+ \| \frac{n(n-1)}{2}(\bar{B}_0\bar{B}_0^\intercal )^{-1} \bar{B}_0 ({z} - \bar{z}) \|_\infty\\
					&\leq 
					\|\bar{u}_{\bar{J}_0}\|_\infty+ \frac{n(n-1)}{2} \| (\bar{B}_0\bar{B}_0^\intercal )^{-1} \bar{B}_0\|_\infty \cdot \|{z} - \bar{z} \|_\infty < 1,
				\end{align*}
				which means $\bar{B}_0 s = \mathbf{0}$. 
				
				Conversely, let $(z,s,\beta)\in\mathcal{B}^{r_L}_2(\bar{z}, \bar{s},\bar{\beta})$ such that $z - \bar{z} \in \operatorname{range}(\bar{B}_{0}^\intercal)$ and $\bar{B}_{0} s=\mathbf{0}$. Then, there exists ${q}\in \mathbb{R}^{|\bar J_0|}$ such that $z-\bar{z}=\frac{2}{n(n-1)}\bar{B}_0^\intercal (q-\bar{u}_{\bar{J}_0})$. By the choice of $r_L$ in \eqref{eq:estimate_rL}, we see that
				$$
				\|q\|_{\infty} = \|\bar{u}_{\bar{J}_0}+\frac{n(n-1)}{2}(\bar{B}_0 \bar{B}_0^\intercal)^{-1} \bar{B}_0(z-\bar{z})\|_{\infty} < 1,
				$$
				which yields $q\in(-1,1)^{|\bar J_0|}\subset  \partial \|\bar{B}_0s\|_1$.
				In the other hand, the choice of $r_L$ also implies $\partial\|\bar{B}_1s\|_1=\{{\bar{u}_{\bar{J}_1}} \}$, as ${\rm sign}(s_i-s_j) = {\rm sign}(\bar s_i-\bar s_j)$, for all $(i,j)\in \bar J_1$. These, combining with 
				$$
				z=\frac{2}{n(n-1)}\bar{B}_0^\intercal (q-\bar{u}_{\bar{J}_0}) + \frac{2}{n(n-1)}(\bar{B}_0^\intercal \bar{u}_{\bar{J}_0} + \bar{B}_{1}^\intercal \bar{u}_{\bar{J}_1})
				= \frac{2}{n(n-1)}(\bar{B}_0^\intercal q + \bar{B}_{1}^\intercal \bar{u}_{\bar{J}_1}),
				$$
				we have $z\in\partial L(s)$, implying $(z,s,\beta)\in \mathrm{SOL}_L$. Hence, \eqref{eq:SL_localization} holds. 
				
				Therefore, the equality \eqref{eq:SL_localization} indicates that $\mathrm{SOL}_L$, in a neighborhood of $(\bar z,\bar s,\bar\beta)$, is an affine manifold with tangent subspace
				\[
				\operatorname{range}(\bar B_0^\intercal)\times \ker(\bar B_0)\times \mathbb R^p.
				\]
				Thus by Lemma \ref{lemma:normal_cone}, the normal cone to $\mathrm{SOL}_L$ at $(\bar z,\bar s,\bar\beta)$ is given by 
				\begin{equation}\label{eq:normal_L}
					N_{\mathrm{SOL}_L}(\bar z,\bar s,\bar\beta)
					=
					\ker(\bar B_0)\times \operatorname{range}(\bar B_0^\intercal)\times \{\mathbf 0\}.
				\end{equation}
				
				\vspace{0.2cm}
				\noindent \textbf{Normal cone to $\mathrm{SOL}_{\Psi}$.} 
				Note that the rank-induced nondegeneracy condition (Assumption~\ref{condition_rank_nondegeneracy}) implies the active-group nondegeneracy condition (Assumption~\ref{condition_group_nondegeneracy}) required in Proposition~\ref{prop:error_bound_psi}. Hence, the assumptions of Proposition~\ref{prop:error_bound_psi} are satisfied. By the proof of Proposition \ref{prop:error_bound_psi}, there exists $r_\Psi>0$ such that 
				\[
				\mathrm{SOL}_{\Psi}\cap \mathcal{B}^{r_\Psi}_2 (\bar{z},\bar{s},\bar{\beta}) = \{(z,s,\beta)\in \mathcal{B}^{r_\Psi}_2 (\bar{z},\bar{s},\bar{\beta})\mid H(z,\beta)=\mathbf 0\},
				\]
				where $H$ is defined in \eqref{eq: def_H} with a surjective derivative $\mathrm{D}H(\bar{z}, \bar{\beta})$  (see \eqref{eq:derivative_psi}) at $(\bar{z}, \bar{\beta})$.
				Consequently, by Lemma~\ref{lemma:normal_cone}, the normal cone to $\mathrm{SOL}_{\Psi}$ is given by 
				\begin{equation}
					\begin{aligned}
						N_{\mathrm{SOL}_{\Psi}}(\bar z,\bar s,\bar\beta)
						&= \mathrm{range}(\mathrm{D}H(\bar{z}, \bar{\beta})^*)\notag\\
						&= \Bigl\{
						(v_z,\mathbf 0,v_\beta)
						\ \Big|\
						v_z=X_{\bar{\mathcal{A}}}\eta,\ 
						(v_\beta)_{\bar{\mathcal{A}}}=\mathcal H\eta,\ 
						\eta\in\mathbb R^{|\bar{\mathcal{A}}|}
						\Bigr\},\label{eq:normal_Psi}
					\end{aligned}
				\end{equation}
				where $\mathcal{H}={\rm Diag}(\mathcal{H}_l)_{l\in I_{\bar\beta}^+}$ with each \(\mathcal H_l\) defined in \eqref{eq:def_psi_hessian}.
				
				\vspace{0.2cm}
				We are now ready to verify the regularity condition \eqref{eq:sum_normal_direction}. Based on the expressions of the normal cones in \eqref{eq:normal_X}, \eqref{eq:normal_L} and \eqref{eq:normal_Psi}, by letting 
				\begin{align*}
					v_X=&\ (\mathbf 0,u,-X^\intercal u),  \quad u \in \mathbb{R}^n,\\
					v_L=&\ (a,\bar{B}_{0}^\intercal \rho,\mathbf 0),\quad a\in\mathrm{ker}(\bar{B}_{0}),\  \rho \in\mathbb{R}^{|\bar J_0|}, \\
					v_\Psi=&\ ( X_{\bar{\mathcal{A}}} \eta,\ \mathbf 0,\ h), \quad  \text{with}\ h_{\bar{\mathcal{A}}} = \mathcal H \eta,\ \eta\in\mathbb R^{|\bar{\mathcal{A}}|},
				\end{align*}  
				the equation \eqref{eq:sum_normal_direction} gives
				\begin{align}
					a +  X_{\bar{\mathcal{A}}} \eta = \mathbf 0, \quad u+\bar{B}_{0}^\intercal \rho=\mathbf 0, \quad -X^\intercal u + h =\mathbf 0.\label{eq: regular_equation}
				\end{align}  
				Since \(a\in\ker(\bar B_0)\), applying \(\bar B_0\) to the first identity in \eqref{eq: regular_equation} yields
				\begin{equation}
					\bar B_0X_{\bar{\mathcal A}}\eta=0.
					\label{eq:B0XAeta}
				\end{equation}
				Substituting \(u=-\bar B_0^\top\rho\) into the third equation in \eqref{eq: regular_equation} and restricting to \(\bar{\mathcal A}\), we get
				\begin{equation}
					X_{\bar{\mathcal A}}^\top \bar B_0^\top \rho+\mathcal H\eta=0.
					\label{eq:rhoHeta}
				\end{equation}
				Taking the inner product of \eqref{eq:rhoHeta} with \(\eta\) and using \eqref{eq:B0XAeta}, we obtain
				\[
				0
				=
				\eta^\top X_{\bar{\mathcal A}}^\top \bar B_0^\top \rho+\eta^\top\mathcal H\eta
				=
				\rho^\top \bar B_0X_{\bar{\mathcal A}}\eta+\eta^\top\mathcal H\eta
				=
				\eta^\top\mathcal H\eta,
				\]
				which means \(\eta\in\ker(\mathcal H)\), as \(\mathcal H\) is positive semidefinite. This, together with \eqref{eq:B0XAeta} means
				\begin{equation*}
					\eta \in \ker(\bar{B}_{0} X_{\bar{\mathcal{A}}}) \cap \ker(\mathcal H). 
				\end{equation*}
				According to Assumption~\ref{condition_rank_nondegeneracy}, it follows that $\eta = \mathbf 0$. The equation \eqref{eq:rhoHeta} then reduces to
				\[
				X_{\bar{\mathcal{A}}}^\intercal \bar{B}_{0}^\intercal \rho = \mathbf 0,
				\]
				which means $\rho = \mathbf 0$ as $\bar{B}_{0} X_{\bar{\mathcal{A}}}$ has full row rank. This further implies $u = \mathbf 0$, $a = \mathbf 0$, $h = \mathbf 0$, and thus the desired equality \eqref{eq:sum_normal_direction}. 
				
				Finally, under \eqref{eq:regularity}, the metric subregularity for $\mathcal{T}$ can be reduced to establishing metric subregularity estimates for the individual mappings $\mathcal{T}_X$, $\mathcal{T}_L$, and $\mathcal{T}_{\Psi}$. Specifically, the metric subregularity for $\mathcal{T}_\Psi$ has been established in \eqref{eq:EB_Psi_local_final} (see Proposition~\ref{prop:error_bound_psi}); as for the mapping $\mathcal{T}_X$, it holds that
				\begin{align*}
					\mathrm{dist}((z,s,\beta), \mathrm{SOL}_X) \leq \| y+s-X\beta \| = \mathrm{dist}(\mathbf 0, \mathcal{T}_X(z,s,\beta)), \quad  \forall (z,s,\beta),
				\end{align*}
				since for any $(z,s,\beta)$ we always have $(z,X\beta -y, \beta)\in \mathrm{SOL}_X$;
				and for all $(z,s,\beta)$, we have
				\begin{align*}
					\operatorname{dist}((z,s,\beta),\mathrm{SOL}_L)
					&\leq \inf_{z'\in \partial L(s)} \|(z,s,\beta)-(z',s,\beta)\| \\
					&= \operatorname{dist}(z,\partial L(s)) = \operatorname{dist}(\mathbf 0,\mathcal T_L(z,s,\beta)).
				\end{align*}
				Combining the above metric subregularity estimates and the regularity property \eqref{eq:regularity}, we conclude that for all $(z,s,\beta)\in \mathcal B_2^{r_\ell}(\bar z,\bar s,\bar\beta)$, it holds that
				\begin{align*}
					& \mathrm{dist}((z,s,\beta), \mathrm{SOL}_{\mathrm{KKT}})) \\
					& \; \leq \kappa_\ell \max\{\mathrm{dist}(\mathbf 0, \mathcal{T}_X(z,s,\beta)), \operatorname{dist}(\mathbf 0,\mathcal T_L(z,s,\beta)), \operatorname{dist}\bigl(\mathbf 0, \mathcal T_\Psi(z,s,\beta)\bigr)\}\\
					& \; \leq \kappa_\ell \mathrm{dist}(\mathbf 0, \mathcal{T}(z,s,\beta)),
				\end{align*}
				where the constants $r_\ell=\min\{r_\Psi,r_R\}$ and $\kappa_\ell=\kappa_R \max\{1,\kappa_\Psi\}$ with $r_\Psi$, $\kappa_\Psi$ from Proposition~\ref{prop:error_bound_psi} and $r_R$, $\kappa_R$ from \eqref{eq:regularity}.
				This completes the proof.
				~\hfill\halmos
			\end{proof}
			
			With the metric subregularity for the KKT residual mapping $\mathcal{T}$, we next present the convergence properties of Algorithm~\ref{alg:PALM}. Our analysis builds upon the general proof framework of \citet[Theorem 2.3 \& Theorem 2.5]{li2020asymptotically}, but departs from it in a key aspect. While their results rely on an error bound condition associated with $\mathcal{T}$ (see \citet[Equation (3.9)]{li2020asymptotically}), we show that this condition can be relaxed to the metric subregularity, which we have established in Proposition~\ref{prop:error_bound}. This relaxation allows us to extend their convergence framework to our setting. With this modification, the overall proof follows a similar line of argument. We therefore focus on the essential differences, highlighting where the metric subregularity is invoked, and refer to \citet{li2020asymptotically} for the parts that carry over without changes.

			\begin{theorem}\label{thm:convergence}
				Let $\{(z^k, s^k, \beta^k)\}$ be the sequence generated by Algorithm \ref{alg:PALM}. Then the following conclusions hold.
				\begin{enumerate}
					\item[{\rm (1)}] The sequence $\{(z^k, s^k,\beta^k)\}$ converges  to a KKT solution $(\bar{z}, \bar{s},\bar{\beta})\in\mathrm{SOL}_{\mathrm{KKT}}$ of problem \eqref{mod:primaleq_const} and its dual \eqref{mod:dual}.
					\item[{\rm (2)}]  
					If Assumptions~\ref{condition_SC_Psi}, \ref{condition_SC_L} and \ref{condition_rank_nondegeneracy} hold at $(\bar{z}, \bar{s},\bar{\beta})$, then there exists $\bar{k}>0$ such that, for all $k > \bar{k}$,
					\begin{equation}\label{ineq: convergence rate}
						{\rm dist}_{\mathcal{M}}
						\left((z^{k+1}, s^{k+1}, \beta^{k+1}), \mathrm{SOL}_{\mathrm{KKT}}\right) \leq \mu_k {\rm dist}_{\mathcal{M}}
						\left((z^k, s^k, \beta^k), \mathrm{SOL}_{\mathrm{KKT}}\right),
					\end{equation}
					where $\mathcal{M} := {\rm Diag}(\tau I_n, I_n, I_p)$ and 
					$$\mu_k = \frac{1}{1 - \delta_k}\bigg( \delta_k + \frac{\kappa_\ell \gamma(1 + \delta_k)  }{\sqrt{\sigma_k^2 + \kappa_\ell^2 \gamma^2}} \bigg) 
					\to \mu_{\infty} \coloneq  \frac{ \kappa_\ell {\gamma}}{ \sqrt{\sigma_\infty^2 + \kappa_\ell^2 \gamma^2}}, $$ 
					with $\sigma_\infty := \lim_{k \to \infty} \sigma_k$, $\gamma =  \max\{1, \tau\}$, and $\kappa_\ell$ from \eqref{eq:EB}.
				\end{enumerate}
			\end{theorem} 
			\begin{proof}
				Part (1):
				Each iteration of Algorithm~\ref{alg:PALM} can be equivalently viewed as an inexact preconditioned proximal point step: 
				\begin{equation}
					(z^{k+1}, s^{k+1}, \beta^{k+1}) \approx \mathcal{P}_k(z^{k}, s^{k}, \beta^{k}) \coloneq (\mathcal{M} + \sigma_k\mathcal{T})^{-1}\mathcal{M} (z^k, s^k, \beta^k), \label{eq: def_Pk}
				\end{equation}
				with the inexactness criterion \eqref{stopping criteria}. 
				Therefore, by \citet[Proposition 3.2 \& Theorems 2.3]{li2020asymptotically}, the sequence $\{(z^k,s^k,\beta^k)\}$ converges to some KKT solution $(\bar{z}, \bar{s}, \bar{\beta})\in\mathrm{SOL}_{\mathrm{KKT}}$.

				Part (2): We follow the proof of \citet[Theorem 2.5]{li2020asymptotically}, with necessary modifications.
				First, by the definition of $\mathcal{P}_k$ in \eqref{eq: def_Pk}, for any $(z^{k},  s^{k}, \beta^{k})$, we have 
				$$
				\mathcal{M}( z^{k},  s^{k}, \beta^{k})\in (\mathcal{M} + \sigma_k\mathcal{T})(\mathcal{P}_k( z^{k},  s^{k}, \beta^{k}))=\mathcal{M} \mathcal{P}_k( z^{k},  s^{k}, \beta^{k}) + \sigma_k\mathcal{T}(\mathcal{P}_k( z^{k},  s^{k}, \beta^{k})). 
				$$
				By denoting $\mathcal{Q}_k = \mathcal{I}-\mathcal{P}_k$, we have 
				\begin{equation}
					\sigma_k^{-1} \mathcal{M} \mathcal{Q}_k ( z^{k},  s^{k}, \beta^{k}) \in \mathcal{T}(\mathcal{P}_k( z^{k},  s^{k}, \beta^{k})).\label{eq: Qinclusion}
				\end{equation}
				
				Next, we verify that the inclusion \eqref{eq: Qinclusion} falls within the scope of the metric subregularity result in Proposition~\ref{prop:error_bound}.
				Since $\{{\delta}_k\}$ is a positive summable sequence, there exists $k_0>0$ such that
				$$
				{r_\ell}\ {\min\{1,\sqrt{\tau}\}} - \sum_{k=k_0}^\infty {\delta}_k >0,
				$$ 
				where $r_\ell$ is given in \eqref{eq:EB}.
				Following from the convergence of the whole sequence $\{({z^k,s^k,\beta^k})\}$,  there exists $\bar{k}>k_0$ such that 
				$(z^{\bar{k}}, s^{\bar{k}}, \beta^{\bar{k}})$ satisfies
				\begin{equation}\label{eq:condition_kbar}
					\mathrm{dist}_\mathcal{M}((z^{\bar{k}}, s^{\bar{k}}, \beta^{\bar{k}}), \mathrm{SOL}_{\mathrm{KKT}}) < {r_\ell}\ {\min\{1,\sqrt{\tau}\}} - \sum_{k=k_0}^\infty {\delta}_k < {r_\ell}\ {\min\{1,\sqrt{\tau}\}} - \sum_{k=\bar{k}}^\infty \delta_k.
				\end{equation}
				In addition, using the same argument as in \citet[Theorem 2.3]{li2020asymptotically}, we obtain that 
				\begin{equation}\label{eq:dist_delta}
					\mathrm{dist}_\mathcal{M}((z^{k+1}, s^{k+1}, \beta^{k+1}), \mathrm{SOL}_{\mathrm{KKT}}) \leq \mathrm{dist}_\mathcal{M}((z^k, s^k, \beta^k), \mathrm{SOL}_{\mathrm{KKT}})+ \delta_k,\quad \forall k>0.
				\end{equation}
				Therefore, for any $k\geq \bar{k}$, summing \eqref{eq:dist_delta} from $\bar{k}$ to $k$ and invoking \eqref{eq:condition_kbar}, we obtain
				\begin{align*}
					\mathrm{dist}_\mathcal{M}((z^{k+1}, s^{k+1}, \beta^{k+1}), \mathrm{SOL}_{\mathrm{KKT}})
					&\leq \mathrm{dist}_\mathcal{M}((z^{\bar{k}}, s^{\bar{k}}, \beta^{\bar{k}}), \mathrm{SOL}_{\mathrm{KKT}})
					+ \sum_{k=\bar{k}}^k \delta_k \\
					&< r_\ell \min\{1,\sqrt{\tau}\}.
				\end{align*}
				Hence, for all $k > \bar{k}$, we have
				\begin{align*}
					& \operatorname{dist}_{\mathcal{M}}(\mathcal{P}_k( z^{k},  s^{k}, \beta^{k}), \mathrm{SOL}_{\mathrm{KKT}}) \leq   \|\mathcal{P}_k( z^{k},  s^{k}, \beta^{k}) - \Pi_{\mathrm{SOL}_{\mathrm{KKT}}}(z^{k},  s^{k}, \beta^{k}) \|_\mathcal{M} \\   
					& \quad =   \|\mathcal{P}_k( z^{k},  s^{k}, \beta^{k}) - \mathcal{P}_k (\Pi_{\mathrm{SOL}_{\mathrm{KKT}}}(z^{k},  s^{k}, \beta^{k}) ) \|_\mathcal{M} \\
					&\quad \leq \operatorname{dist}_{\mathcal{M}}(( z^{k},  s^{k}, \beta^{k}), \mathrm{SOL}_{\mathrm{KKT}}) \leq {r_\ell}\ {\min\{1,\sqrt{\tau}\}},
				\end{align*}
				where $\Pi_{\mathrm{SOL}_{\mathrm{KKT}}}$ denotes the projection onto the solution set $\mathrm{SOL}_{\mathrm{KKT}}$, which means
				\begin{align*} 
					&\mathrm{dist}(\mathcal{P}_k( z^{k},  s^{k}, \beta^{k}),\mathrm{SOL}_{\mathrm{KKT}}) \leq \frac{1}{\min\{1,\sqrt{\tau}\}}\mathrm{dist}_\mathcal{M}(\mathcal{P}_k( z^{k},  s^{k}, \beta^{k}),\mathrm{SOL}_{\mathrm{KKT}})  <  r_\ell.
				\end{align*}
				Then, applying Proposition~\ref{prop:error_bound} to the inclusion \eqref{eq: Qinclusion}, we have for all $k > \bar{k}$,  
				\begin{equation*}
					\mathrm{dist}(\mathcal{P}_k( z^{k},  s^{k}, \beta^{k}), \mathrm{SOL}_{\mathrm{KKT}}) 
					\leq \kappa_\ell \sigma_k^{-1} \|\mathcal{M} \mathcal{Q}_k ( z^{k},  s^{k}, \beta^{k})\|_2.
				\end{equation*}
				
				The remaining arguments follow standard convergence rate analysis for the preconditioned PPA and are omitted for brevity; see \citet[Theorem 2.5]{li2020asymptotically}. 
				This completes the proof.
				~\hfill \halmos \end{proof}
			\begin{remark}
				From the expression of $\mu_\infty$ given in Theorem \ref{thm:convergence}, we have $\mu_\infty<1$. That is, Theorem~\ref{thm:convergence} establishes the $Q$-linear convergence rate of the primal-dual sequence $\{(z^k, s^k, \beta^k)\}$. Moreover, if $\sigma_\infty = \infty$, then $\mu_\infty = 0$, and the convergence improves to asymptotically $Q$-superlinear.
			\end{remark}

			
			\subsection{A Semismooth Newton Method for Subproblem \texorpdfstring{\eqref{mod:iPALM}}{the Subproblem}}
			While PALM in Algorithm \ref{alg:PALM}
			enjoys favorable convergence properties, its practical efficiency critically depends on whether the subproblem \eqref{mod:iPALM} can be solved efficiently at each iteration.
			In this subsection, we develop a semismooth Newton (SSN) method to solve \eqref{mod:iPALM}, with implementation details and complexity analysis given in Appendix~\ref{sec:implementation}.
			
			Observing that \(\psi_k(\cdot)\) in~\eqref{mod:iPALM} is strongly convex and differentiable, problem~\eqref{mod:iPALM} admits a unique optimal solution $\bar{z}^{k+1}$, which  satisfies the associated first-order optimality condition: 
			\begin{equation}\label{gradpsi}
				\nabla \psi_k(z) = y + \sigma_k {\rm Prox}_{ L}\left(\frac{s^k}{\sigma_k} + z\right) -\sigma_k\lambda X{\rm Prox}_{ \Psi}\left( \frac{\beta^k}{\sigma_k \lambda} - \frac{X^\intercal z}{\lambda} \right) + \frac{\tau}{\sigma_k}(z - z^{k}) = \mathbf 0.
			\end{equation}
			
			Since both ${\rm Prox}_L(\cdot)$ and ${\rm Prox}_{\Psi}(\cdot)$ are Lipschitz continuous, the nonlinear and nonsmooth equation~\eqref{gradpsi} can be efficiently solved using an SSN method. 
			Specifically, we define the following set-valued mapping:
			\begin{equation}\label{eq:Hessian}
				\hat{\partial}^2 \psi_k(z) := \sigma_k \partial{\rm Prox}_L\left(\frac{s^k}{\sigma_k} + z\right) + \sigma_k X \partial{\rm Prox}_\Psi\left( \frac{\beta^k}{\sigma_k \lambda} - \frac{X^\intercal z}{\lambda} \right)  X^{\intercal} + \frac{\tau}{\sigma_k}I_n,
			\end{equation}
			where $\partial{\rm Prox}_L(\cdot)$ and $\partial{\rm Prox}_{\Psi}(\cdot)$ denote the generalized Jacobians of the proximal operators ${\rm Prox}_L(\cdot)$ and ${\rm Prox}_{\Psi}(\cdot)$, respectively. 
			It can be proved that $\nabla \psi_k(\cdot)$ is strongly semismooth with respect to $\hat{\partial}^2 \psi_k(z)(\cdot)$, and all elements in $\hat{\partial}^2 \psi_k(z)$ are positive definite for any $z\in \mathbb{R}^n$. See Appendix~\ref{sec:implementation} for details.
			
			With these preparations in place, we present the SSN method 
			for solving the nonsmooth equation~\eqref{gradpsi}. The algorithm proceeds as follows.
			
			\begin{algorithm}[H]\small
				\caption{Semismooth Newton Method for solving \eqref{mod:iPALM}}
				\label{alg:ssn}
				\begin{algorithmic}[1]
					\Require $\bar{\mu} \in (0, 1/2)$, $\bar{\eta} \in (0, 1)$, $\bar\tau \in (0,1]$, $\bar{\delta} \in (0, 1)$, and $z^{(0)}=z^k$. 
					\State Let $j=0$.
					\Repeat
					\State Choose $\Lambda^{(j)}\in \partial\mathrm{Prox}_L(s^k/\sigma_k + z^{(j)})$ and $V^{(j)}\in \partial \mathrm{Prox}_{\Psi} \left( \beta^k/(\sigma_k\lambda) - X^{\intercal}z^{(j)}/\lambda\right)$. 
					\State Let $H^{(j)} = \sigma_k\Lambda^{(j)} + \sigma_k X V^{(j)} X^{\intercal} + \frac{\tau}{\sigma_k} I_n$. Solve the following linear system
					\begin{equation}\label{eq:Newton}
						H^{(j)} d = -\nabla \psi_k(z^{(j)})
					\end{equation}
					exactly or approximately by the conjugate gradient (CG) algorithm to find $d^{(j)}$ such that $
					\| H^{(j)} d^{(j)} + \nabla \psi_k(z^{(j)})\|_2 \leq \min\{\bar{\eta}, \| \nabla \psi_k(z^{(j)})\|_2^{1+\bar\tau}\}$.
					\State  Set $\alpha_{(j)} = \bar{\delta}^{m_{(j)}}$, where $m_{(j)}$ is the first nonnegative integer $m$ for which
					\begin{equation*}
						\psi_k(z^{(j)} + \bar{\delta}^{m} d^{(j)}) \leq \psi_k(z^{(j)}) + \bar{\mu} \bar{\delta}^{m} \langle \nabla \psi_k(z^{(j)}), d^{(j)} \rangle.
					\end{equation*}
					
					\State Set $z^{(j+1)} = z^{(j)} + \alpha_{(j)} d^{(j)}$, $j \leftarrow j+1$.
					\Until Condition \eqref{stopping criteria} is satisfied
					\State \Return An approximate solution $z^{(j)}$ to \eqref{mod:iPALM}.
				\end{algorithmic}
			\end{algorithm}
			
			\begin{remark}
				In practical implementations, we set $\bar{\mu} = 10^{-3}$, $\bar{\eta} = 10^{-1}$, $\bar{\tau} = 10^{-1}$, and $\bar{\delta} = 0.5$.
			\end{remark}

			The convergence result of the SSN method 
			in Algorithm \ref{alg:ssn} is provided in the following theorem, which is a direct consequence of \citet[Proposition 3.3 \& Theorem 3.4]{zhao2010newton} and \citet[Theorem~3]{Li2018}.
			\begin{theorem}
				Let $\{z^{(j)}\}$ be the sequence generated by Algorithm \ref{alg:ssn}. 
				Then $\{z^{(j)}\}$ converges to the unique optimal solution $ \bar{z}^{k+1}$ of problem \eqref{mod:iPALM} and
				$$
				\|z^{(j+1)} - \bar{z}^{k+1}\|_2 = \mathcal{O}(\|z^{(j)} - \bar{z}^{k+1}\|_2^{1+\bar\tau}),
				$$
				where $\bar{\tau}\in (0,1]$ is given in the algorithm.
			\end{theorem}
			\section{Numerical Experiments}\label{sec:4}
			This section presents comprehensive numerical experiments to evaluate the proposed group Lasso regularized rank regression model in terms of statistical performance, and to assess the proposed algorithm in terms of computational efficiency. We begin by examining the estimation accuracy and robustness of the proposed estimator with the simulation-based tuning under various data-generating processes, comparing its performance against baseline models. 
			On the computational side, we conduct two complementary studies. We first evaluate the scalability of the proposed PALM framework for solving the group Lasso regularized rank regression. Second, in the absence of dedicated solvers for the group Lasso case, we instead  benchmark our algorithm on the $\ell_1$-regularized setting against the state-of-the-art PPMM algorithm from \cite{tang2023proximal}, which has been shown to outperform standard solvers such as ADMM and Gurobi and thus provides a strong baseline. All experiments were performed in MATLAB on an Apple M3 system running macOS 15.3.1 with 24 GB of RAM. Throughout this section, the group Lasso weights are chosen as $w_l = \sqrt{|\mathcal{G}_l|}$.
			
			\subsection{Statistical Performance of Group Lasso Regularized Rank Regression}\label{subsec:rank_grouptable}

			We compare the performance of our proposed group Lasso regularized rank regression model \eqref{mod:rankgrouplasso0} (Rank\_GLasso) against five alternatives: the group Lasso regularized least squares regression model (LS\_GLasso) \citep{yuan2006model,Zhang2020} implemented in \texttt{SparseGroupLasso} package\footnote{\url{https://github.com/YangjingZhang/SparseGroupLasso}};
			the $\ell_1$-regularized rank regression model (Rank\_Lasso) \citep{Wang2020ATR}, solved by the state-of-the-art PPMM algorithm \citep{tang2023proximal}; 
			and three robust group Lasso estimators implemented in R package \texttt{hrqglas}\footnote{\url{https://CRAN.R-project.org/package=hrqglas}} \citep{Sherwood2022}: Huber\_GLasso, QR\_GLasso ($\tau=0.25$), and QR\_GLasso ($\tau=0.5$).
			The last three models combine group Lasso regularization with different loss functions: Huber\_GLasso uses the Huber loss, while QR\_GLasso($\tau$) uses the quantile check loss at quantile level $\tau$.

			Each model involves a regularization parameter $\lambda$. 
			The two rank-based methods admit data-driven simulation-based selection rules for $\lambda$: for Rank\_GLasso, $\lambda$ is computed by Algorithm~\ref{alg:lambda_estimation}; for Rank\_Lasso, we adopt the selection rule proposed in \citet{Wang2020ATR} (see \texttt{TFRE}\footnote{\url{https://github.com/yunanwu123/TFRE}}). 
			The remaining four methods require parameter tuning: for LS\_GLasso, we select $\lambda$ via $5$-fold cross-validation over 10 logarithmically spaced values between $\lambda_{\text{max}} = \max_{1\leq l\leq g}\|X_{{\cal G}_l}^{\intercal} y\|_2/\sqrt{|{\cal G}_l|}$ and $\lambda_{\text{min}}=10^{-4} \lambda_{\text{max}}$, with the optimal value chosen to minimize the validation mean squared error; for the three \texttt{hrqglas} estimators, $\lambda$ is selected using the built-in $5$-fold cross-validation routine.

			\paragraph{Data Generation.} To systematically evaluate the competing models, we adopt the linear regression framework \eqref{mod:linear regression} and generate synthetic data by specifying the distributions of covariates, true coefficient vector, and noise.
			The rows of the design matrix $X$ are sampled i.i.d. from
			the $p$-dimensional multivariate Gaussian distribution $\mathcal{N}(0,\Sigma)$, where the covariance structure $\Sigma$ is chosen from two settings:
			\begin{itemize}[leftmargin=3.5em, topsep=3pt, itemsep=1pt]
				\item[(C1)] Equi-correlation: $\Sigma_{ij} = 0.3$ for all $i \neq j$, and $\Sigma_{ij} = 1$ for $i=j$.
				\item[(C2)] Auto-regressive (AR(1)): $\Sigma_{ij} = 0.9^{|i-j|}$.
			\end{itemize}
			The true coefficient vector $\beta^* \in \mathbb{R}^p$ exhibits group sparsity. 
			We partition the covariates into $g = p/20$ non-overlapping groups $\mathcal{G} = \{{\cal G}_1,\dots,{\cal G}_g\}$, among which only 1\% of groups are active, i.e., contain nonzero coefficients. For simplicity, the active groups are taken to be the first $m$ groups in $\mathcal{G}$. We consider two signal patterns:
			\begin{itemize}[leftmargin=3.5em, topsep=3pt, itemsep=1pt]
				\item[(S1)]  Uniform signal: $ (\beta^*)_{{\cal G}_l} = \sqrt{3}\cdot \mathbf 1_{|{\cal G}_l|}, \quad l\in[m]$.
				\item[(S2)] Group-wise linear decaying signal: $(\beta^*)_{{\cal G}_l} = \left(2 - (l-1)/4\right) \cdot \mathbf 1_{|{\cal G}_l|}, \ l\in[m]$.
			\end{itemize}
			We consider six different noise distributions, including three Gaussian cases with different variances and three heavy-tailed alternatives:
			\begin{itemize}[leftmargin=3.5em, topsep=3pt, itemsep=1pt]
				\item[(E1)] Gaussian noise with low variance: $\epsilon \sim \mathcal{N}(0, 0.25)$;
				\item[(E2)] Gaussian noise with standard variance: $\epsilon \sim \mathcal{N}(0, 1)$;
				\item[(E3)] Gaussian noise with high variance: $\epsilon \sim \mathcal{N}(0, 2)$;
				\item[(E4)] Gaussian mixture with outliers: $\epsilon \sim 0.95\mathcal{N}(0, 1) + 0.05\mathcal{N}(0, 100)$, denoted as $\mathcal{N}_{\mathrm{mix}}$;
				\item[(E5)] Student’s t-distribution with 4 degrees of freedom: $\epsilon \sim  t_4$;
				\item[(E6)] Cauchy distribution: $\epsilon \sim \text{Cauchy}(0, 1)$.
			\end{itemize}
			
			\begin{table}[htbp]
				\centering
				\caption{Comparison of models on data generated with $X$ under covariance structure (C1) and $\beta^*$ under signal pattern (S1) across all error distributions (E1)--(E6).}
				\label{tab:group1.1_new}
				\setlength{\tabcolsep}{4pt}
				\begin{tabular}{c l c c c c c c}
					\toprule
					\textbf{Noise}  & \textbf{Method} & \textbf{$\lambda$} & \textbf{Time} & \textbf{$\ell_2$ Error} & \textbf{ME} & \textbf{FP} & \textbf{FN} \\
					\midrule
					\multirow{6}{*}{$\mathcal{N}(0, 0.25)$}
					& Rank\_Lasso & 0.217 & \textbf{00:05} & 1.36e\texttt{+}0 & 2.45e-1 & \textbf{305} & 0 \\
					& Rank\_GLasso & 0.092 & 00:07 & 3.34e-1 & 5.67e-2 & 514 & 0 \\
					& Huber\_GLasso & 0.009 & 02:27 & 4.72e\texttt{+}0 & 8.43e-1 & 3023 & 0 \\
					& QR\_GLasso ($\tau$=0.25) & 0.029 & 02:40 & 3.38e-1 & 6.45e-2 & 405 & 0 \\
					& QR\_GLasso ($\tau$=0.5) & 0.069 & 02:52 & 3.27e-1 & 7.46e-2 & 480 & 0 \\
					& LS\_GLasso & 17.902 & 00:59 & \textbf{3.10e-1 } & \textbf{5.06e-2 }& 472 & 0 \\
					\addlinespace
					\multirow{6}{*}{$\mathcal{N}(0, 1)$}
					& Rank\_Lasso & 0.220 & \textbf{00:03} & 6.97e\texttt{+}0 & 2.17e\texttt{+}0 & \textbf{347} & 0 \\
					& Rank\_GLasso & 0.095 & 00:06 & 7.51e-1 & 2.90e-1 & 538 & 0 \\
					& Huber\_GLasso & 0.009 & 02:38 & 5.65e\texttt{+}0 & 1.38e\texttt{+}0 & 2986 & 0 \\
					& QR\_GLasso ($\tau$=0.25) & 0.031 & 02:50 & 7.81e-1 & 3.64e-1 & 504 & 0 \\
					& QR\_GLasso ($\tau$=0.5) & 0.056 & 02:36 & 7.40e-1 & 3.62e-1 & 440 & 0 \\
					& LS\_GLasso & 49.096 & 01:04 & \textbf{6.73e-1} & \textbf{2.61e-1} & 498 & 0 \\
					\addlinespace
					\multirow{6}{*}{$\mathcal{N}(0, 2)$}
					& Rank\_Lasso & 0.216 & \textbf{00:04} & 9.40e\texttt{+}0 & 3.86e\texttt{+}0 & 337 & 1 \\
					& Rank\_GLasso & 0.090 & 00:05 & 8.98e-1 & 4.58e-1 & 572 & 0 \\
					& Huber\_GLasso & 0.009 & 02:37 & 5.97e\texttt{+}0 & 2.39e\texttt{+}0 & 2885 & 0 \\
					& QR\_GLasso ($\tau$=0.25) & 0.039 & 02:46 & 9.43e-1 & 5.73e-1 & 478 & 0 \\
					& QR\_GLasso ($\tau$=0.5) & 0.073 & 02:45 & \textbf{8.14e-1} & 4.88e-1 & \textbf{320} & 0 \\
					& LS\_GLasso & 47.153 & 01:06 & 8.77e-1 & \textbf{4.34e-1} & 616 & 0 \\
					\addlinespace
					\multirow{6}{*}{$\mathcal{N}_{\text{mix}}$}
					& Rank\_Lasso & 0.218 & \textbf{00:04} & 9.01e\texttt{+}0 & 5.09e\texttt{+}0 & \textbf{337} & 1 \\
					& Rank\_GLasso & 0.094 & 00:06 & \textbf{7.83e-1} & \textbf{3.59e-1} & 668 & 0 \\
					& Huber\_GLasso & 0.009 & 02:43 & 5.28e\texttt{+}0 & 2.80e\texttt{+}0 & 2885 & 0 \\
					& QR\_GLasso ($\tau$=0.25) & 0.068 & 02:47 & 9.29e-1 & 6.74e-1 & 560 & 0 \\
					& QR\_GLasso ($\tau$=0.5) & 0.042 & 02:48 & \textbf{8.02e-1} & 4.13e-1 & 691 & 0 \\
					& LS\_GLasso & 46.857 & 01:10 & 1.24e\texttt{+}0 & 9.07e-1 & 617 & 0 \\
					\addlinespace
					\multirow{6}{*}{$ t_4$}
					& Rank\_Lasso & 0.221 & \textbf{00:04} & 4.13e\texttt{+}0 & 2.54e\texttt{+}0 & \textbf{327} & 0 \\
					& Rank\_GLasso & 0.097 & 00:06 & \textbf{8.41e-1} & \textbf{4.08e-1} & 563 & 0 \\
					& Huber\_GLasso & 0.009 & 02:44 & 5.40e\texttt{+}0 & 2.55e\texttt{+}0 & 3055 & 0 \\
					& QR\_GLasso ($\tau$=0.25) & 0.071 & 02:50 & 9.09e-1 & 7.20e-1 & 420 & 0 \\
					& QR\_GLasso ($\tau$=0.5) & 0.046 & 02:48 & 8.37e-1 & \textbf{4.08e-1} & 458 & 0 \\
					& LS\_GLasso & 49.416 & 01:08 & 8.81e-1 & 4.75e-1 & 486 & 0 \\
					\addlinespace
					\multirow{6}{*}{$\text{Cauchy}(0, 1)$}
					& Rank\_Lasso & 0.216 & \textbf{00:03} & 1.19e\texttt{+}1 & 1.43e\texttt{+}1 & 330 & 12 \\
					& Rank\_GLasso & 0.092 & 00:06 & 1.14e\texttt{+}0 & 6.43e-1 & 554 & 0 \\
					& Huber\_GLasso & 0.008 & 02:49 & 6.36e\texttt{+}0 & 8.39e\texttt{+}0 & 3284 & 0 \\
					& QR\_GLasso ($\tau$=0.25) & 0.060 & 02:52 & 1.15e\texttt{+}0 & 1.10e\texttt{+}0 & \textbf{340} & 0 \\
					& QR\_GLasso ($\tau$=0.5) & 0.052 & 02:53 & \textbf{9.73e-1} & \textbf{5.67e-1} & 577 & 0 \\
					& LS\_GLasso & 1005.356 & 01:28 & 6.47e\texttt{+}0 & 3.53e\texttt{+}1 & 456 & 0 \\
					\bottomrule
				\end{tabular}
			\end{table}
			
			\begin{table}[htbp]
				\centering
				\caption{Comparison of models on data generated with $X$ under covariance structure (C1) and $\beta^*$ under signal pattern (S2) across all error distributions (E1)–(E6).}
				\label{tab:group1.2_new}
				\setlength{\tabcolsep}{4pt}
				\begin{tabular}{c l c c c c c c}
					\toprule
					\textbf{Noise}  & \textbf{Method} & \textbf{$\lambda$} & \textbf{Time} & \textbf{$\ell_2$ Error} & \textbf{ME} & \textbf{FP} & \textbf{FN} \\
					\midrule
					\multirow{6}{*}{$\mathcal{N}(0, 0.25)$}
					& Rank\_Lasso & 0.217 & \textbf{00:05} & 1.36e\texttt{+}0 & 2.45e-1 & \textbf{305} & 0 \\
					& Rank\_GLasso & 0.092 & 00:07 & 3.34e-1 & 5.65e-2 & 514 & 0 \\
					& Huber\_GLasso & 0.009 & 02:46 & 3.95e\texttt{+}0 & 6.25e-1 & 2964 & 0 \\
					& QR\_GLasso ($\tau$=0.25) & 0.042 & 02:52 & \textbf{3.12e-1} & 6.58e-2 & 320 & 0 \\
					& QR\_GLasso ($\tau$=0.5) & 0.063 & 02:53 & 3.31e-1 & 7.24e-2 & 460 & 0 \\
					& LS\_GLasso & 16.838 & 01:03 & \textbf{3.12e-1} & \textbf{5.07e-2} & 466 & 0 \\
					\addlinespace
					\multirow{6}{*}{$\mathcal{N}(0, 1)$}
					& Rank\_Lasso & 0.220 & \textbf{00:04} & 6.56e\texttt{+}0 & 2.03e\texttt{+}0 & \textbf{340} & 1 \\
					& Rank\_GLasso & 0.095 & 00:06 & 7.47e-1 & 2.89e-1 & 538 & 0 \\
					& Huber\_GLasso & 0.009 & 02:43 & 5.06e\texttt{+}0 & 1.23e\texttt{+}0 & 2846 & 0 \\
					& QR\_GLasso ($\tau$=0.25) & 0.033 & 02:52 & 7.72e-1 & 3.61e-1 & 506 & 0 \\
					& QR\_GLasso ($\tau$=0.5) & 0.055 & 02:50 & 7.37e-1 & 3.60e-1 & 440 & 0 \\
					& LS\_GLasso & 45.734 & 01:10 & \textbf{6.76e-1} & \textbf{2.58e-1} & 534 & 0 \\
					\addlinespace
					\multirow{6}{*}{$\mathcal{N}(0, 2)$}
					& Rank\_Lasso & 0.216 & \textbf{00:04} & 8.24e\texttt{+}0 & 3.50e\texttt{+}0 & \textbf{337} & 1 \\
					& Rank\_GLasso & 0.090 & 00:05 & 8.92e-1 & 4.54e-1 & 590 & 0 \\
					& Huber\_GLasso & 0.009 & 02:45 & 5.18e\texttt{+}0 & 2.18e\texttt{+}0 & 2672 & 0 \\
					& QR\_GLasso ($\tau$=0.25) & 0.041 & 02:56 & 9.24e-1 & 5.60e-1 & 440 & 0 \\
					& QR\_GLasso ($\tau$=0.5) & 0.073 & 02:53 & \textbf{8.03e-1} & 4.79e-1 & 360 & 0 \\
					& LS\_GLasso & 44.240 & 01:11 & 8.84e-1 & \textbf{4.38e-1} & 636 & 0 \\
					\addlinespace
					\multirow{6}{*}{$\mathcal{N}_{\text{mix}}$}
					& Rank\_Lasso & 0.218 & \textbf{00:04} & 8.32e\texttt{+}0 & 4.54e\texttt{+}0 & \textbf{338} & 1 \\
					& Rank\_GLasso & 0.094 & 00:06 & \textbf{7.77e-1} & \textbf{3.56e-1} & 687 & 0 \\
					& Huber\_GLasso & 0.009 & 02:49 & 4.72e\texttt{+}0 & 2.54e\texttt{+}0 & 2907 & 0 \\
					& QR\_GLasso ($\tau$=0.25) & 0.067 & 02:53 & 9.19e-1 & 6.57e-1 & 600 & 0 \\
					& QR\_GLasso ($\tau$=0.5) & 0.040 & 02:56 & 7.98e-1 & 4.06e-1 & 744 & 0 \\
					& LS\_GLasso & 122.444 & 01:14 & 1.02e\texttt{+}0 & 7.10e-1 & 500 & 0 \\
					\addlinespace
					\multirow{6}{*}{$ t_4$}
					& Rank\_Lasso & 0.221 & \textbf{00:03} & 4.12e\texttt{+}0 & 2.54e\texttt{+}0 & \textbf{327} & 0 \\
					& Rank\_GLasso & 0.097 & 00:06 & 8.32e-1 & 4.01e-1 & 562 & 0 \\
					& Huber\_GLasso & 0.009 & 02:47 & 4.79e\texttt{+}0 & 2.29e\texttt{+}0 & 2981 & 0 \\
					& QR\_GLasso ($\tau$=0.25) & 0.074 & 02:52 & 8.74e-1 & 6.96e-1 & 400 & 0 \\
					& QR\_GLasso ($\tau$=0.5) & 0.048 & 02:51 & \textbf{8.21e-1} & \textbf{3.99e-1} & 459 & 0 \\
					& LS\_GLasso & 46.660 & 01:09 & 8.82e-1 & 4.75e-1 & 481 & 0 \\
					\addlinespace
					\multirow{6}{*}{$\text{Cauchy}(0, 1)$}
					& Rank\_Lasso & 0.216 & \textbf{00:03} & 1.16e\texttt{+}1 & 1.29e\texttt{+}1 & 347 & 18 \\
					& Rank\_GLasso & 0.092 & 00:07 & 1.13e\texttt{+}0 & 6.35e-1 & 574 & 0 \\
					& Huber\_GLasso & 0.007 & 02:52 & 5.91e\texttt{+}0 & 8.34e\texttt{+}0 & 3368 & 0 \\
					& QR\_GLasso ($\tau$=0.25) & 0.060 & 02:51 & 1.14e\texttt{+}0 & 1.08e\texttt{+}0 & \textbf{340} & 0 \\
					& QR\_GLasso ($\tau$=0.5) & 0.052 & 02:55 & \textbf{9.64e-1} & \textbf{5.59e-1} & 557 & 0 \\
					& LS\_GLasso & 943.661 & 01:31 & 6.53e\texttt{+}0 & 3.41e\texttt{+}1 & 474 & 0 \\
					\bottomrule
				\end{tabular}
			\end{table}

			\paragraph{Estimation accuracy metric.} We evaluate the estimation performance of the competing models using the following three metrics. The first is the $\ell_2$ error, defined as $\|\hat{\beta} - \beta^*\|_2$, which measures the Euclidean distance between the estimated coefficient vector $\hat{\beta}$ and the true coefficient vector $\beta^*$. The second is the model error (ME), given by $(\hat{\beta} - \beta^*)^\intercal \Sigma_X (\hat{\beta} - \beta^*)$, where $\Sigma_X$ is the sample covariance matrix of the design matrix $X$. This metric quantifies the prediction discrepancy with respect to the observed design. The third metric assesses the support recovery performance through the number of false positives (FP) and false negatives (FN). Specifically, FP is the count of incorrectly selected variables, computed as $\sum_{j=1}^p \mathbb{I}\{\beta_j^* = 0 \text{ and } \hat{\beta}_j \neq 0\}$, while FN is the count of missed true variables, computed as $\sum_{j=1}^p \mathbb{I}\{\beta_j^* \neq 0 \text{ and } \hat{\beta}_j = 0\}$.
			
			To provide a comprehensive comparison, we consider all combinations of the covariance structures (C1–C2) for $X$, the signal patterns (S1–S2) for $\beta^*$, and the error distributions (E1–E6). 
			The corresponding results with $n=500$ and $p=8000$ are summarized in Tables~\ref{tab:group1.1_new}--\ref{tab:group2.2_new}, each focusing on a $(\text{covariance}, \text{signal})$ setting under a variety of noise distributions. 
			For each setting, we compare the six models by presenting their estimation accuracy metrics, computational time, and the selected regularization parameter~$\lambda$. 
			The reported computational time includes both the cost of selecting~$\lambda$ and the final fitting time. In each table, the best (i.e., smallest) values for time, $\ell_2$ error, model error (ME), and false positives (FP) are highlighted in bold for ease of comparison.

			\begin{table}[htbp]
				\centering
				\caption{Comparison of models on data generated with $X$ under covariance structure (C2) and $\beta^*$ under signal pattern (S1) across all error distributions (E1)–(E6).}
				\label{tab:group2.1_new}
				\setlength{\tabcolsep}{4pt}
				\begin{tabular}{c l c c c c c c}
					\toprule
					\textbf{Noise}  & \textbf{Method} & \textbf{$\lambda$} & \textbf{Time} & \textbf{$\ell_2$ Error} & \textbf{ME} & \textbf{FP} & \textbf{FN} \\
					\midrule
					\multirow{6}{*}{$\mathcal{N}(0, 0.25)$}
					& Rank\_Lasso & 0.223 & \textbf{00:02} & 7.25e-1 & 1.07e-1 &   1 & 0 \\
					& Rank\_GLasso & 0.153 & \textbf{00:02} & 4.53e-1 & 5.71e-2 &   \textbf{0} & 0 \\
					& Huber\_GLasso & 0.030 & 02:05 & \textbf{4.27e-1} & 5.23e-2 & 530 & 0 \\
					& QR\_GLasso ($\tau$=0.25) & 0.039 & 12:51 & 5.01e-1 & 6.17e-2 &   \textbf{0} & 0 \\
					& QR\_GLasso ($\tau$=0.5) & 0.039 & 14:35 & 5.17e-1 & 6.12e-2 &  36 & 0 \\
					& LS\_GLasso & 14.117 & 00:44 & 5.45e-1 & \textbf{4.65e-2} & 430 & 0 \\
					\addlinespace
					\multirow{6}{*}{$\mathcal{N}(0, 1)$}
					& Rank\_Lasso & 0.222 & \textbf{00:02} & 1.61e\texttt{+}0 & 4.41e-1 &   \textbf{1} & 0 \\
					& Rank\_GLasso & 0.152 & \textbf{00:02} & 6.51e-1 & 1.74e-1 &  19 & 0 \\
					& Huber\_GLasso & 0.035 & 02:16 & \textbf{5.38e-1} & 1.45e-1 & 153 & 0 \\
					& QR\_GLasso ($\tau$=0.25) & 0.041 & 07:14 & 8.74e-1 & 2.78e-1 &  20 & 0 \\
					& QR\_GLasso ($\tau$=0.5) & 0.038 & 08:27 & 8.25e-1 & 1.57e-1 &  58 & 0 \\
					& LS\_GLasso & 36.192 & 00:54 & 7.25e-1 & \textbf{1.19e-1} &  97 & 0 \\
					\addlinespace
					\multirow{6}{*}{$\mathcal{N}(0, 2)$}
					& Rank\_Lasso & 0.226 & \textbf{00:02} & 2.38e\texttt{+}0 & 9.10e-1 &   \textbf{0} & 0 \\
					& Rank\_GLasso & 0.153 & \textbf{00:02} & 9.13e-1 & 2.96e-1 &   \textbf{0} & 0 \\
					& Huber\_GLasso & 0.027 & 02:21 & \textbf{7.78e-1} & \textbf{2.52e-1} & 774 & 0 \\
					& QR\_GLasso ($\tau$=0.25) & 0.034 & 07:29 & 1.16e\texttt{+}0 & 3.34e-1 &  57 & 0 \\
					& QR\_GLasso ($\tau$=0.5) & 0.037 & 07:56 & 1.15e\texttt{+}0 & 2.84e-1 & 162 & 0 \\
					& LS\_GLasso & 35.810 & 01:00 & 1.18e\texttt{+}0 & 3.05e-1 & 700 & 0 \\
					\addlinespace
					\multirow{6}{*}{$\mathcal{N}_{\text{mix}}$}
					& Rank\_Lasso & 0.223 & \textbf{00:02} & 1.93e\texttt{+}0 & 7.07e-1 &   \textbf{0} & 0 \\
					& Rank\_GLasso & 0.151 & 00:03 & 7.56e-1 & 2.61e-1 &   \textbf{0} & 0 \\
					& Huber\_GLasso & 0.013 & 02:15 & \textbf{7.28e-1} & 5.78e-1 & 2242 & 0 \\
					& QR\_GLasso ($\tau$=0.25) & 0.034 & 07:12 & 9.20e-1 & 2.70e-1 &  80 & 0 \\
					& QR\_GLasso ($\tau$=0.5) & 0.037 & 09:30 & 9.71e-1 & \textbf{2.21e-1} & 115 & 0 \\
					& LS\_GLasso & 99.537 & 01:03 & 1.59e\texttt{+}0 & 9.05e-1 & 319 & 0 \\
					\addlinespace
					\multirow{6}{*}{$ t_4$}
					& Rank\_Lasso & 0.224 &\textbf{ 00:02} & 2.10e\texttt{+}0 & 6.28e-1 &   1 & 0 \\
					& Rank\_GLasso & 0.151 & \textbf{00:02} & 7.82e-1 & 2.34e-1 &   \textbf{0} & 0 \\
					& Huber\_GLasso & 0.032 & 02:22 & \textbf{5.98e-1} & \textbf{1.50e-1} & 171 & 0 \\
					& QR\_GLasso ($\tau$=0.25) & 0.035 & 07:52 & 1.02e\texttt{+}0 & 2.62e-1 &  39 & 0 \\
					& QR\_GLasso ($\tau$=0.5) & 0.041 & 07:52 & 9.53e-1 & 2.11e-1 &  40 & 0 \\
					& LS\_GLasso & 35.526 & 01:00 & 1.21e\texttt{+}0 & 3.35e-1 & 646 & 0 \\
					\addlinespace
					\multirow{6}{*}{$\text{Cauchy}(0, 1)$}
					& Rank\_Lasso & 0.221 & \textbf{00:02} & 4.83e\texttt{+}0 & 3.37e\texttt{+}0 &   \textbf{0} & 0 \\
					& Rank\_GLasso & 0.152 & 00:05 & 9.89e-1 & 6.63e-1 &   \textbf{0} & 0 \\
					& Huber\_GLasso & 0.040 & 02:30 & \textbf{7.21e-1} & \textbf{3.10e-1} &  41 & 0 \\
					& QR\_GLasso ($\tau$=0.25) & 0.032 & 07:11 & 1.44e\texttt{+}0 & 1.12e\texttt{+}0 & 199 & 0 \\
					& QR\_GLasso ($\tau$=0.5) & 0.046 & 06:43 & 1.02e\texttt{+}0 & 3.60e-1 &   \textbf{0} & 0 \\
					& LS\_GLasso & 702.484 & 01:24 & 4.38e\texttt{+}0 & 5.69e\texttt{+}1 & 746 & 0 \\
					\bottomrule
				\end{tabular}
			\end{table}

			\begin{table}[htbp]
				\centering
				\caption{Comparison of models on data generated with $X$ under covariance structure (C2) and $\beta^*$ under signal pattern (S2) across all error distributions (E1)–(E6).}
				\label{tab:group2.2_new}
				\setlength{\tabcolsep}{4pt}
				\begin{tabular}{c l c c c c c c}
					\toprule
					\textbf{Noise}  & \textbf{Method} & \textbf{$\lambda$} & \textbf{Time} & \textbf{$\ell_2$ Error} & \textbf{ME} & \textbf{FP} & \textbf{FN} \\
					\midrule
					\multirow{6}{*}{$\mathcal{N}(0, 0.25)$}
					& Rank\_Lasso & 0.223 & \textbf{00:0}2 & 7.25e-1 & 1.07e-1 & 1 & 0 \\
					& Rank\_GLasso & 0.153 & \textbf{00:02} & 4.45e-1 & 5.63e-2 & \textbf{0} & 0 \\
					& Huber\_GLasso & 0.030 & 02:02 & \textbf{4.14e-1} & 5.06e-2 & 529 & 0 \\
					& QR\_GLasso ($\tau$=0.25) & 0.039 & 11:54 & 4.92e-1 & 6.15e-2 & \textbf{0} & 0 \\
					& QR\_GLasso ($\tau$=0.5) & 0.039 & 15:30 & 5.16e-1 & 6.04e-2 & 36 & 0 \\
					& LS\_GLasso & 14.055 & 00:42 & 5.38e-1 & \textbf{4.60e-2} & 411 & 0 \\
					\addlinespace
					\multirow{6}{*}{$\mathcal{N}(0, 1)$}
					& Rank\_Lasso & 0.222 & \textbf{00:02} & 1.61e\texttt{+}0 & 4.41e-1 & \textbf{1} & 0 \\
					& Rank\_GLasso & 0.152 & \textbf{00:02} & 6.21e-1 & 1.72e-1 & 19 & 0 \\
					& Huber\_GLasso & 0.032 & 02:14 & \textbf{5.33e-1} & 1.33e-1 & 199 & 0 \\
					& QR\_GLasso ($\tau$=0.25) & 0.041 & 07:50 & 8.23e-1 & 2.69e-1 & 20 & 0 \\
					& QR\_GLasso ($\tau$=0.5) & 0.035 & 07:43 & 8.22e-1 & 1.49e-1 & 78 & 0 \\
					& LS\_GLasso & 35.694 & 00:53 & 6.99e-1 & \textbf{1.17e-1} & 96 & 0 \\
					\addlinespace
					\multirow{6}{*}{$\mathcal{N}(0, 2)$}
					& Rank\_Lasso & 0.226 & 00:02 & 2.38e\texttt{+}0 & 9.10e-1 & \textbf{0} & 0 \\
					& Rank\_GLasso & 0.153 & 00:02 & 8.82e-1 & 2.89e-1 & \textbf{0} & 0 \\
					& Huber\_GLasso & 0.028 & 02:40 & \textbf{7.56e-1} & \textbf{2.47e-1} & 575 & 0 \\
					& QR\_GLasso ($\tau$=0.25) & 0.036 & 08:14 & 1.12e\texttt{+}0 & 3.26e-1 & 57 & 0 \\
					& QR\_GLasso ($\tau$=0.5) & 0.038 & 06:46 & 1.10e\texttt{+}0 & 2.75e-1 & 139 & 0 \\
					& LS\_GLasso & 35.748 & 01:00 & 1.16e\texttt{+}0 & 3.01e-1 & 699 & 0 \\
					\addlinespace
					\multirow{6}{*}{$\mathcal{N}_{\text{mix}}$}
					& Rank\_Lasso & 0.223 & \textbf{00:02} & 1.93e\texttt{+}0 & 7.07e-1 & \textbf{0} & 0 \\
					& Rank\_GLasso & 0.151 & 00:03 & 7.34e-1 & 2.56e-1 & \textbf{0} & 0 \\
					& Huber\_GLasso & 0.013 & 02:12 & \textbf{7.26e-1} & 5.03e-1 & 2324 & 0 \\
					& QR\_GLasso ($\tau$=0.25) & 0.034 & 07:40 & 8.93e-1 & 2.63e-1 & 80 & 0 \\
					& QR\_GLasso ($\tau$=0.5) & 0.041 & 09:07 & 9.07e-1 & \textbf{2.19e-1} & 88 & 0 \\
					& LS\_GLasso & 97.747 & 01:04 & 1.55e\texttt{+}0 & 9.04e-1 & 338 & 0 \\
					\addlinespace
					\multirow{6}{*}{$ t_4$}
					& Rank\_Lasso & 0.224 & \textbf{00:02} & 2.10e\texttt{+}0 & 6.28e-1 & 1 & 0 \\
					& Rank\_GLasso & 0.151 & 00:03 & 7.47e-1 & 2.28e-1 & \textbf{0} & 0 \\
					& Huber\_GLasso & 0.032 & 02:40 & \textbf{5.74e-1} & \textbf{1.49e-1} & 192 & 0 \\
					& QR\_GLasso ($\tau$=0.25) & 0.035 & 07:32 & 9.91e-1 & 2.57e-1 & 40 & 0 \\
					& QR\_GLasso ($\tau$=0.5) & 0.041 & 08:20 & 9.11e-1 & 2.05e-1 & \textbf{40} & 0 \\
					& LS\_GLasso & 35.548 & 00:59 & 1.15e\texttt{+}0 & 3.25e-1 & 642 & 0 \\
					\addlinespace
					\multirow{6}{*}{$\text{Cauchy}(0, 1)$}
					& Rank\_Lasso & 0.221 & \textbf{00:02} & 4.83e\texttt{+}0 & 3.37e\texttt{+}0 & \textbf{0} & 0 \\
					& Rank\_GLasso & 0.152 & 00:05 & 9.54e-1 & 6.54e-1 & \textbf{0} & 0 \\
					& Huber\_GLasso & 0.038 & 02:31 & \textbf{6.91e-1} & \textbf{2.90e-1} & 42 & 0 \\
					& QR\_GLasso ($\tau$=0.25) & 0.032 & 06:38 & 1.40e\texttt{+}0 & 1.12e\texttt{+}0 & 182 & 0 \\
					& QR\_GLasso ($\tau$=0.5) & 0.046 & 06:13 & 9.86e-1 & 3.51e-1 & \textbf{0} & 0 \\
					& LS\_GLasso & 662.720 & 01:30 & 4.61e\texttt{+}0 & 6.51e\texttt{+}1 & 751 & 0 \\
					\bottomrule
				\end{tabular}
			\end{table}

			The simulation results in Tables~\ref{tab:group1.1_new}--\ref{tab:group2.2_new} demonstrate the clear advantages of the proposed Rank\_GLasso across all settings. In terms of computational efficiency, Rank\_GLasso and Rank\_Lasso are orders of magnitude faster than than the competing methods included in our comparison (LS\_GLasso and the three \texttt{hrqglas} estimators), all of which rely on cross-validation to select  regularization parameter, as the rank-based methods determine the regularization parameter directly from data without costly tuning. This speed advantage is consistent across all covariance structures, signal patterns and noise conditions.
			
			Regarding estimation accuracy, Rank\_GLasso consistently achieves competitive performance to the best-performing method across all four tables. Under covariance structure (C1), its $\ell_2$ error and model error are consistently close to the best results across settings, where the top performer may vary among Rank\_GLasso, finely tuned LS\_GLasso and QR\_GLasso ($\tau=0.5$). Under (C2), its accuracy is only slightly below the top-performer Huber\_GLasso, but consistently outperforms the remaining estimators.
			Notably, Rank\_GLasso exhibits strong robustness across all noise settings, including both Gaussian noise and heavy-tailed or contaminated distributions (e.g., Cauchy). Overall, Rank\_GLasso provides reliable and accurate estimates across diverse data-generating mechanisms.
			
			In support recovery, Rank\_GLasso strikes an effective balance, consistently achieving zero false negatives while keeping false positives at a moderate level. In particular, under the AR(1) (Table~\ref{tab:group2.1_new}--\ref{tab:group2.2_new}) covariance structure, it frequently attains zero false positives, demonstrating the benefit of group-level sparsity in eliminating irrelevant variables. Rank\_Lasso achieves even fewer false positives but at the expense of higher estimation error and occasional missed signals.}
		
		Overall, the results indicate that Rank\_GLasso achieves an excellent balance between computational efficiency and estimation accuracy. It avoids the costly parameter tuning required by LS\_GLasso and the three \texttt{hrqglas} estimators, resulting in computational time comparable to the fastest solver Rank\_Lasso, while delivering estimation accuracy close to the best-performing methods across a wide range of settings.

		\subsection{Scalability of PALM for Group Lasso Regularized Rank Regression}
		In this subsection, we investigate the scalability of our proposed PALM algorithm for solving the group Lasso regularized rank regression model \eqref{mod:rankgrouplasso0}. We follow the same data generation procedure as in Section~\ref{subsec:rank_grouptable}, where the design matrix $X$ is generated under structure (C1), and the true coefficient vector $\beta^*$ follows signal pattern (S1). Here, we slightly modify (S1) by setting $g = p/100$ instead of $p/20$, so as to allow for much larger dimensions while keeping the number of groups at a reasonable scale. Experiments are carried out for each of the noise distributions (E1)–(E6).
		
		We present two types of experiments to systematically evaluate the scalability of PALM. In the first set of experiments, we fix the sample size $n$ and increase the dimension $p$ from moderately large ($p=50000$) to extremely high-dimensional ($p=400000$). To illustrate the role of sample size, we consider both $n=500$ and $n=2000$. We conduct experiments under all six error distributions (E1)–(E6), and the results are summarized in Figure~\ref{fig:scaling_p}. Figure~\ref{fig:scaling_p}(a) shows that for $n=500$, computational time grows nearly linearly with dimension $p$ and stays below 70 seconds under all noise distributions. Even at the largest scale ($p=400000$), PALM remains robust and efficient, demonstrating its practical applicability in ultra-high dimensional settings. For $n=2000$ (Figure~\ref{fig:scaling_p}(b)), corresponding to a large-sample regime, the computational time remains consistently within 300 seconds as $p$ increases from $50000$ to $400000$, demonstrating strong scalability in high dimensions. This efficiency is achieved through a tailored computational strategy for solving the linear systems in Algorithm~\ref{alg:ssn}, which exploits second-order sparsity to significantly reduce the computational cost (see Appendix~\ref{sec:implementation}), so that the overall cost is governed primarily by $n$ and the problem sparsity $r$ rather than the ambient dimension $p$.
		\begin{figure}[H]
			\centering
			\includegraphics[width=0.4\textwidth]{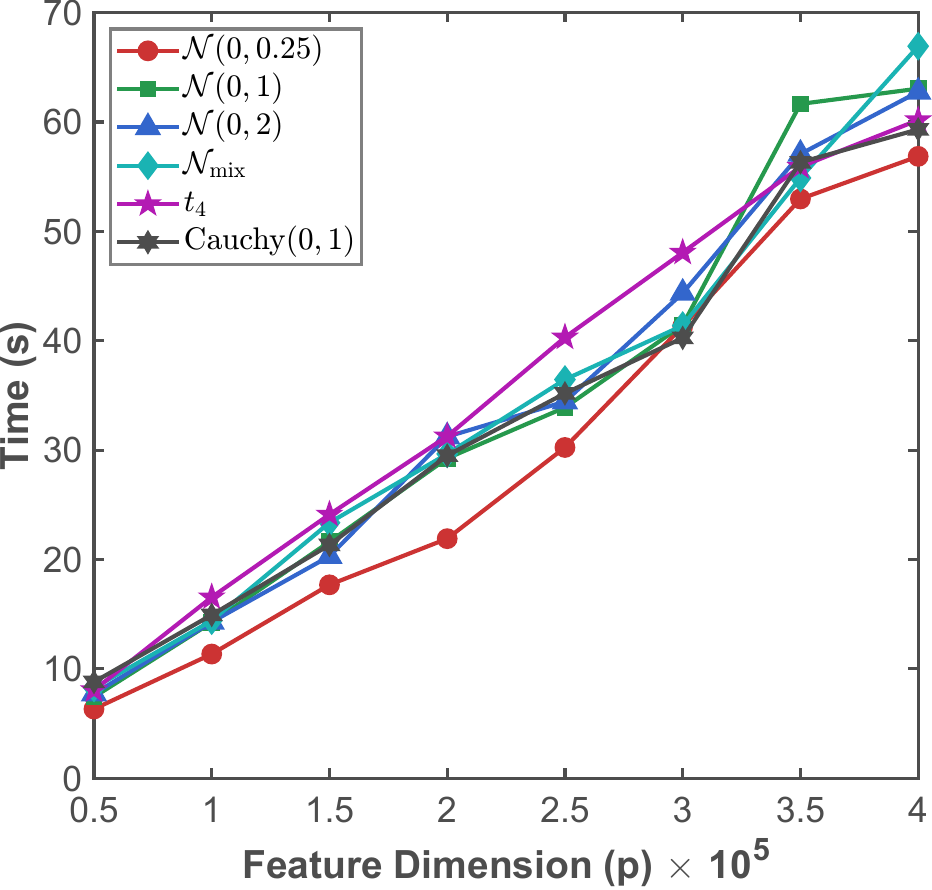}
			\quad 
			\includegraphics[width=0.4\textwidth]{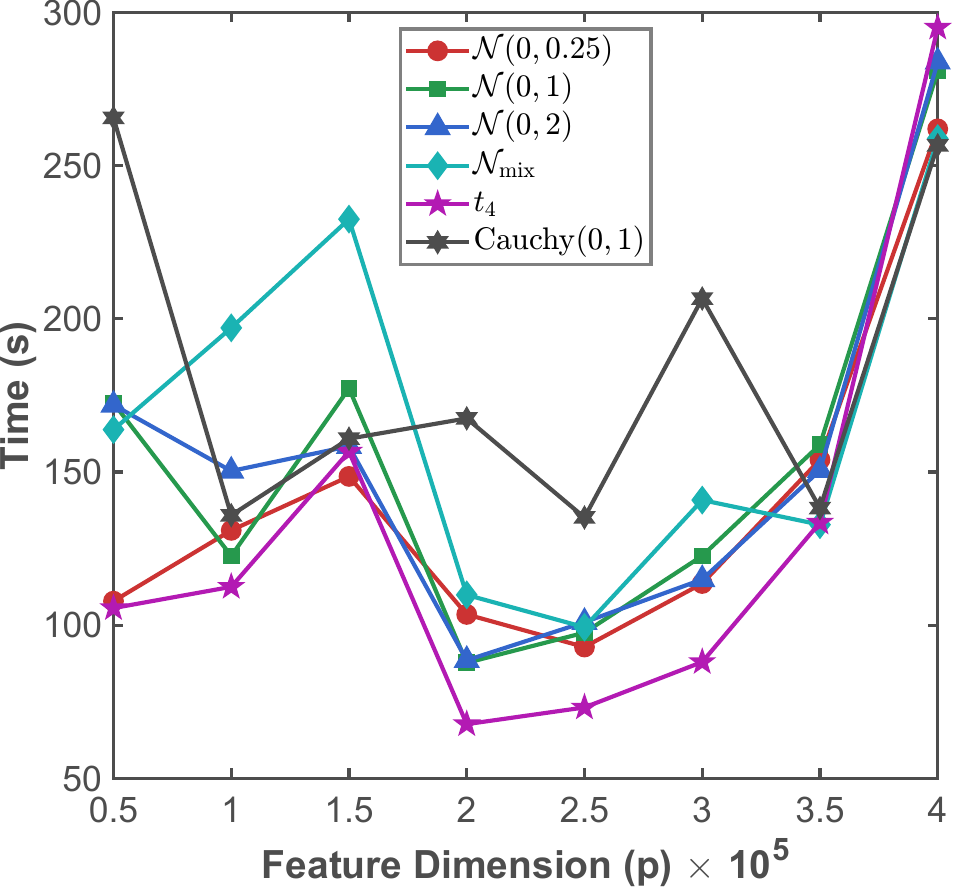}
			\caption{Computational time of PALM versus the dimension $p$ under different sample size scales. Left panel (a): fixed sample size $n=500$; right panel (b): fixed sample size $n=2000$.}
			\label{fig:scaling_p}
		\end{figure}
		
		In the second set of experiments, we fix the dimension $p$ and increase the sample size $n$ to examine the effect of sample growth. Two representative dimensions are considered, $p=8000$ and $p=100000$, with $n$ varying from 250 to 2000. Figure~\ref{fig:scaling_n} summarizes the outcomes obtained under six error distributions (E1)–(E6). It can be seen  that the computational time  increases approximately linearly with $n$ for both $p=8000$ and $p=100000$, as expected, while remaining low overall; even for the largest problem with $p=100000$ and $n=2000$, the runtime is under 90 seconds.

		\begin{figure}[H]
			\centering
			\includegraphics[width=0.4\textwidth]{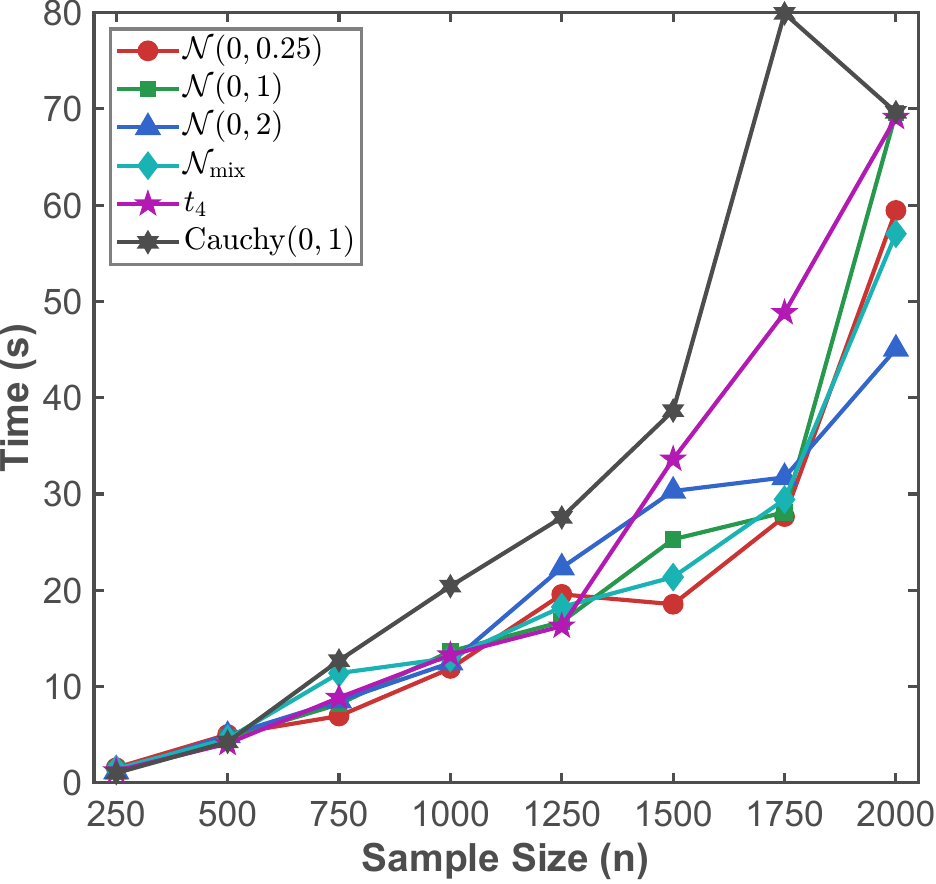}
			\quad
			\includegraphics[width=0.4\textwidth]{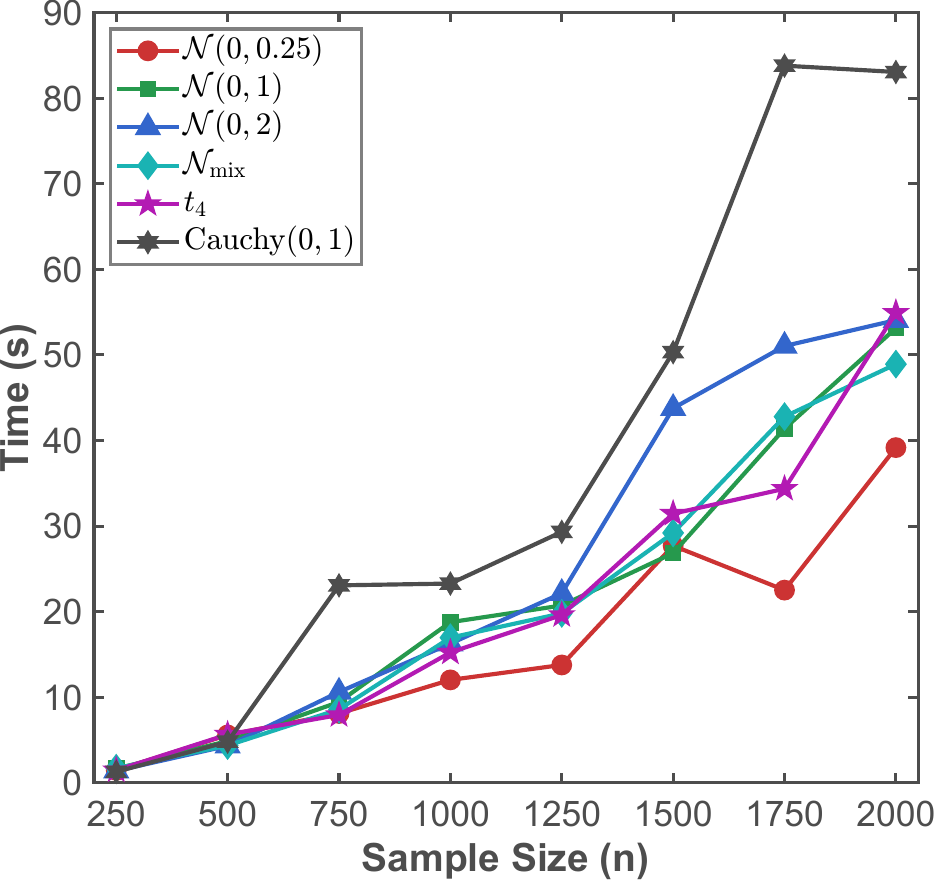}
			\caption{Computational time of PALM versus the sample size $n$ under different dimension scales. Left panel (a): fixed dimension $p=8000$; right panel (b): fixed  dimension $p=100000$.} 
			\label{fig:scaling_n}
		\end{figure}	
		
		Overall, the proposed PALM framework exhibits robust scalability and maintains low computational cost in high-dimensional settings, illustrating its practical utility for high-dimensional group-sparse regression.
		\subsection{\texorpdfstring{Benchmarking PALM on $\ell_1$-Regularized Rank Regression}{Benchmarking PALM on L1-Regularized Rank Regression}} 
		
		Since no existing algorithms are tailored to the group Lasso regularized rank regression problem, a direct comparison is not available. To facilitate a meaningful benchmark, we instead consider the $\ell_1$-regularized setting, obtained from \eqref{mod:rankgrouplasso0} by setting $\Psi(\beta)=\|\beta\|_1$. This allows us to compare with the state-of-the-art solver and also illustrates the applicability of our algorithm to general nonsmooth convex regularizers. Specifically, we compare our proposed PALM framework with the PPMM algorithm of \cite{tang2023proximal}, which has been shown to outperform standard solvers such as ADMM and Gurobi for $\ell_1$-regularized rank regression and thus serves as a strong baseline.

		\paragraph{Data Generation.}
		To evaluate the performance of our algorithm for $\ell_1$-regularized rank regression, we largely follow the data generation framework described in Section~\ref{subsec:rank_grouptable} on group Lasso regularized setting. The noise distributions remain the same (settings (E1)--(E6)). For the covariates, we introduce an additional design (C3) from \cite{tang2023proximal} to facilitate direct comparison with PPMM. For the coefficients, we add elementwise-sparse designs (S3)--(S4) to match the $\ell_1$ setting.
		
		The rows of the design matrix $X$ are sampled i.i.d. from $\mathcal{N}(0,\Sigma)$ with
		\begin{itemize}[leftmargin=3.5em, topsep=3pt, itemsep=1pt]
			\item[(C3)] Equi-correlation: $\Sigma_{ij} = 0.5$ for all $i \neq j$, and $\Sigma_{ii} = 1$.
		\end{itemize}
		
		The true coefficient vector $\beta^* \in \mathbb{R}^p$ exhibits elementwise sparsity with:
		\begin{itemize}[leftmargin=3.5em, topsep=3pt, itemsep=1pt]
			\item[(S3)] Sparse signal: 
			$$
			\beta^* = (\underbrace{\sqrt{3},\ldots,\sqrt{3}}_{k\ \text{entries}},0,\ldots,0)^\intercal, \text{~where~}
			k = \begin{cases}
				3 & \text{if } p \leq 8000,\\
				\lfloor 0.001p \rfloor & \text{if } p > 8000.
			\end{cases}
			$$
			
			\item[(S4)] Decaying signal: 
			$
			\beta^* = (\underbrace{2,\ldots,2}_{4m},\underbrace{1.75,\ldots,1.75}_{3m},\ldots,\underbrace{0.25,\ldots,0.25}_{3m},0,\ldots,0)^\intercal,
			$
			where the scaling factor
			$
			m = \begin{cases}
				1 & \text{if } p \leq 8000,\\
				\lceil p/2500 \rceil & \text{if } p > 8000.
			\end{cases}
			$
		\end{itemize}

		To access solution quality, we report both the KKT residual $\eta_{\text{kkt}}$ in \eqref{eq: def_etakkt} and the relative duality gap defined as
		\begin{equation*}
			\text{Relgap} := \frac{|\text{pobj} - \text{dobj}|}{1 + |\text{pobj}| + |\text{dobj}|},
		\end{equation*}
		where
		\[
		\text{pobj} := L(X\beta - y) + \lambda\Psi(\beta) \qquad \text{and} \qquad
		\text{dobj} := -\langle y,z\rangle -L^*(z) - \lambda \Psi^*(-\lambda^{-1}X^{\intercal}z)
		\]
		are the primal and dual objective values, respectively. 
		
		Since the implementation of PPMM is not publicly available, we reproduced the algorithm based on the description in \cite{tang2023proximal}. As shown in Table~\ref{tab:setting2_2000_8000}, which corresponds exactly to the setting in \cite{tang2023proximal}, our implementation of PPMM achieves running times that are significantly faster than those reported in the original paper. This improvement not only supports the validity of our reproduction but also provides a strong baseline for fair comparison.

		\begin{table}[H]
			\centering
			\caption{Comparison of PALM and PPMM for solving $\ell_1$-regularized rank regression with a $2000 \times 8000$ design matrix under covariance structure (C3), evaluated across different signal patterns (S3)--(S4) and error distributions (E1)--(E6).}
			\label{tab:setting2_2000_8000}
			\begin{tabular}{@{}l l c ccc ccc@{}}
				\toprule
				\multirow{2}{*}{\textbf{Signal}} & \multirow{2}{*}{\textbf{Noise}} & \multirow{2}{*}{\textbf{$\lambda$}} & \multicolumn{2}{c}{\textbf{Time}} & \multicolumn{2}{c}{\textbf{$\eta_{\mathrm{kkt}}$}} & \multicolumn{2}{c}{\textbf{Relgap}} \\
				\cmidrule(lr){4-5} \cmidrule(lr){6-7} \cmidrule(lr){8-9}
				& & & PALM & PPMM & PALM & PPMM & PALM & PPMM \\
				\midrule
				\multirow{6}{*}{(S3)}
				& $\mathcal{N}(0, 0.25)$ & 0.104 & 00:04 & 00:17 & 9.84e-7 & 9.83e-7 & 2.27e-8 & 5.02e-10 \\   
				& $\mathcal{N}(0, 1)$ & 0.102 & 00:04 & 00:13 & 9.90e-7 & 8.74e-7 & 5.94e-8 & 9.23e-10 \\   
				& $\mathcal{N}(0, 2)$ & 0.102 & 00:04 & 00:10 & 9.73e-7 & 9.88e-7 & 9.05e-8 & 9.21e-9 \\   
				& $\mathcal{N}_{\mathrm{mix}}$ & 0.100 & 00:04 & 00:14 & 9.75e-7 & 9.94e-7 & 3.44e-7 & 1.27e-9 \\   
				& $t_4$ & 0.101 & 00:04 & 00:15 & 9.54e-7 & 9.23e-7 & 6.34e-8 & 4.47e-9 \\   
				& $\mathrm{Cauchy}(0,1)$ & 0.103 & 00:11 & 00:13 & 9.37e-7 & 4.98e-7 & 1.27e-6 & 5.48e-8 \\   
				\midrule
				\multirow{6}{*}{(S4)}
				& $\mathcal{N}(0, 0.25)$ & 0.105 & 00:07 & 00:28 & 9.99e-7 & 9.20e-7 & 2.38e-9 & 3.17e-9 \\   
				& $\mathcal{N}(0, 1)$ & 0.103 & 00:07 & 00:25 & 9.78e-7 & 8.89e-7 & 8.66e-9 & 1.51e-9 \\   
				& $\mathcal{N}(0, 2)$ & 0.102 & 00:09 & 00:29 & 9.78e-7 & 9.10e-7 & 2.39e-8 & 2.57e-9 \\   
				& $\mathcal{N}_{\mathrm{mix}}$ & 0.100 & 00:12 & 00:31 & 9.44e-7 & 8.98e-7 & 3.56e-8 & 3.43e-9 \\   
				& $t_4$ & 0.100 & 00:09 & 00:26 & 9.52e-7 & 8.24e-7 & 7.34e-9 & 1.78e-8 \\   
				& $\mathrm{Cauchy}(0,1)$ & 0.103 & 00:21 & 00:25 & 7.28e-7 & 7.20e-7 & 2.96e-7 & 8.45e-8 \\   
				\bottomrule
			\end{tabular}
		\end{table}	
		
		\begin{table}[htbp]
			\centering
			\caption{Comparison of PALM and PPMM for solving $\ell_1$-regularized rank regression with design matrix $X$ under covariance structure (C2) and signal pattern (S3), evaluated across varying problem dimensions and error distributions (E1)--(E6).}
			\label{tab:bigsettingC2S3}
			\setlength{\tabcolsep}{4pt}
			\begin{tabular}{@{}l l c ccc ccc@{}}
				\toprule
				\multirow{2}{*}{\textbf{Noise}} & \multirow{2}{*}{\textbf{Dimension}} & \multirow{2}{*}{\textbf{$\lambda$}} & \multicolumn{2}{c}{\textbf{Time}} & \multicolumn{2}{c}{\textbf{$\eta_{\mathrm{kkt}}$}} & \multicolumn{2}{c}{\textbf{Relgap}} \\
				\cmidrule(lr){4-5} \cmidrule(lr){6-7} \cmidrule(lr){8-9}
				& & & PALM & PPMM & PALM & PPMM & PALM & PPMM \\
				\midrule
				\multirow{3}{*}{$\mathcal{N}(0, 0.25)$} 
				& (2000, 20000) & 0.116 & 00:06 & 00:50 & 6.54e-7 & 3.70e-7 & 1.17e-7 & 1.12e-11 \\   
				& (2000, 40000) & 0.119 & 00:13 & 01:22 & 3.72e-7 & 4.27e-7 & 4.48e-9 & 5.32e-10 \\   
				& (4000, 40000) & 0.085 & 00:26 & 04:02 & 4.48e-7 & 1.32e-7 & 1.87e-7 & 5.79e-8 \\   
				\addlinespace
				\multirow{3}{*}{$\mathcal{N}(0, 1)$} 
				& (2000, 20000) & 0.117 & 00:05 & 00:35 & 6.04e-7 & 5.96e-7 & 1.29e-7 & 1.35e-9 \\   
				& (2000, 40000) & 0.121 & 00:11 & 01:09 & 5.26e-7 & 4.38e-7 & 9.76e-9 & 6.23e-9 \\   
				& (4000, 40000) & 0.086 & 00:17 & 03:54 & 5.42e-7 & 1.59e-7 & 4.45e-8 & 3.63e-7 \\   
				\addlinespace
				\multirow{3}{*}{$\mathcal{N}(0, 2)$} 
				& (2000, 20000) & 0.117 & 00:06 & 00:40 & 8.76e-7 & 5.99e-7 & 3.32e-7 & 4.47e-7 \\   
				& (2000, 40000) & 0.122 & 00:13 & 01:12 & 5.87e-7 & 5.45e-7 & 4.28e-8 & 2.46e-8 \\   
				& (4000, 40000) & 0.085 & 00:17 & 02:58 & 2.64e-7 & 1.92e-7 & 5.96e-8 & 6.39e-7 \\   
				\addlinespace
				\multirow{3}{*}{$N_{\mathrm{mix}}$} 
				& (2000, 20000) & 0.116 & 00:04 & 00:32 & 8.96e-7 & 6.57e-7 & 2.11e-7 & 1.59e-8 \\   
				& (2000, 40000) & 0.121 & 00:08 & 01:05 & 9.19e-7 & 4.59e-7 & 2.21e-8 & 3.97e-9 \\   
				& (4000, 40000) & 0.085 & 00:17 & 03:05 & 9.12e-7 & 1.10e-7 & 6.26e-8 & 1.74e-7 \\   
				\addlinespace
				\multirow{3}{*}{$t_4$} 
				& (2000, 20000) & 0.116 & 00:05 & 00:34 & 7.52e-7 & 5.99e-7 & 2.65e-7 & 6.79e-8 \\   
				& (2000, 40000) & 0.121 & 00:07 & 00:54 & 8.34e-7 & 4.58e-7 & 2.72e-7 & 4.63e-9 \\   
				& (4000, 40000) & 0.085 & 00:16 & 03:16 & 3.08e-7 & 1.17e-7 & 4.05e-8 & 1.22e-7 \\   
				\addlinespace
				\multirow{3}{*}{$\mathrm{Cauchy}(0,1)$} 
				& (2000, 20000) & 0.118 & 00:06 & 00:36 & 8.49e-7 & 5.15e-7 & 1.82e-6 & 1.59e-7 \\   
				& (2000, 40000) & 0.121 & 00:12 & 01:07 & 4.75e-7 & 3.10e-7 & 6.29e-8 & 1.69e-8 \\   
				& (4000, 40000) & 0.085 & 00:20 & 03:40 & 2.70e-7 & 5.59e-7 & 9.08e-7 & 1.18e-5 \\   
				\bottomrule
			\end{tabular}
		\end{table}

		Table~\ref{tab:setting2_2000_8000} first reports results under the same experimental setting as in \cite{tang2023proximal}. We observe that PALM consistently runs faster than PPMM while attaining comparable accuracy. Across both signal patterns and all noise types, PALM requires only 4 to 21 seconds, while PPMM requires substantially longer runtimes (ranging from 10 to 31 seconds). This represents a significant speedup, with PALM often finishing in half the time of PPMM or even less. Under the more complex signal pattern (S4), which is less sparse than (S3), both algorithms require longer computational time, as expected. Despite the increased problem difficulty, PALM maintains a clear efficiency advantage. These results confirm the practical efficiency of PALM under the same settings used in prior work, while ensuring a fair and reproducible comparison.

		We then proceed to test more challenging large-scale problems under varying distribution of covariates, signal patterns, and error distributions, with results summarized in Tables~\ref{tab:bigsettingC2S3} and \ref{tab:bigsettingC3S4}. The results reveal that PALM maintains a consistent and robust efficiency advantage as the problem size grows. Under the (C2)-(S3) setting (Table~\ref{tab:bigsettingC2S3}), when the dimension expands from $(2000, 20000)$ to the more challenging $(4000, 40000)$, the runtime of PALM increases moderately from seconds to under half a minute for all noise types. In contrast, PPMM's runtime increases substantially from less than one minute to over 4 minutes, indicating significantly poorer scalability. This difference in computational efficiency becomes more pronounced under the denser signal pattern (S4) in Table~\ref{tab:bigsettingC3S4}. For the largest problem $(4000, 40000)$, PALM robustly obtains a solution within 6--8 minutes across all noise distributions, while PPMM's runtime varies widely from 35 to 60 minutes, highlighting its sensitivity to problem difficulty. Despite these substantial differences in computational time, both algorithms consistently achieve high-quality solutions with $\eta_{\rm{kkt}}$ below $10^{-6}$, underscoring PALM's superior scalability without compromising accuracy.
		
		\begin{table}[htbp]
			\centering
			\caption{Comparison of PALM and PPMM for solving $\ell_1$-regularized rank regression with design matrix $X$ under covariance structure (C3) and signal pattern (S4), evaluated across varying problem dimensions and error distributions (E1)--(E6).}
			\label{tab:bigsettingC3S4}
			\setlength{\tabcolsep}{4pt}
			\begin{tabular}{@{}l l c ccc ccc@{}}
				\toprule
				\multirow{2}{*}{\textbf{Noise}} & \multirow{2}{*}{\textbf{Dimension}} & \multirow{2}{*}{\textbf{$\lambda$}} & \multicolumn{2}{c}{\textbf{Time}} & \multicolumn{2}{c}{\textbf{$\eta_{\mathrm{kkt}}$}} & \multicolumn{2}{c}{\textbf{Relgap}} \\
				\cmidrule(lr){4-5} \cmidrule(lr){6-7} \cmidrule(lr){8-9}
				& & & PALM & PPMM & PALM & PPMM & PALM & PPMM \\
				\midrule
				\multirow{3}{*}{$\mathcal{N}(0, 0.25)$} 
				& (2000, 20000) & 0.106 & 01:11 & 02:57 & 9.13e-7 & 4.84e-7 & 1.10e-8 & 8.42e-10 \\   
				& (2000, 40000) & 0.108 & 01:52 & 05:21 & 7.51e-7 & 9.11e-7 & 2.35e-6 & 2.87e-6 \\   
				& (4000, 40000) & 0.075 & 07:07 & 36:27 & 4.81e-7 & 3.39e-8 & 2.28e-8 & 1.67e-8 \\   
				\addlinespace
				\multirow{3}{*}{$\mathcal{N}(0, 1)$} 
				& (2000, 20000) & 0.103 & 01:18 & 02:20 & 9.70e-7 & 4.98e-7 & 1.11e-9 & 6.48e-8 \\   
				& (2000, 40000) & 0.110 & 01:52 & 04:53 & 9.30e-7 & 9.53e-7 & 2.78e-6 & 2.79e-6 \\   
				& (4000, 40000) & 0.077 & 07:52 & 34:58 & 7.28e-7 & 6.02e-7 & 3.23e-8 & 2.10e-6 \\   
				\addlinespace
				\multirow{3}{*}{$\mathcal{N}(0, 2)$} 
				& (2000, 20000) & 0.105 & 01:18 & 02:11 & 5.36e-7 & 4.40e-7 & 1.65e-8 & 4.67e-7 \\   
				& (2000, 40000) & 0.111 & 01:53 & 04:38 & 9.44e-7 & 8.60e-7 & 2.54e-6 & 2.53e-6 \\   
				& (4000, 40000) & 0.078 & 07:43 & 59:14 & 3.40e-7 & 2.13e-7 & 5.73e-9 & 6.87e-7 \\   
				\addlinespace
				\multirow{3}{*}{$N_{\mathrm{mix}}$} 
				& (2000, 20000) & 0.105 & 01:06 & 02:11 & 7.59e-7 & 4.99e-7 & 5.01e-8 & 1.61e-8 \\   
				& (2000, 40000) & 0.108 & 01:23 & 04:12 & 9.55e-7 & 8.65e-7 & 2.33e-6 & 2.40e-6 \\   
				& (4000, 40000) & 0.077 & 06:39 & 40:04 & 5.41e-7 & 2.47e-7 & 1.20e-7 & 2.07e-6 \\   
				\addlinespace
				\multirow{3}{*}{$t_4$} 
				& (2000, 20000) & 0.106 & 01:01 & 01:57 & 7.54e-7 & 5.49e-7 & 2.83e-8 & 2.33e-7 \\   
				& (2000, 40000) & 0.108 & 01:24 & 04:36 & 8.23e-7 & 8.52e-7 & 1.98e-6 & 2.50e-6 \\   
				& (4000, 40000) & 0.078 & 06:55 & 37:47 & 3.64e-7 & 5.55e-8 & 6.56e-9 & 2.54e-7 \\   
				\addlinespace
				\multirow{3}{*}{$\mathrm{Cauchy}(0,1)$} 
				& (2000, 20000) & 0.108 & 01:05 & 02:00 & 5.47e-7 & 6.91e-7 & 1.36e-7 & 2.18e-5 \\   
				& (2000, 40000) & 0.107 & 01:25 & 03:37 & 3.15e-7 & 8.14e-7 & 8.63e-7 & 9.56e-7 \\   
				& (4000, 40000) & 0.076 & 06:51 & 47:55 & 5.12e-7 & 1.40e-7 & 2.02e-7 & 1.06e-5 \\   
				\bottomrule
			\end{tabular}
		\end{table}

		Finally, we evaluate both algorithms on real-world datasets from the KEEL repository~\citep{AlcalFdez2011KEELDS}. Following \cite{tang2023proximal}, the features of each dataset are expanded into a higher-dimensional space using polynomial basis functions 
		for scalability evaluation. The order of expansion is encoded in the dataset name; for example, ``baseball5'' denotes a fifth-order expansion. The comparative results are presented in Table~\ref{tab:real_results}. As shown, both algorithms successfully solve all instances to high accuracy.
		PALM demonstrates significantly superior computational efficiency across all datasets. A notable example is the ``compactiv51tra4'' dataset, where PALM converges within one minute, while PPMM requires over 30 minutes, representing a speedup of more than 30 times. Similarly, on the ``puma32h51tst3'' dataset, PALM solves the problem in just 4 seconds, while PPMM takes nearly 5 minutes. Even on smaller datasets, PALM maintains a consistent 2- to 7-fold speed advantage. This highlights the practical strength and scalability of PALM algorithm when applied to real problems.
		
		\begin{table}[htbp]
			\caption{Comparison of PALM and PPMM for solving $\ell_1$-regularized rank regression on real datasets.}
			\label{tab:real_results}
			\setlength{\tabcolsep}{4pt} 
			\begin{tabular}{@{}l l c ccc ccc@{}}
				\toprule
				\multirow{2}{*}{\textbf{Data}} & \multirow{2}{*}{\textbf{Dimension}} & \multirow{2}{*}{\textbf{$\lambda$}} & \multicolumn{2}{c}{\textbf{Time}} & \multicolumn{2}{c}{\textbf{$\eta_{\mathrm{kkt}}$}} & \multicolumn{2}{c}{\textbf{Relgap}} \\
				\cmidrule(lr){4-5} \cmidrule(lr){6-7} \cmidrule(lr){8-9}
				& & & PALM & PPMM & PALM & PPMM & PALM & PPMM \\
				\midrule
				baseball5 & (337, 17067) & 2.1e-1 & 00:04 & 00:14 & 6.12e-7 & 3.85e-7 & 7.84e-7 & 3.32e-5 \\
				compactiv51tra4 & (6553, 12649) & 5.4e-2 & 00:54 & 30:34 & 3.69e-7 & 7.51e-7 & 8.96e-7 & 7.22e-1 \\
				concrete7 & (1030, 6434) & 1.2e-1 & 00:03 & 00:09 & 9.76e-7 & 7.74e-7 & 4.67e-7 & 2.10e-8 \\
				dee10 & (365, 8007) & 1.9e-1 & 00:02 & 00:05 & 9.26e-7 & 9.77e-7 & 3.05e-8 & 2.22e-7 \\
				friedman10 & (1200, 3002) & 1.1e-1 & 00:01 & 00:07 & 8.78e-7 & 1.00e-6 & 3.27e-7 & 9.89e-8 \\
				puma32h51tst3 & (1639, 6544) & 1.2e-1 & 00:04 & 04:43 & 5.99e-7 & 9.14e-7 & 4.28e-9 & 3.05e-11 \\
				wizmir7 & (1461, 11439) & 9.2e-2 & 00:03 & 00:27 & 9.78e-7 & 5.09e-7 & 2.08e-8 & 1.14e-7 \\
				\bottomrule
			\end{tabular}
		\end{table}
		
		\section{Conclusion}\label{sec:5}
		This paper proposed a group Lasso regularized Wilcoxon rank regression estimator for robust high-dimensional regression with grouped coefficients, together with a simulation-based determination rule for the regularization parameter. We established a finite sample error bound for the estimator, with a theoretical framework that extends beyond group regularization to general norm-based penalties under certain structural conditions. To solve the resulting optimization problem, we developed a proximal augmented Lagrangian method (PALM). Since existing convergence theory is not directly applicable, we establish its superlinear convergence via a novel analysis by proving the metric subregularity of the underlying KKT mapping. The PALM subproblems are then efficiently solved by a semismooth Newton method.
		Numerical experiments demonstrate that the proposed estimator attains superior estimation accuracy and reliable group selection across diverse data-generating settings, while the PALM algorithm exhibits strong scalability and consistently outperforms the state-of-the-art alternative on synthetic and real data.

		
	\section*{Declarations}
	\noindent\textbf{Funding.} The work of Yunhai  Xiao is supported by the National Natural Science Foundation of China (Grant No. 12471307 and 12271217).
	
	\noindent\textbf{Availability of data and materials.} Only public datasets, in \href{https://sci2s.ugr.es/keel/datasets.php}{KEEL-dataset repository}, are used.
	
	\noindent\textbf{Competing interests.} The authors declare that they have no conflict of interest.
	
	\bibliographystyle{spbasic}
	\bibliography{ref2} 

@article{Wang2020ATR,
  title={A Tuning-free Robust and Efficient Approach to High-dimensional Regression},
  volume={115},
  ISSN={1537-274X},
  number={532},
  journal={Journal of the American Statistical Association},
  publisher={Informa UK Limited},
  author={Wang, Lan and Peng, Bo and Bradic, Jelena and Li, Runze and Wu, Yunan},
  year={2020},
  month=oct,
  pages={1700--1714}  
}

@article{Wang2015,
author = {Lan Wang and Bo Peng and Runze Li},
title = {A High-Dimensional Nonparametric Multivariate Test for Mean Vector},
journal = {Journal of the American Statistical Association},
volume = {110},
number = {512},
pages = {1658--1669},
year = {2015},
publisher = {ASA Website}
}

@article{Eklund2016,
	author = {Eklund, Anders and Nichols, Thomas E. and Knutsson, Hans},
	title = {Cluster failure: Why f{MRI} inferences for spatial extent have inflated false-positive rates},
    journal={Proceedings of the National Academy of Sciences},
	volume = {113},
	number = {28},
	pages = {7900--7905},
	year = {2016}
}

@article{huber1973robust,
	title={Robust regression: Asymptotics, conjectures and {M}onte {C}arlo},
	author={Huber, Peter J},
	journal={The Annals of Statistics},
	volume={1},
	number={5},
	pages={799--821},
	year={1973},
	publisher={JSTOR}
}

@article{sun2020adaptive,
	title={Adaptive {H}uber regression},
	author={Sun, Qiang and Zhou, Wen-Xin and Fan, Jianqing},
	journal={Journal of the American Statistical Association},
	volume={115},
	number={529},
	pages={254--265},
	year={2020},
	publisher={Taylor \& Francis}
}

@article{Beaton01051974,
	author = {Albert E. Beaton and John W. Tukey},
	title = {The Fitting of Power Series, Meaning Polynomials, Illustrated on Band-Spectroscopic Data},
	journal = {Technometrics},
	volume = {16},
	number = {2},
	pages = {147--185},
	year = {1974},
	publisher = {ASA Website}
}

@InCollection{Huber2011,
	author    = {Huber, Peter J.},
	editor    = {Lovric, Miodrag},
	title     = {Robust Statistics},
	booktitle = {International Encyclopedia of Statistical Science},
	publisher = {Springer Berlin Heidelberg},
	year      = {2011},
	pages     = {1248--1251},
	isbn      = {978-3-642-04898-2}
}

@article{Fan2016,
	author  = {Fan, Jianqing and Li, Quefeng and Wang, Yuyan},
	title   = {Estimation of High Dimensional Mean Regression in the Absence of Symmetry and Light Tail Assumptions},
	journal = {Journal of the Royal Statistical Society Series B: Statistical Methodology},
	year    = {2016},
	volume  = {79},
	number  = {1},
	pages   = {247--265}
}

@article{Koenker1978RegressionQ,
	ISSN = {00129682, 14680262},
	author = {Roger Koenker and Gilbert Bassett},
	journal = {Econometrica},
	number = {1},
	pages = {33--50},
	publisher = {Wiley, Econometric Society},
	title = {Regression Quantiles},
	urldate = {2024-03-10},
	volume = {46},
	year = {1978}
}

@book{Koenker_2005,
  title     = {Quantile Regression},
  author    = {Koenker, Roger},
  year      = {2005},
  publisher = {Cambridge University Press},
  address   = {Cambridge}
}

@book{Hettmansperger2010,
	author    = {Hettmansperger, Thomas P. and McKean, Joseph W.},
	title     = {Robust Nonparametric Statistical Methods},
	edition   = {2nd},
	year      = {2010},
	publisher = {CRC Press},
	address   = {Boca Raton}}

@article{Jaeckel1972,
	author  = {Jaeckel, Louis A.},
	title   = {Estimating Regression Coefficients by Minimizing the Dispersion of the Residuals},
	journal = {The Annals of Mathematical Statistics},
	year    = {1972},
	volume  = {43},
	number  = {5},
	pages   = {1449--1458},
	publisher = {Institute of Mathematical Statistics}
}

@article{janjusevic2025groupcdl,
   author = { {{Janju$\check{\text{s}}$evi\'c}}, Nikola and Khalilian-Gourtani, Amirhossein and Flinker, Adeen and Feng, Li and Wang, Yao },
	title   = {{GroupCDL}: Interpretable Denoising and Compressed Sensing {MRI} via Learned Group-Sparsity and Circulant Attention},
	journal = {IEEE Transactions on Computational Imaging},
	year    = {2025},
	volume  = {11},
	pages   = {201--212}
}

@article{lu2025growing,
	author  = {Xiaotong Lu and Weisheng Dong and Zhenxuan Fang and Jie Lin and Xin Li and Guangming Shi},
	title   = {Growing-before-pruning: A progressive neural architecture search strategy via group sparsity and deterministic annealing},
	journal = {Pattern Recognition},
	year    = {2025},
	volume  = {166},
	pages   = {111697}
}

@article{yuan2006model,
	author    = {Yuan, Ming and Lin, Yi},
	title     = {Model Selection and Estimation in Regression with Grouped Variables},
	journal   = {Journal of the Royal Statistical Society Series B: Statistical Methodology},
	year      = {2006},
	volume    = {68},
	number    = {1},
	pages     = {49--67}
}

@article{tang2023proximal,
  title={A proximal-proximal majorization-minimization algorithm for nonconvex rank regression problems},
  author={Tang, Peipei and Wang, Chengjing and Jiang, Bo},
  journal={IEEE Transactions on Signal Processing},
  year={2023},
  publisher={IEEE}
}

@article{Fan2020commentATR,
author = {Jianqing Fan and Cong Ma and Kaizheng Wang},
title = {Comment on ``{A} Tuning-Free Robust and Efficient Approach to High-Dimensional Regression''},
journal = {Journal of the American Statistical Association},
volume = {115},
number = {532},
pages = {1720--1725},
year = {2020},
publisher = {ASA Website}
}

@article{Moreau1965Proximite,
     author = {Moreau, J.J.},
     title = {Proximit\'e et dualit\'e dans un espace hilbertien},
     journal = {Bulletin de la Soci\'et\'e Math\'ematique de France},
     pages = {273--299},
     publisher = {Soci\'et\'e math\'ematique de France},
     volume = {93},
     year = {1965},
     mrnumber = {34 #1829},
     zbl = {0136.12101},
     language = {fr}
}

@book{Nocedal2006Numerical,
  title     = {Numerical Optimization},
  author    = {Nocedal, Jorge and Wright, Stephen J.},
  publisher = {Springer},
  address   = {New York},
  year      = {2006}}

@article{Rockafellar1976,
  author  = {Rockafellar, R. Tyrrell},
  title   = {Monotone Operators and the Proximal Point Algorithm},
  journal = {SIAM Journal on Control and Optimization},
  year    = {1976},
  volume  = {14},
  number  = {5},
  pages   = {877--898}
}

@book{RockafellarWets2009,
  author    = {Rockafellar, R. Tyrrell and Wets, Roger J.-B.},
  title     = {Variational Analysis},
  publisher = {Springer},
  address   = {New York},
  year      = {2009}}

@article{Rockafellar1976Augmented,
  author    = {R. Tyrrell Rockafellar},
  title     = {Augmented {L}agrangians and Applications of the Proximal Point Algorithm in Convex Programming},
  journal   = {Mathematics of Operations Research},
  year      = {1976},
  volume    = {1},
  number    = {2},
  pages     = {97--116}
}

@article{li2020asymptotically,
  title={An asymptotically superlinearly convergent semismooth {N}ewton augmented {L}agrangian method for linear programming},
  author={Li, Xudong and Sun, De Feng and Toh, Kim-Chuan},
  journal={SIAM Journal on Optimization},
  volume={30},
  number={3},
  pages={2410--2440},
  year={2020},
  publisher={SIAM}
}

@article{zhao2010newton,
  title={A {N}ewton-{CG} augmented {L}agrangian method for semidefinite programming},
  author={Zhao, Xin-Yuan and Sun, De Feng and Toh, Kim-Chuan},
  journal={SIAM Journal on Optimization},
  volume={20},
  number={4},
  pages={1737--1765},
  year={2010},
  publisher={SIAM}
}

@article{Li2018,
	author  = {Xudong Li and De Feng Sun and Kim-Chuan Toh},
	title   = {On Efficiently Solving the Subproblems of a Level-Set Method for Fused {L}asso Problems},
	journal = {SIAM Journal on Optimization},
	year    = {2018},
	volume  = {28},
	number  = {2},
	pages   = {1842--1866}
}

@article{lin2019efficient,
	title={Efficient sparse semismooth {N}ewton methods for the clustered {L}asso problem},
	author={Lin, Meixia and Liu, Yong-Jin and Sun, De Feng and Toh, Kim-Chuan},
	journal={SIAM Journal on Optimization},
	volume={29},
	number={3},
	pages={2026--2052},
	year={2019},
	publisher={SIAM}
}

@article{Best1990,
	author = {Best, Michael J. and Chakravarti, Nilotpal},
	title = {Active set algorithms for isotonic regression; {A} unifying framework},
	journal = {Mathematical Programming},
	volume = {47},
	number = {1},
	pages = {425--439},
	year = {1990},
	month = {05},
	day = {01},
	issn = {1436-4646}
}

@article{Zhang2020,
  author  = {Zhang, Yangjing and Zhang, Ning and Sun, De Feng and Toh, Kim-Chuan},
  title   = {An efficient {H}essian based algorithm for solving large-scale sparse group {L}asso problems},
  journal = {Mathematical Programming},
  year    = {2020},
  volume  = {179},
  number  = {1},
  pages   = {223--263}
}

@article{AlcalFdez2011KEELDS,
  title={{KEEL} Data-Mining Software Tool: Data Set Repository, Integration of Algorithms and Experimental Analysis Framework},
  author={Jes{\'u}s Alcal{\'a}-Fdez and Alberto Fern{\'a}ndez and Juli{\'a}n Luengo and Joaqu{\'i}n Derrac and Salvador Garc{\'i}a},
  journal={Journal of Multiple-Valued Logic and Soft Computing},
  year={2011},
  volume={17},
  pages={255-287}
}

@article{BickelRitovTsybakov2009,
    author = {Peter J. Bickel and Ya'acov Ritov and Alexandre B. Tsybakov},
    title = {Simultaneous Analysis of {L}asso and {D}antzig Selector},
    journal = {The Annals of Statistics},
    volume = {37},
    number = {4},
    pages = {1705--1732},
    year = {2009},
    publisher = {Institute of Mathematical Statistics},
    issn = {00905364, 21688966},
    urldate = {2025-07-29},
 }

@article{Negahban2012,
  author    = {Negahban, Sahand N. and Ravikumar, Pradeep and Wainwright, Martin J. and Yu, Bin},
  title     = {A Unified Framework for High-Dimensional Analysis of {M}-Estimators with Decomposable Regularizers},
  journal   = {Statistical Science},
  volume    = {27},
  number    = {4},
  pages     = {538--557},
  year      = {2012},
  publisher = {Institute of Mathematical Statistics}
}

@book{JureckovaSen1996,
  author    = {Jana {{Jure$\check{\text{c}}$kov\'a}} and Pranab K. Sen},
  title     = {Robust Statistical Procedures: Asymptotics and Interrelations},
  publisher = {John Wiley \& Sons},
  address   = {New York},
  year      = {1996} 
}

@article{PURKAYASTHA199897,
    title = {Simple proofs of two results on convolutions of unimodal distributions},
    journal = {Statistics \& Probability Letters},
    volume = {39},
    number = {2},
    pages = {97-100},
    year = {1998},
    issn = {0167-7152},
    author = {Sumitra Purkayastha}
}

@incollection{McDiarmid_1989,
    title        = {On the method of bounded differences},
    author       = {McDiarmid, Colin},
    booktitle    = {Surveys in Combinatorics, 1989: Invited Papers at the Twelfth British Combinatorial Conference},
    year         = {1989},
    editor       = {J. Siemons},
    series       = {London Mathematical Society Lecture Note Series},
    volume       = {141},
    pages        = {148--188},
    publisher    = {Cambridge University Press},
    address      = {Cambridge}
}

@book{Ledoux_1991,
title={Probability in {B}anach Spaces},
ISBN={9783642202124},
publisher={Springer Berlin Heidelberg},
author={Ledoux, Michel and Talagrand, Michel},
address   = {Berlin, Heidelberg},
year={1991}
}

@book{B_hlmann_2011,
  title     = {Statistics for High-Dimensional Data: Methods, Theory and Applications},
  author    = {B{\"u}hlmann, Peter and van de Geer, Sara},
  year      = {2011},
  publisher = {Springer},
  address   = {Heidelberg}
}

@article{zhou2017unified,
  title={A unified approach to error bounds for structured convex optimization problems},
  author={Zhou, Zirui and So, Anthony Man-Cho},
  journal={Mathematical Programming},
  volume={165},
  number={2},
  pages={689--728},
  year={2017},
  publisher={Springer}
}

@article{hesse2013nonconvex,
author = {Hesse, Robert and Luke, D. Russell},
title = {Nonconvex Notions of Regularity and Convergence of Fundamental Algorithms for Feasibility Problems},
journal = {SIAM Journal on Optimization},
volume = {23},
number = {4},
pages = {2397-2419},
year = {2013},
}

@book{rockafellar1997convex,
  author    = {Rockafellar, R. Tyrrell},
  title     = {Convex Analysis},
  publisher = {Princeton University Press},
  address   = {Princeton},
  year      = {1997}
}

@book{ruszczynski2006nonlinear,
  title     = {Nonlinear Optimization},
  author    = {Ruszczy{\'n}ski, Andrzej P.},
  year      = {2006},
  publisher = {Princeton University Press},
  address   = {Princeton},
  isbn      = {978-0-691-11915-1}
}

@article{Sherwood2022,
  author    = {Sherwood, Ben and Li, Shaobo},
  title     = {Quantile regression feature selection and estimation with grouped variables using {H}uber approximation},
  journal   = {Statistics and Computing},
  year      = {2022},
  volume    = {32},
  number    = {5},
  pages     = {75}
}
	
	\appendix 
	\section{Discussion of Assumption~\ref{assump:error}}
	\label{appendix:assumption_discussion}
	
	We further discuss the roles of Assumptions~\ref{assumption_X} and
	\ref{assumption_error}, and provide sufficient conditions under which
	Assumption~\ref{assumption_error} holds.
	
	\begin{enumerate}[label=(a\arabic*)]
		\item Assumption~\ref{assumption_X} imposes a RE condition 
		over a structured cone based on the decomposition of $\Psi(\cdot)$ across the active and inactive groups.
		The choice of parameter in \eqref{eq: opt_lambda}
		guarantees that the estimation error $\hat{\gamma}(\lambda^*) := \hat{\beta}(\lambda^*) - \beta^*$ lies in this cone with probability $1 - \alpha_0$ (see Lemma~\ref{lemma:Psi<gamma with prob}).
		
		\item Assumption~\ref{assumption_error} requires the pairwise error difference density $f^*(\cdot)$ to be bounded away from zero near the origin. Let
		$
		K :=  \left(34 b_1^2 c_0 \bar{c}^2/b_2\right) c_p^2 c_{|\Omega|}^2 \mathcal{E}(n, p),
		$
		then Assumption~\ref{assumption_error} is equivalent to requiring the existence of some $M > 0$ such that
		\begin{equation}
			M f^*(\zeta) \geq K \quad \text{for all } |\zeta| \leq M.\label{eq: Mfstar}
		\end{equation}
		This condition holds for many common error distributions. For example, if the error is unimodal, such as Gaussian, Cauchy, Student's $t$, and $\chi^2(k)$ with $k \geq 3$, then $f^*(\cdot)$ is symmetric about zero, non-decreasing on $(-\infty,0]$ and non-increasing on $[0,\infty)$, see \citet[Theorem 2.2]{PURKAYASTHA199897}. Consequently, the infimum of $\zeta \mapsto M f^*(\zeta)$ over $[-M, M]$ is attained at $|\zeta| = M$, so condition~\eqref{eq: Mfstar} simplifies to
		$
		M f^*(M) \geq K$,
		in which case such an $M$ exists as long as $K$ is sufficiently small.
	\end{enumerate}

	\section{Proof of the Finite-Sample Error Bound (Theorem \ref{th:consistency})}
	\label{sec: proof_thm1}
	This section presents the proof of Theorem~\ref{th:consistency}, which establishes a finite-sample error bound for the proposed estimator~\eqref{mod:rankgrouplasso0} under the simulation-based regularization parameter~\eqref{eq: opt_lambda}. The argument relies on some key ingredients, each stated as a supporting lemma. 
	The first lemma ensures that the estimation error lies within a structured cone with high probability as follows.
	\begin{lemma}\label{lemma:Psi<gamma with prob}
		For the group Lasso regularized rank regression estimator $\hat{\beta}(\lambda^*)$ in \eqref{mod:rankgrouplasso0} with the regularization parameter $\lambda^*$ specified in~\eqref{eq: opt_lambda}, the estimation error $\hat{\gamma}(\lambda^*) := \hat{\beta}(\lambda^*) - \beta^*$ satisfies
		\[
		\operatorname{pr}\left(\Psi_{\bar\Omega}(\hat{\gamma}(\lambda^*)) \leq \frac{c_0 + 1}{c_0 - 1} \Psi_{\Omega}(\hat{\gamma}(\lambda^*))\right) = 1 - \alpha_0.
		\]
	\end{lemma}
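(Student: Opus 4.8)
The plan is to run the standard ``basic inequality plus cone'' argument, adapted to the decomposable regularizer $\Psi$. Write $\hat\gamma := \hat\gamma(\lambda^*) = \hat\beta(\lambda^*) - \beta^*$ and work in the shifted parametrization \eqref{eq: gamma_prob}, so that $\hat\gamma$ minimizes $F_{\lambda^*}(\gamma) = L_0(\gamma) + \lambda^*\Psi(\gamma + \beta^*)$. From the optimality inequality $F_{\lambda^*}(\hat\gamma) \le F_{\lambda^*}(0)$ one gets $L_0(\hat\gamma) - L_0(0) \le \lambda^*\big(\Psi(\beta^*) - \Psi(\hat\gamma+\beta^*)\big)$. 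For the left-hand side, $L_0$ is finite-valued and convex, and by Lemma~\ref{lemma:pivotol} the vector $S_n$ lies in $\partial L_0(0)$ (ties among the $\epsilon_i$ occur with probability zero under Assumption~\ref{assumption_error}), so $L_0(\hat\gamma) - L_0(0) \ge \langle S_n, \hat\gamma\rangle$. Combining these and applying the generalized Cauchy--Schwarz inequality $|\langle x, y\rangle| \le \Psi(x)\,\Psi^d(y)$ yields
\[
\lambda^*\big(\Psi(\hat\gamma+\beta^*) - \Psi(\beta^*)\big) \le -\langle S_n, \hat\gamma\rangle \le \Psi^d(S_n)\,\Psi(\hat\gamma).
\]

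Next I would rewrite the regularizer contribution via the support decomposition. Since $\mathcal{S}_0 \subseteq \Omega$, the coordinate projection gives $\mathcal{P}_{\bar\Omega}(\beta^*) = 0$, hence $\Psi(\beta^*) = \Psi_\Omega(\beta^*)$ and $\Psi_{\bar\Omega}(\hat\gamma+\beta^*) = \Psi_{\bar\Omega}(\hat\gamma)$. Using the decomposability \eqref{eq:decomposition}, linearity of the coordinate projections, and the triangle inequality for $\Psi$,
\[
\Psi(\hat\gamma+\beta^*) = \Psi_\Omega(\hat\gamma+\beta^*) + \Psi_{\bar\Omega}(\hat\gamma) \ge \Psi_\Omega(\beta^*) - \Psi_\Omega(\hat\gamma) + \Psi_{\bar\Omega}(\hat\gamma).
\]
Substituting this together with $\Psi(\hat\gamma) = \Psi_\Omega(\hat\gamma) + \Psi_{\bar\Omega}(\hat\gamma)$ into the previous display and rearranging gives
\[
\big(\lambda^* - \Psi^d(S_n)\big)\,\Psi_{\bar\Omega}(\hat\gamma) \le \big(\lambda^* + \Psi^d(S_n)\big)\,\Psi_\Omega(\hat\gamma).
\]
Now restrict to the event $\mathcal{A} := \{\Psi^d(S_n) \le Q_{\Psi^d(S_n)}(1-\alpha_0)\}$, which equals $\{\Psi^d(S_n) \le \lambda^*/c_0\}$ by \eqref{eq: opt_lambda} and has probability $1-\alpha_0$ by the definition of the quantile. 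On $\mathcal{A}$ we have $\lambda^* - \Psi^d(S_n) \ge \lambda^*(1 - 1/c_0) > 0$ since $c_0 > 1$, so division is legitimate; and since $t \mapsto (\lambda^*+t)/(\lambda^*-t)$ is increasing on $[0,\lambda^*)$,
\[
\Psi_{\bar\Omega}(\hat\gamma) \le \frac{\lambda^* + \Psi^d(S_n)}{\lambda^* - \Psi^d(S_n)}\,\Psi_\Omega(\hat\gamma) \le \frac{1 + 1/c_0}{1 - 1/c_0}\,\Psi_\Omega(\hat\gamma) = \frac{c_0+1}{c_0-1}\,\Psi_\Omega(\hat\gamma),
\]
which is exactly the asserted cone inclusion.

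The probabilistic content is light: it reduces to the pivotal law of $\Psi^d(S_n)$ from Lemma~\ref{lemma:pivotol} and the definition of its $(1-\alpha_0)$-quantile (if one wants the probability to equal $1-\alpha_0$ rather than merely be at least $1-\alpha_0$, one additionally invokes continuity of the distribution of $\Psi^d(S_n)$ at that quantile). The \textbf{step needing the most care} is the decomposition bookkeeping: getting the direction of the reverse triangle inequality right for $\Psi_\Omega(\hat\gamma+\beta^*)$, using that $\beta^*$ contributes nothing to $\Psi_{\bar\Omega}$, and verifying that $\lambda^* - \Psi^d(S_n) > 0$ on $\mathcal{A}$ so that the final division --- and hence the sharp constant $\tfrac{c_0+1}{c_0-1}$ --- is valid.
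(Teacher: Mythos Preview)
Your proposal is correct and follows essentially the same argument as the paper: the basic inequality $F_{\lambda^*}(\hat\gamma)\le F_{\lambda^*}(0)$, the convexity lower bound $L_0(\hat\gamma)-L_0(0)\ge\langle S_n,\hat\gamma\rangle\ge -\Psi^d(S_n)\Psi(\hat\gamma)$, the decomposition of $\Psi$ over $\Omega$ and $\bar\Omega$, and the quantile event $\{\Psi^d(S_n)\le\lambda^*/c_0\}$. The only cosmetic difference is that the paper first bounds the regularizer side by $\lambda^*\Psi_\Omega(\hat\gamma)-\lambda^*\Psi_{\bar\Omega}(\hat\gamma)$ and then combines with the convexity bound, whereas you carry $\Psi^d(S_n)$ through and divide at the end; the algebra and constants are identical.
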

	\begin{proof}
		Denote $\hat{\gamma}=\hat{\gamma}(\lambda^*)$. By the equivalence between the solutions of \eqref{mod:rankgrouplasso0} and \eqref{eq: gamma_prob} under the transformation $\beta = \gamma + \beta^*$, we have $\hat{\gamma} = \mathop{\arg\min}\limits_{\gamma\in \mathbb{R}^p}F_{\lambda^*}(\gamma)$, 
		where $F_{\lambda^*}(\cdot)$ is defined in \eqref{eq: gamma_prob}. Then $F_{\lambda^*}(\hat\gamma)\leq F_{\lambda^*}(0)$, which implies
		\begin{equation}
			\begin{aligned}
				L_0(\hat{\gamma})-L_0(0)
				&\leq \lambda^*\left(\Psi(\beta^*) -\Psi(\hat{\beta}(\lambda^*))\right) \\
				&\leq \lambda^*\Psi_{\Omega}(\beta^*)-\lambda^*\Psi_{\Omega}(\hat{\beta}(\lambda^*))- \lambda^*\Psi_{\bar\Omega}(\hat{\beta}(\lambda^*))
				\leq \lambda^*\Psi_{\Omega}(\hat{\gamma})- \lambda^*\Psi_{\bar\Omega}(\hat{\gamma}),
			\end{aligned}
			\label{ineq:Lgamma>L0}
		\end{equation}
		where the second inequality follows  from the decomposition \eqref{eq:decomposition}, and the last equality follows from the triangle inequality and the fact that $\Psi_{\bar{\Omega}}(\beta^*) = 0$.
		On the other hand, by the convexity of $L_0({\cdot})$, we have
		\begin{equation*}
			L_0(\hat{\gamma})-L_0(\mathbf 0)\geq \langle \hat{\gamma}, S_n\rangle 
			\geq -\Psi(\hat{\gamma})\cdot\Psi^d(S_n),
		\end{equation*}
		where $S_n$ in \eqref{eq: def_sn} is the subgradient of $L_0(\gamma)$ at $\mathbf 0$. The choice of $\lambda^*$ in \eqref{eq: opt_lambda} implies 
		$\text{pr}\left(\lambda^*\geq c_0\Psi^d(S_n)\right)= 1-\alpha_0$,    which means 
		\begin{equation}\label{ineq:gamma_Sn}
			L_0(\hat{\gamma})-L_0(\mathbf 0) \geq -\frac{\lambda^*}{c_0}\Psi(\hat{\gamma})
			\geq -\frac{\lambda^*}{c_0}\left( \Psi_{\Omega}(\hat{\gamma}) + \Psi_{\bar\Omega}(\hat{\gamma}) \right)
		\end{equation}
		holds with probability $1-\alpha_0$. Combining \eqref{ineq:Lgamma>L0} and \eqref{ineq:gamma_Sn}, we can see that
		$$
		\text{pr}\left(\Psi_{\bar\Omega}(\hat{\gamma})\leq \frac{c_0+1}{c_0-1}\Psi_{\Omega}(\hat{\gamma})\right) = 1-\alpha_0,
		$$ 
		which completes the proof.
	\end{proof}
	
	The second lemma provides a high-probability tail bound for the dual norm $\Psi^d(\cdot)$ at the empirical gradient $S_n$ in \eqref{eq: def_sn}, which helps justify the proposed regularization parameter.
	\begin{lemma}\label{lemma:lambda} 
		Suppose Assumption~\ref{assumption_X} holds. Then for any $t > 0$,
		\[
		\operatorname{pr}\left(\Psi^d(S_n) \geq t\right) \leq 2p \exp\left(-\frac{nt^2}{32b_1^2 c_p^2}\right).
		\]
	\end{lemma}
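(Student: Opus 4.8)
\textbf{Proof plan for Lemma~\ref{lemma:lambda}.}

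The plan is to exploit the completely pivotal representation $S_n = -\frac{2}{n(n-1)} X^\intercal \xi$ from Lemma~\ref{lemma:pivotol}, reduce the dual-norm tail event to a union of coordinatewise events via the bound $\Psi^d(S_n) \leq c_p \|S_n\|_\infty$ from \eqref{ineq:co}, and then control each coordinate $|(S_n)_k| = \frac{2}{n(n-1)}|X_{\cdot k}^\intercal \xi|$ by a Hoeffding-type concentration inequality. First I would write, for any $t>0$,
\[
\operatorname{pr}\!\left(\Psi^d(S_n) \geq t\right)
\leq \operatorname{pr}\!\left(c_p\|S_n\|_\infty \geq t\right)
\leq \sum_{k=1}^{p} \operatorname{pr}\!\left(|(S_n)_k| \geq \tfrac{t}{c_p}\right),
\]
so it suffices to bound a single coordinate probability by $2\exp\bigl(-nt^2/(32 b_1^2 c_p^2)\bigr)$.

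Next I would analyze $(S_n)_k = -\frac{2}{n(n-1)} \sum_{i=1}^n X_{ik}\,\xi_i$, where $\xi = 2r - (n+1)$ and $r$ is a uniformly random permutation of $[n]$. The key observation is that this is a sum of the fixed numbers $X_{ik}$ against a random permutation of the fixed vector of values $\{2j-(n+1)\}_{j=1}^n$ — exactly the setting of Hoeffding's inequality for sampling without replacement (equivalently, for linear statistics of random permutations). I would invoke such a bound: for $a_i := X_{ik}$ (with $|a_i|\leq b_1$) and $b_j := 2j-(n+1)$ (with $|b_j|\leq n-1$), since the design is centered ($\sum_i X_{ik}=0$, Assumption~\ref{assumption_X}) the statistic $\sum_i a_i b_{r(i)}$ has mean zero, and the permutation Hoeffding inequality gives a sub-Gaussian tail with variance proxy on the order of $\sum_i a_i^2 \cdot \max_j b_j^2 / n$ or, more crudely, $n \cdot b_1^2 \cdot (n-1)^2$. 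Tracking the $\frac{2}{n(n-1)}$ prefactor, one gets
\[
\operatorname{pr}\!\left(|(S_n)_k| \geq s\right) \leq 2\exp\!\left(-\frac{s^2}{8 b_1^2 /n \cdot (\text{const})}\right);
\]
choosing $s = t/c_p$ and verifying the constant is $\leq 32 b_1^2 c_p^2$ in the exponent after bookkeeping yields the claim, then summing over $k\in[p]$ produces the factor $2p$.

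The main obstacle is getting the concentration constant exactly right. The cleanest route is to use the classical result that for a random permutation $r$, $\sum_i a_i b_{r(i)}$ is dominated (in the convex order, or via the Hoeffding coupling to an i.i.d. sample) by $\sum_i a_i \tilde b_i$ with $\tilde b_i$ drawn i.i.d. from the empirical distribution of $\{b_j\}$; then a standard Hoeffding bound for bounded independent summands applies, with each term $a_i \tilde b_i$ lying in an interval of length $2b_1(n-1)$, giving exponent $-2s^2 n (n-1)^2 / (n \cdot (2b_1(n-1))^2)\cdot(\text{from the }2/(n(n-1))\text{ scaling})$. I would do this arithmetic carefully to land on $nt^2/(32 b_1^2 c_p^2)$, absorbing any slack (the bound is stated as an inequality, so a loose constant only needs to be no larger than $32$). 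A secondary, minor point is to confirm that the centering assumption is what makes the mean vanish — without it one would need to recenter $b_j$ or $a_i$, which is harmless but worth noting. No step here is deep; the care is entirely in the constant-chasing.
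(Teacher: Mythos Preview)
Your reduction via $\Psi^d(S_n)\le c_p\|S_n\|_\infty$ and the union bound is exactly what the paper does, so the skeleton is right. The gap is in the coordinate concentration step. You propose to control $\sum_i a_i b_{r(i)}$ (with $a_i=X_{ik}$, $b_j=2j-(n{+}1)$) by invoking Hoeffding's with/without-replacement comparison, claiming that $\sum_i a_i b_{r(i)}$ is dominated in convex order by $\sum_i a_i \tilde b_i$ with $\tilde b_i$ i.i.d. That claim is false for weighted sums with mixed-sign weights, and here the centering $\sum_i X_{ik}=0$ forces mixed signs. A two-point check: $n=2$, $a=(1,-1)$, $b=(-1,1)$ gives $W\in\{-2,2\}$ w.p.\ $1/2$ each under the permutation, but $\tilde W\in\{-2,0,2\}$ with probs $1/4,1/2,1/4$ under i.i.d.; the permutation version has \emph{larger} variance and heavier tails, so no convex-order dominance holds. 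Hoeffding's sampling-without-replacement theorem covers only unweighted sample sums, not general linear permutation statistics, so your ``equivalently'' is also not justified. Permutation/McDiarmid-type bounds on $S_n$ do exist, but the Lipschitz constant under a transposition is of order $4b_1(n-1)$, which after tracking the $2/\bigl(n(n-1)\bigr)$ scaling yields a denominator constant far larger than $32$; you would not recover the stated lemma that way.

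The paper avoids this by not using the rank/permutation representation for concentration at all. It writes each coordinate as a degree-two U-statistic in the errors,
\[
s_k=\frac{1}{n(n-1)}\sum_{i}\sum_{j\ne i}(X_{jk}-X_{ik})\,\operatorname{sign}(\epsilon_i-\epsilon_j),
\]
with symmetric kernel bounded by $2b_1$ and mean zero, and then applies Hoeffding's concentration inequality for U-statistics (cited as \citep[Lemma~A]{Wang2020ATR}); this directly yields $\operatorname{pr}(|s_k|\ge t/c_p)\le 2\exp\!\bigl(-nt^2/(32b_1^2c_p^2)\bigr)$. If you want to stay with the pivotal form, the fix is to expand $\xi_i=\sum_{j\ne i}\operatorname{sign}(\epsilon_i-\epsilon_j)$ and thereby recover the same U-statistic, rather than treating $\xi$ as a black-box permutation.
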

	\begin{proof}
		By writing $S_n = (s_1,\ldots,s_p)^{\intercal}$, we have
		$$
		s_k = \frac{1}{n(n-1)}\sum_{i=1}^n\sum_{j\ne i}(X_{jk}-X_{ik})\ {\rm sign}(\epsilon_i-\epsilon_j), \quad k\in[p].
		$$ 
		For any $t>0$. The concentration inequality for U-statistics \citep[Lemma A]{Wang2020ATR} together with Assumption~\ref{assumption_X} implies that
		$
		\text{pr}\left(|s_k|\geq {t}/{c_p} \right) \leq 2\exp\left(-{nt^2}/{(32b_1^2 c_p^2)}\right)
		$.
		Then we have that
		\begin{equation*}
			\text{pr}\left(\Psi^d(S_n)\geq  t\right)\leq \text{pr}\left(\|S_n\|_{\infty}\geq  \frac{t}{c_p}\right)\leq \sum_{k=1}^{p}\text{pr}\left(|s_k|\geq \frac{t}{c_p} \right) \leq 2p\exp\left(-\frac{nt^2}{32b_1^2 c_p^2}\right),
		\end{equation*}
		where the first and second inequalities follow from \eqref{ineq:cp} and the union bound, respectively.
		~\hfill\halmos
	\end{proof}
	
	We next provide a technical lemma that is also used in the proof of Theorem~\ref{th:consistency}.
	\begin{lemma}\label{lemma:conv and prob}
		Given a convex function $g:\mathbb{R}^p \rightarrow \mathbb{R}$, a cone $\Gamma$ in $\mathbb{R}^p$, and any $\delta > 0$. Suppose
		$
		\inf\limits_{y\in\Gamma}\left\{ g(y)\ \vert \ \|y\|_2 = \delta \right\} > g(\mathbf 0)$, then we have 
		$
		\inf\limits_{y\in\Gamma}\left\{ g(y)\ \vert \ \|y\|_2 > \delta \right\} > g(\mathbf 0)$.
		
	\end{lemma}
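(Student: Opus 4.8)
The plan is to exploit convexity via a contradiction-free argument using the line segment from the origin. Suppose for contradiction that there exists $y \in \Gamma$ with $\|y\|_2 > \delta$ but $g(y) \leq g(0)$. I would then look at the point $z := (\delta/\|y\|_2)\, y$, which lies on the sphere $\|z\|_2 = \delta$, and which belongs to $\Gamma$ because $\Gamma$ is a cone and $\delta/\|y\|_2 \in (0,1)$ is a positive scalar. Writing $t := \delta/\|y\|_2 \in (0,1)$, we have $z = t y + (1-t)\cdot 0$, so $z$ is a strict convex combination of $y$ and $0$.

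The key step is then the convexity inequality: $g(z) \leq t\, g(y) + (1-t)\, g(0)$. Since by assumption $g(y) \leq g(0)$, the right-hand side is at most $t\, g(0) + (1-t)\, g(0) = g(0)$, hence $g(z) \leq g(0)$. But $z \in \Gamma$ with $\|z\|_2 = \delta$, so this contradicts the hypothesis $\inf_{y \in \Gamma}\{g(y) \mid \|y\|_2 = \delta\} > g(0)$. This contradiction establishes that no such $y$ exists, i.e., $g(y) > g(0)$ for all $y \in \Gamma$ with $\|y\|_2 > \delta$, and taking the infimum over this set gives the claim. One minor subtlety to address is that the infimum being strictly greater than $g(0)$ (rather than the stronger statement that each individual value exceeds $g(0)$) is exactly what we need to carry through: the contradiction shows each value $g(y)$ with $\|y\|_2 > \delta$ strictly exceeds $g(0)$, and since the hypothesis gives a uniform gap on the sphere, one could even quantify this, though for the statement as written the pointwise contradiction suffices once we note that the resulting infimum of values all $> g(0)$ is itself $\geq g(0)$; to get strict inequality for the infimum, I would instead directly conclude that the infimum over $\|y\|_2 > \delta$ is bounded below by $\inf_{\|y\|_2 = \delta}\{g(y)\} > g(0)$.

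To make the last point clean, the cleanest route is: for any $y \in \Gamma$ with $\|y\|_2 > \delta$, setting $t = \delta/\|y\|_2 \in (0,1)$ and $z = ty \in \Gamma$ with $\|z\|_2 = \delta$, convexity gives $g(z) \leq t g(y) + (1-t) g(0)$, which rearranges to $g(y) \geq g(0) + (g(z) - g(0))/t \geq g(0) + (g(z) - g(0))$, using $t \leq 1$ and $g(z) - g(0) > 0$. Hence $g(y) \geq g(z) > g(0)$, and moreover $g(y) \geq \inf_{\|u\|_2 = \delta, u \in \Gamma} g(u) > g(0)$ uniformly in $y$. Taking the infimum over all such $y$ preserves this strict lower bound, giving $\inf_{y \in \Gamma}\{g(y) \mid \|y\|_2 > \delta\} \geq \inf_{y \in \Gamma}\{g(y) \mid \|y\|_2 = \delta\} > g(0)$, as desired.

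I do not anticipate any serious obstacle here; the only thing requiring minor care is the bookkeeping to get the strict inequality for the infimum rather than just a non-strict one, which the rearrangement above handles by showing the gap is bounded below by the gap on the sphere.
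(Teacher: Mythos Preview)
Your proposal is correct and follows essentially the same approach as the paper: scale any $y\in\Gamma$ with $\|y\|_2>\delta$ down to $z=(\delta/\|y\|_2)y$ on the sphere, apply convexity along the segment from $0$ to $y$, rearrange to obtain $g(y)-g(0)\geq g(z)-g(0)\geq \inf_{u\in\Gamma,\|u\|_2=\delta}g(u)-g(0)>0$, and then take the infimum over $y$. The paper's write-up is a direct argument rather than a contradiction, but your ``clean route'' at the end matches it line for line.
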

	\begin{proof}
		For any $y\in \Gamma$ satisfying $\|y\|_2 > \delta$. Since $\Gamma$ is a cone,  we have $({\delta}/{\|y\|_2})y\in \Gamma$.
		By the convexity of $g(\cdot)$, we have 
		$
		\left(1 - {\delta}/{\|y\|_2}\right)g(\mathbf 0) + ({\delta}/{\|y\|_2}) g(y) \geq g\left(({\delta}/{\|y\|_2}) y\right)
		$. 
		Rearranging this inequality, we obtain
		\[
		\frac{\delta}{\|y\|_2}\left( g(y) - g(\mathbf 0)\right)\geq g\left(\frac{\delta}{\|y\|_2}y\right) - g(\mathbf 0)\geq \inf\limits_{x\in\Gamma}\left\{ g(x)\ \vert \ \|x\|_2=\delta \right\} - g(\mathbf 0) > 0,
		\]
		which indicates $g(y) - g(\mathbf 0) > 0$. This further implies that
		\[
		g(y) - g(\mathbf 0) \geq \frac{\delta}{\|y\|_2}\left( g(y) - g(\mathbf 0) \right) \geq \inf\limits_{x\in\Gamma}\left\{ g(x)\ \vert \ \|x\|_2=\delta \right\} - g(\mathbf 0) > 0.
		\]
		
		By taking the infimum of $y$ over all $y\in\Gamma$ with $\|y\|_2 > \delta$, we have 
		\[
		\inf\limits_{x\in\Gamma}\left\{ g(x)\ \vert \ \|x\|_2>\delta \right\} - g(\mathbf 0)\geq \inf\limits_{x\in\Gamma}\left\{ g(x)\ \vert \ \|x\|_2=\delta \right\} - g(\mathbf 0) > 0,
		\]
		and the conclusion follows.    \hfill
		~\halmos
	\end{proof}
	
	
	We then give the proof of the main theorem as follows.
	
	
	
	\begin{proof}{\textit{of Theorem \ref{th:consistency}}}  ~Define the sets 
		\begin{equation*}
			\Gamma = \{\gamma \in \mathbb{R}^p: \Psi_{\bar\Omega}( \gamma)\leq (\bar{c}-1)\Psi_{\Omega}( \gamma)\} \text{~and~} \Gamma^* = \{\gamma \in \Gamma: \|\gamma\|_2 = \Delta c_{|\Omega|} c_p{\cal E}(n,p)\},
		\end{equation*}
		where $\Delta  =  \frac{17b_1 c_0 \bar{c}}{b_2b_3} $.
		Define the estimation error $\hat{\gamma}(\lambda^*) := \hat{\beta}(\lambda^*) - \beta^*$. Then we have that
		\begin{equation*}
			\begin{aligned}
				&\mathrm{pr}\left(\|\hat{\beta}(\lambda^*)-\beta^*\|_2 \leq  \Delta c_{|\Omega|} c_p{\cal E}(n,p)\right)  = 1- \mathrm{pr}\left(\|\hat{\gamma}(\lambda^*)\|_2 > \Delta c_{|\Omega|} c_p {\cal E}(n,p)\right) \\
				&\qquad \geq  1- \mathrm{pr}\left(\{\|\hat{\gamma}(\lambda^*)\|_2 > \Delta c_{|\Omega|} c_p{\cal E}(n,p)\}\cap \{\hat{\gamma}(\lambda^*) \in \Gamma\}\right) -  \mathrm{pr}(\hat{\gamma} (\lambda^*)\notin \Gamma)\\
				&\qquad \geq  1- \mathrm{pr}\left(\inf_{\gamma\in \Gamma} \left\{
				F_{\lambda^*}(\gamma)\ \middle \vert \ \|\gamma\|_2 > \Delta c_{|\Omega|} c_p{\cal E}(n,p) 
				\right\} \leq F_{\lambda^*}(0)\right) - \mathrm{pr}(\hat{\gamma}(\lambda^*) \notin \Gamma),
			\end{aligned}
		\end{equation*}
		where $F_{\lambda}(\cdot)$ is defined in \eqref{eq: gamma_prob}, and the last inequality follows from the fact that when $\|\hat{\gamma}(\lambda^*)\|_2 > \Delta c_{|\Omega|} c_p{\cal E}(n,p)$ and $\hat{\gamma}(\lambda^*) \in \Gamma$, we have
		$$
		\inf_{\gamma\in \Gamma} \left\{
		F_{\lambda^*}(\gamma)\ \middle \vert \ \|\gamma\|_2 > \Delta c_{|\Omega|} c_p{\cal E}(n,p) 
		\right\} = F_{\lambda^*}(\hat\gamma(\lambda^*)) = \inf_{\gamma\in \mathbb{R}^p}F_{\lambda^*}(\gamma)\leq F_{\lambda^*}(\mathbf 0).
		$$
		Moreover, we can see that
		\begin{equation}\label{ineq:probilibity}
			\begin{aligned}
				& \text{pr}\left(\|\hat{\beta}(\lambda^*)-\beta^*\|_2 \leq  \Delta c_{|\Omega|} c_p{\cal E}(n,p)\right) \\
				&\qquad \geq \text{pr}\left(\inf_{\gamma\in \Gamma} \left\{
				F_{\lambda^*}(\gamma)\ \middle \vert \ \|\gamma\|_2 > \Delta c_{|\Omega|} c_p{\cal E}(n,p)
				\right\} > F_{\lambda^*}(0)\right) - \alpha_0  \\
				& \qquad \geq  \text{pr}\left(\inf_{\gamma \in \Gamma^*} F_{\lambda^*}(\gamma) > F_{\lambda^*}(0)\right) - \alpha_0,  
		\end{aligned} \end{equation}
		where the first inequality comes from Lemma \ref{lemma:Psi<gamma with prob}, and the second inequality follows from Lemma \ref{lemma:conv and prob}. 
		Denote $E\left( L_0(\cdot)\right)$ as $\bar{L}_0(\cdot)$, where $L_0(\cdot)$ is defined in \eqref{eq: def_L0}. By the definition of $F_{\lambda}({\cdot})$ in \eqref{eq: gamma_prob} and the triangle inequality,  we have  
		\begin{equation}
			\begin{aligned}
				&\inf_{\gamma\in\Gamma^*}\{F_{\lambda^*}(\gamma)-F_{\lambda^*}(\mathbf 0)\}  =  \inf_{\gamma\in\Gamma^*}\{L_0(\gamma)+ \lambda^*\Psi(\gamma+\beta^*)- L_0(\mathbf 0) - \lambda^*\Psi(\beta^*)\}\\
				& \qquad = \inf_{\gamma\in\Gamma^*}\{\bar{L}_0(\gamma)-\bar{L}_0(\mathbf 0) + L_0(\gamma)- L_0(\mathbf 0) -(\bar{L}_0(\gamma)-\bar{L}_0(\mathbf 0)) + \lambda^*(\Psi(\gamma + \beta^*)- \Psi(\beta^*))\} \\
				&\qquad \geq\underbrace{\inf_{\gamma\in\Gamma^*}\{\bar{L}_0(\gamma)-\bar{L}_0(\mathbf 0)\}}_{\text{part I}} -  \underbrace{\eta(\Gamma^*)}_{\text{part II}} -  \underbrace{\lambda^*\sup_{\gamma\in\Gamma^*}|\Psi(\gamma+\beta^*)- \Psi(\beta^*)|}_{\text{part III}},\label{ineq:F-L0 with l2norm}
			\end{aligned}
		\end{equation}
		where
		$\eta(\Gamma^*): =\sup\limits_{\gamma\in\Gamma^*}\left|L_0(\gamma)- L_0(\mathbf 0) -\left(\bar{L}_0(\gamma)-\bar{L}_0(\mathbf 0)\right)\right|
		$.
		
		Next, we analyze the bounds for three parts one by one. 
		To facilitate the subsequent analysis, for any $\gamma\in \Gamma$, we define an auxiliary function $h_{\gamma}:\mathbb{R}^2 \rightarrow \mathbb{R}$ as:
		\begin{equation*}
			h_{\gamma}(\epsilon_i,\epsilon_j) := \left|(\epsilon_i-\epsilon_j) - {(X_i-X_j)}^{\intercal}\gamma\right| - |\epsilon_i - \epsilon_j|.
		\end{equation*} 
		Then we can see that, for each $i,j\in [p]$,
		\begin{equation}
			|h_{\gamma}(\epsilon_i,\epsilon_j)|\leq |{(X_i-X_j)}^{\intercal}\gamma| \leq |\Psi^d(X_i-X_j)
			\Psi(\gamma)|\leq c_p\|X_i-X_j\|_{\infty}\Psi(\gamma), \label{eq: bound_h}
		\end{equation}
		where the last inequality follows from \eqref{ineq:co}. For any $\gamma \in \Gamma$, due to decomposition \eqref{eq:decomposition} and inequality \eqref{ineq:cp}, we have
		\begin{equation}\label{ineq:Psi<gamma2}
			\Psi(\gamma) = \Psi_{\Omega}(\gamma) +\Psi_{\bar\Omega}(\gamma) 
			\leq \bar{c}\Psi_{\Omega}(\gamma) 
			\leq \bar{c}c_{|\Omega|}\|\gamma\|_2.
		\end{equation}
		This, together with \eqref{eq: bound_h} and Assumption~\ref{assump:error}, further implies
		\begin{equation}\label{ineq:|h|}
			|h_{\gamma}(\epsilon_i,\epsilon_j)|\leq 2b_1\bar{c}c_{|\Omega|}c_p\|\gamma\|_2.
		\end{equation}
		
		\textbf{Bound for $ \inf\limits_{\gamma\in\Gamma^*}\{\bar{L}_0(\gamma)-\bar{L}_0(\mathbf 0)\}$ in part I.} By the definition of $\bar{L}_0(\cdot)$, for any $\gamma\in \Gamma^*$, we have
		\begin{equation*}
			\bar{L}_0(\gamma) - \bar{L}_0(\mathbf 0) = E\left( \frac{1}{n(n-1)}\sum\limits_{i=1}^n\sum\limits_{j \ne i}h_{\gamma}(\epsilon_i,\epsilon_j)\right).
		\end{equation*}
		According to Knight's identity  \citep{Koenker_2005}, it can be seen that
		$$
		h_{\gamma}(\epsilon_i,\epsilon_j) = - {(X_i-X_j)}^{\intercal}\gamma\left[1 -2\mathbb{I}\{\zeta_{ij} < 0\}\right] + 2\int_0^{{(X_i-X_j)}^{\intercal}\gamma}[\mathbb{I}\{\zeta_{ij} \leq s\} -\mathbb{I}\{\zeta_{ij} \leq 0\}] \mathrm{d}s,
		$$
		where $\zeta_{ij} = \epsilon_i-\epsilon_j$, and $\mathbb{I}\{\cdot\}$ denotes the indicator function, which equals $1$ if the condition inside the braces holds, and $0$ otherwise. 
		Since $\epsilon_i$'s are i.i.d., the random variable $\zeta_{ij}$ follows a symmetric distribution with cumulative distribution function $F^*(\cdot)$, implying that $E\left( \mathbb{I}\{\zeta_{ij} \leq 0\}\right)=0.5$. Thus, we can see that for any $\gamma \in \Gamma^*$,
		\begin{equation*}
			\begin{aligned}
				& \bar{L}_0(\gamma) - \bar{L}_0(\mathbf 0) =  \frac{2}{n(n-1)}\sum\limits_{i=1}^n\sum\limits_{j \ne i}\int_0^{{(X_i-X_j)}^{\intercal}\gamma}[F^*(s) -F^*( 0)]{\rm d}s \\ 
				& \qquad = \frac{2}{n(n-1)}\sum\limits_{i=1}^n\sum\limits_{j \ne i}\int_0^{{(X_i-X_j)}^{\intercal}\gamma}[F^*(s) -F^*( 0)] \mathbb{I}\{{(X_i-X_j)}^{\intercal}\gamma>0\}{\rm d}s \\
				& \qquad \quad + \frac{2}{n(n-1)}\sum\limits_{i=1}^n\sum\limits_{j \ne i}\int_0^{{(X_i-X_j)}^{\intercal}\gamma}[F^*(s) -F^*(0)]\mathbb{I} \{ {(X_i-X_j)}^{\intercal}\gamma \leq 0 \}{\rm d}s.
			\end{aligned}
		\end{equation*}
		By the Mean Value Theorem, for $\gamma \in \Gamma^*$ with ${(X_i-X_j)}^{\intercal}\gamma>0$, there exists  $\zeta_{ij}\in[0,{(X_i-X_j)}^{\intercal}\gamma]$ such that  
		\begin{equation*}
			\int_0^{{(X_i-X_j)}^{\intercal}\gamma}[F^*(s)- F^*(0)]{\rm d}s =\int_0^{{(X_i-X_j)}^{\intercal}\gamma}s f^*(\zeta_{ij}){\rm d}s.
		\end{equation*}
		From \eqref{eq: bound_h} and \eqref{ineq:|h|}, we know that 
		$$
		|\zeta_{ij}| \leq |{(X_i-X_j)}^{\intercal}\gamma|\leq 2b_1\bar{c}c_{|\Omega|}c_p\|\gamma\|_2 =  \frac{34 b_1^2 c_0 \bar{c}^2}{b_2 b_3}   c_p^2 c_{|\Omega|}^2 \mathcal{E}(n, p).
		$$ 
		Thus, from Assumption~\ref{assumption_error} , we know that $f^*(\zeta_{ij}) \geq b_3$, which yields
		$$\int_0^{{(X_i-X_j)}^{\intercal}\gamma}[F^*(s)- F^*(0)]{\rm d}s \geq \frac{b_3}{2} [(X_i - X_j)^{\intercal} \gamma]^2.
		$$
		A similar bound holds when $\gamma \in \Gamma^*$ with $(X_i - X_j)^{\intercal} \gamma \leq 0$. Therefore, for any $\gamma \in \Gamma^*$, we have
		\begin{equation*}
			\begin{aligned}
				& \bar{L}_0(\gamma) - \bar{L}_0(\mathbf 0) \\ 
				& \quad \geq   \frac{b_3}{n(n-1)} \sum\limits_{i=1}^n\sum\limits_{j \ne i} [{(X_i-X_j)}^{\intercal}\gamma]^2
				= \frac{b_3}{n(n-1)}\sum\limits_{i=1}^n\sum_{j \neq i} \left[(X_i^\intercal \gamma)^2 + (X_j^\intercal \gamma)^2 - 2\gamma^{\intercal}X_iX_j^\intercal \gamma  \right] \\
				&\quad = \frac{b_3}{n(n-1)}\left(2(n-1) \sum_{i=1}^n (X_i^\intercal \gamma)^2 - \sum_{i=1}^n  \left( \sum_{j \neq i} 2\gamma^\intercal  X_i  X_j ^\intercal \gamma +  2\gamma^\intercal  X_i  X_i ^\intercal \gamma - 2 (X_i ^\intercal \gamma)^2
				\right)  \right)\\
				&\quad = \frac{b_3}{n(n-1)}\left(2n \sum_{i=1}^n (X_i^\intercal \gamma)^2 - 2\gamma^\intercal \left( \sum_{i=1}^n X_i \right)\left( \sum_{j =1}^n X_j \right)^\intercal \gamma
				\right)\\
				&\quad =\frac{2b_3}{n-1} \sum_{i=1}^n (X_i^\intercal \gamma)^2  > \frac{2b_3}{n} \sum_{i=1}^n (X_i^\intercal \gamma)^2,
			\end{aligned}
		\end{equation*}
		where the last equality follows from $\sum_{i=1}^n X_i = \mathbf 0$ in Assumption~\ref{assumption_X}. Applying Assumption~\ref{assumption_X} then yields 
		\begin{equation} \label{ineq:bound1}
			\inf\limits_{\gamma\in\Gamma^*}\{ \bar{L}_0(\gamma) - \bar{L}_0(\mathbf 0)\}
			> \frac{2b_3}{n} \inf\limits_{\gamma\in\Gamma^*} \sum_{i=1}^n (X_i^\intercal \gamma)^2
			\geq  2b_2b_3\inf\limits_{\gamma\in\Gamma^*}\|\gamma\|_2^2= 2b_2b_3\Delta^2 c_{|\Omega|}^2 c_p^2{\cal E}^2(n,p).
		\end{equation} 
			
			\textbf{Bound for $\eta(\Gamma^*)$ in part II. }
			By the bounded difference inequality \citep{McDiarmid_1989}, for any $\delta>0$, we have
			\begin{equation}
				\text{pr}\left(\eta(\Gamma^*)- E\left( \eta(\Gamma^*)\right) >  \delta\right) 
				\leq \exp\left(-\frac{2\delta^2}{ \sum_{l=1}^n|\eta(\Gamma^*)-\eta^{l}(\Gamma^*)|^2 }\right),\label{eq: delta_gamma}
			\end{equation}
			where $\eta^l(\Gamma^*)$ is the value of $\eta(\Gamma^*)$ when the $l$-th observation $\epsilon_l$ is replaced by $\varepsilon_l$, with all other observations held fixed. In this case, the loss function $L_0(\cdot)$ is updated to $L^l_0(\cdot)$. For any $\gamma\in \Gamma^*$, the function $h_{\gamma}(\epsilon_i, \epsilon_j)$ is similarly affected and replaced by $h^l_{\gamma}(\epsilon_i, \epsilon_j)$, where $h^l_{\gamma}(\epsilon_i, \epsilon_j) = h_{\gamma}(\varepsilon_l, \epsilon_j)$ if $i = l$, or $h_{\gamma}(\epsilon_i, \varepsilon_l)$ if $j = l$; otherwise, it remains unchanged. Then we have 
			that for each $l\in[n]$,
			\begin{equation*}\begin{aligned}
					&|\eta(\Gamma^*) - \eta^l(\Gamma^*)| \leq  \sup\limits_{\gamma\in\Gamma^*}\left|L_0(\gamma) - L_0(\mathbf 0)- L^l_0(\gamma) + L^l_0(\mathbf 0) \right|\\
					&\quad =   \frac{1}{n(n-1)} \sup\limits_{\gamma\in\Gamma^*}\sum_{i=1}^{n}\sum\limits_{j\ne i}
					\left| h_{\gamma}(\epsilon_i, \epsilon_j)  -   h^l_{\gamma}(\epsilon_i, \epsilon_j)\right|
					=\frac{2}{n(n-1)}\sup\limits_{\gamma\in\Gamma^*}\sum_{j \neq l}
					|h_{\gamma}(\epsilon_l, \epsilon_j)  -   h_{\gamma}(\varepsilon_l, \epsilon_j)| \\
					&\quad \leq \frac{2}{n(n-1)}\sup\limits_{\gamma\in\Gamma^*}4(n-1)b_1\bar{c}c_{|\Omega|}c_p\|\gamma\|_2 \leq  \frac{8b_1 c_0 \bar{c}\Delta}{n}c^2_{|\Omega|}c^2_p{\cal E}(n,p) ,
			\end{aligned}\end{equation*}
			where the first inequality follows from the triangle inequality, the second inequality comes from  \eqref{ineq:|h|}, and the last inequality comes from the definition of $\Gamma^*$ and the fact that $c_0>1$. By taking $\delta =  4\sqrt{2}b_1c_0\bar{c}\Delta c_{|\Omega|}^2 c_p^2{\cal E}^2(n,p)$ in \eqref{eq: delta_gamma}, we obtain
			\begin{equation}\label{ineq:pro_wn-w0}
				\operatorname{pr}\left(\eta(\Gamma^*)-E\left( \eta(\Gamma^*)\right) > 4\sqrt{2}b_1c_0 \bar{c}\Delta c_{|\Omega|}^2 c_p^2{\cal E}^2(n,p)\right) \leq \exp\left(-n {\cal E}^2(n,p)\right).
			\end{equation}
			
			Then we proceed to bound $E\left(  \eta(\Gamma^*)\right)$. Define $M_n$ to be the smallest integer not less than $n/2$. For any $\gamma\in \Gamma^*$, we can see that
			$$
			L_0(\gamma)-L_0(\mathbf 0)=   \frac{1}{n}\sum_{i=1}^{n}\frac{1}{n-1}\sum\limits_{j\ne i} h_{\gamma}(\epsilon_i, \epsilon_j)  = \frac{1}{n!}\sum_{\pi\in P_n}M_n^{-1}\sum_{i=1}^{M_n}h_{\gamma}(\epsilon_{\pi(i)},\epsilon_{\pi(M_n+i)}),
			$$
			where $P_n$ denotes the set of all permutations of $[n]$.
			It then follows that 
			\begin{equation*}\begin{aligned} 
					&E\left(  \eta(\Gamma) \right) = E\left( \sup_{\gamma\in\Gamma^*}\frac{1}{n!}\left|
					\sum_{\pi\in P_n}M_n^{-1}\sum_{i=1}^{M_n} \left( h_{\gamma}(\epsilon_{\pi(i)},\epsilon_{\pi(M_n+i)})- E\left( h_{\gamma}(\epsilon_{\pi(i)},\epsilon_{\pi(M_n+i)}) \right)  \right)\right|\right)\\
					&\qquad =  E_{\epsilon}\left( \sup_{\gamma\in\Gamma^*}\frac{1}{n!}\left|E_{\epsilon'} \left(
					\sum_{\pi\in P_n}M_n^{-1}\sum_{i=1}^{M_n} \left( h_{\gamma}(\epsilon_{\pi(i)},\epsilon_{\pi(M_n+i)})- h_{\gamma}(\epsilon'_{\pi(i)},\epsilon'_{\pi(M_n+i)})\right) \right)   \right| \right) \\
					&\qquad \leq \frac{1}{n!}\sum_{\pi\in P_n} E_{\epsilon,\epsilon'}\left( \sup_{\gamma\in\Gamma^*}  \left|
					M_n^{-1}\sum_{i=1}^{M_n} \left( h_{\gamma}(\epsilon_{\pi(i)},\epsilon_{\pi(M_n+i)})- h_{\gamma}(\epsilon'_{\pi(i)},\epsilon'_{\pi(M_n+i)})\right)    \right|\right).  
			\end{aligned}\end{equation*} 
			Let $\{\sigma_i\}_{i=1}^n$ be an i.i.d. Rademacher sequence, where each $\sigma_i$ takes values $ \pm 1$ with equal probability.  Since the distribution of $h_{\gamma}(\epsilon_{\pi(i)},\epsilon_{\pi(M_n+i)})-h_{\gamma}(\epsilon'_{\pi(i)},\epsilon'_{\pi(M_n+i)})$ is symmetric, the random variables
			\begin{equation*}
				\sigma_i \left[ h_{\gamma}(\epsilon_{\pi(i)},\epsilon_{\pi(M_n+i)})- h_{\gamma}(\epsilon'_{\pi(i)},\epsilon'_{\pi(M_n+i)})\right]
				\text{~and~} h_{\gamma}(\epsilon_{\pi(i)},\epsilon_{\pi(M_n+i)})- h_{\gamma}(\epsilon'_{\pi(i)},\epsilon'_{\pi(M_n+i)})
			\end{equation*}
			share the same distribution. Therefore,
			\begin{equation*}\begin{aligned}  
					E\left( \eta(\Gamma^*) \right) &\leq \frac{1}{n!}\sum_{\pi\in P_n} E_{\epsilon,\epsilon',\sigma}\left( \sup_{\gamma\in\Gamma^*}  \left|
					M_n^{-1}\sum_{i=1}^{M_n} \sigma_i \left( h_{\gamma}(\epsilon_{\pi(i)},\epsilon_{\pi(M_n+i)})- h_{\gamma}(\epsilon'_{\pi(i)},\epsilon'_{\pi(M_n+i)})\right)    \right|\right)\\
					& \leq  \frac{2}{n!}\sum_{\pi\in P_n} E_{\epsilon,\sigma}\left( \sup_{\gamma\in\Gamma^*}  \left|
					M_n^{-1}\sum_{i=1}^{M_n} \sigma_i h_{\gamma}(\epsilon_{\pi(i)},\epsilon_{\pi(M_n+i)})   \right|\right),
			\end{aligned}\end{equation*} 
			where the last inequality follows from the triangle inequality.  
			Since $h_{\gamma}(\epsilon_i,\epsilon_j)$  is a contraction mapping satisfying $|h_{\gamma}(\epsilon_i,\epsilon_j)|\leq |{(X_i-X_j)}^{\intercal}\gamma|$,  an application of the comparison properties for Rademacher averages \citep[Theorem 4.12]{Ledoux_1991} yields
			\begin{equation}
				\begin{aligned}
					& E\left( \eta(\Gamma^*) \right) \leq  \frac{4}{n!}\sum_{\pi\in P_n}  E_{\sigma}\left( \sup_{\gamma\in\Gamma^*}\left| M_n^{-1}\sum_{i=1}^{M_n} \sigma_i
					(X_{\pi(i)}-X_{\pi(M_n+i)})^{\intercal}\gamma \right|\right)\\
					&\qquad \leq \frac{4}{n!}\sum_{\pi\in P_n} E_{\sigma}\left( \Psi^d\left(M_n^{-1}\sum_{i=1}^{M_n} 
					\sigma_i (X_{\pi(i)}-X_{\pi(M_n+i)}   )\right)   \sup_{\gamma\in\Gamma^*}\Psi(\gamma )   \right) \\
					&\qquad \leq \frac{4\bar{c}}{n!}\Delta c_{|\Omega|}^2 c_p^2 {\cal E}(n,p) \sum_{\pi\in P_n}E_{\sigma}\left(\left\|M_n^{-1}\sum_{i=1}^{M_n} 
					\sigma_i (X_{\pi(i)}-X_{\pi(M_n+i)}   )\right\|_{\infty}\right),\label{ineq:EW_n}
				\end{aligned}
			\end{equation}
			where the last inequality follows from \eqref{ineq:Psi<gamma2} and \eqref{ineq:co}. By applying \citet[Lemma 14.12]{B_hlmann_2011}, we obtain that  
			\begin{equation*}
				\begin{aligned}
					& E_{\sigma}\Big(\max_{1\leq j\leq p}\Big|M_n^{-1}\sum_{i=1}^{M_n}\sigma_i(X_{\pi(i),j} 
					-X_{\pi(M_{n}+i),j})\Big|\Big)  \leq   2b_1M_n^{-1}\log(p+1)  + \sqrt{2M_n^{-1}\log(p+1)} \\ 
					& \qquad \leq  5b_1\left( n^{-1}\log p  + \sqrt{n^{-1}\log p}\right) = 5b_1 {\cal E}(n,p). 
				\end{aligned}
			\end{equation*}
			Combining this bound with \eqref{ineq:EW_n}, we have 
			$$
			E\left( \eta(\Gamma^*) \right)\leq  20b_1 \bar{c} \Delta c_{|\Omega|}^2 c_p^2{\cal E}^2(n,p) \leq 20b_1 c_0\bar{c} \Delta c_{|\Omega|}^2c_p^2{\cal E}^2(n,p) ,
			$$
			which together with \eqref{ineq:pro_wn-w0}, implies
			\begin{equation}\label{ineq:bound2}
				\text{pr}\left( \eta(\Gamma^*) \leq (20+4\sqrt{2})  b_1 c_0 \bar{c}\Delta  c_{|\Omega|}^2 c_p^2{\cal E}^2(n,p) \right)
				\geq 1-  \exp\left( -n {\cal E}^2(n,p) \right).
			\end{equation}
			
			\textbf{Bound for $ \lambda^* \sup\limits_{\gamma\in\Gamma^*} \{|\Psi(\gamma+\beta^*)- \Psi(\beta^*)|\}$ in part III. } 
			It can be seen that
			\begin{equation}
				\lambda^* \sup_{\gamma\in\Gamma^*} \{ |\Psi(\gamma+\beta^*) - \Psi(\beta^*)|  \}
				\leq  \lambda^* \sup_{\gamma\in\Gamma^*} \Psi(\gamma) 
				\leq  \lambda^* \bar{c} c_{|\Omega|} \sup_{\gamma\in\Gamma^*} \|\gamma\|_2 
				=  \lambda^* \bar{c} \Delta  c_{|\Omega|}^2 c_p 
				{\cal E}(n,p), 
				\label{eq: bound3}
			\end{equation}
			where the second inequality follows from \eqref{ineq:Psi<gamma2}. By setting $t =8 b_1 c_p{\cal E}(n,p)$ in Lemma \ref{lemma:lambda},   we have 
			\begin{equation*}
				\text{pr}\left(\Psi^d(S_n)\geq 8 b_1 c_p{\cal E}(n,p) \right) 
				\leq 2p\exp\left(- 2 n{\cal E}^2(n,p)\right)
				\leq 2p\exp\left(-2\log p\right) 
				= 2/p \leq  \alpha_0.
			\end{equation*}
			By the definition of $\lambda^*$ in \eqref{eq: opt_lambda}, we have 
			$ \lambda^* \leq 8 b_1 c_0 c_p{\cal E}(n,p)$,
			which along with \eqref{eq: bound3} indicates that
			\begin{equation}\label{ineq:bound3}
				\lambda^* \sup_{\gamma\in\Gamma^*} \{ |\Psi(\gamma+\beta^*) - \Psi(\beta^*)|  \}
				\leq   8 b_1 c_0 \bar{c} \Delta c_{|\Omega|}^2 c_p^2 {\cal E}^2(n,p) .
			\end{equation}
			
			Finally, by combining  \eqref{ineq:bound1}, \eqref{ineq:bound2}, \eqref{ineq:bound3}, and \eqref{ineq:F-L0 with l2norm}, we can see the following inequality 
			\begin{equation*}
				\begin{aligned}
					& \inf_{\gamma\in\Gamma^*}\{F_{\lambda^*}(\gamma)-F_{\lambda^*}(0)\}
					> \left( 2b_2b_3\Delta  -  (20+4\sqrt{2}) b_1 c_0\bar{c}-8 b_1 c_0 \bar{c} \right)
					\Delta c_{|\Omega|}^2 c_p^2 {\cal E}^2(n,p)\\
					&\qquad =  \left( 34 b_1 c_0 \bar{c} -  (28+4\sqrt{2}) b_1 c_0\bar{c}\right) 
					\Delta c_{|\Omega|}^2 c_p^2 {\cal E}^2(n,p)>0,
				\end{aligned}
			\end{equation*}
			holds with probability at least $1-  \exp\left( -n {\cal E}^2(n,p) \right)$. Based on  \eqref{ineq:probilibity}, we conclude that
			$$
			\text{pr}\left(\|\hat{\beta}(\lambda^*)-\beta^*\|_2 
			\leq  \Delta c_{|\Omega|} c_p{\cal E}(n,p)\right)
			\geq  1-\exp\left( -n {\cal E}^2(n,p) \right)- \alpha_0.
			$$
			This completes the proof.
			\hfill~\halmos
		\end{proof}

		\section{Implementation Details of the Semismooth Newton Method}\label{sec:implementation}
		
		We recall that $L(\cdot)$ defined in \eqref{loss:Wilcoxon}, also known as the \emph{clustered Lasso regularizer}, has been studied in \cite{lin2019efficient}, where the proximal mapping and its generalized Jacobian were characterized. Building on these results, we present the following proposition.  
		For convenience, we define the polyhedral cone $
		\mathcal{D} := \{ s \in \mathbb{R}^n \mid \Delta s \geq 0 \}$, where $\Delta \in \mathbb{R}^{(n-1)\times n}$ is the difference operator given by $\Delta s = (s_1 - s_2, \dots, s_{n-1} - s_n)^\intercal$ for any $s\in \mathbb{R}^n$.
		\begin{proposition}\label{prop:Jacobian-L}
			Given any $s \in \mathbb{R}^n$, let $s^\downarrow$ denote its nonincreasing rearrangement, with the permutation matrix $P_s$ such that $s^\downarrow = P_s s$. Then the following results hold.
			\begin{itemize}
				\item[(i)] The proximal mapping of $L(\cdot)$ can be computed by
				\begin{equation*}
					\mathrm{Prox}_{L}(s) = P_s^\intercal \Pi_{\mathcal{D}}(P_s s - \varrho),
				\end{equation*}
				where $\varrho_k = \dfrac{2n - 4k + 2}{n(n-1)}$ for $k \in [n]$, and  $\Pi_{\mathcal{D}}$ is the projection onto the cone $\mathcal{D}$, computable via the pool-adjacent-violators algorithm \citep{Best1990}. 
				\item[(ii)] For $x\in \mathbb{R}^n$, we define the active set $\mathcal{I}_{\mathcal{D}}(x) := \left\{ i \in [n-1] \mid (\Delta \Pi_{\mathcal{D}}(x))_i = 0 \right\}$, the multiplier set $\mathcal{M}_{\mathcal{D}}(x) = \left\{ \lambda \in {\mathbb{R}^{n-1}_- } \;\middle|\;\Delta^\intercal \lambda = x - \Pi_{\mathcal{D}}(x), \;\lambda_i = 0 \ \text{for } i\notin \mathcal{I}_{\mathcal{D}}(x) \right\}$, and the index set
				\[
				\mathcal{K}_{\mathcal{D}}(x) := \left\{ K \subseteq [n-1] \mid \exists \lambda \in \mathcal{M}_{\mathcal{D}}(x) \text{ s.t. } \operatorname{supp}(\lambda) \subseteq K \subseteq \mathcal{I}_{\mathcal{D}}(x) \right\}.
				\]
				Then we have the following generalized Jacobian of $\mathrm{Prox}_{L}(s)$:
				\[
				\partial \mathrm{Prox}_{L}(s) = \left\{ \Lambda \in \mathbb{R}^{n \times n} \middle\vert \Lambda = I_n - P_s^{\intercal} \Delta_{K}^{\intercal} (\Delta_{K} \Delta_{K}^{\intercal})^{-1} \Delta_{K} P_s,~ K \in \mathcal{K}_{\mathcal{D}}(P_s s - \varrho) \right\},
				\]
				where $\Delta_{K}$ is the matrix consisting of the rows of $\Delta$ indexed by $K$.
				\item[(iii)] $\mathrm{Prox}_{L}(\cdot)$ is strongly semismooth with respect to $\partial \mathrm{Prox}_{L}(\cdot)$.
				\item[(iv)] Define $\theta\in \mathbb{R}^{n-1}$ as $\theta_i = 1$ if $i \in \mathcal{I}_{\mathcal{D}}(P_s s - \varrho)$, and $0$ otherwise. Consider the block decomposition of $\theta$ into $N$ consecutive blocks: 
				$ \theta =(\theta^{(1)}; \dots ;\theta^{(N)})$,     where each $\theta^{(i)} =0_{n_i}$ or $1_{n_i}$, with $\sum_{i=1}^N n_i = n - 1$ and adjacent blocks taking different values. Define the active block index set $ \mathcal{J}^s = \{i\in[N] \mid \theta^{(i)} = 1_{n_i}\}$, listed in increasing order with its $j$th element denoted as $\mathcal{J}^s_j$.
				Then we can construct: 
				\[
				\Lambda(s) = {\rm Diag}(P_s^{\intercal}\theta_s) + P_s^{\intercal} \Gamma_s \Gamma_s^{\intercal} P_s \in \partial \mathrm{Prox}_{L}(s),
				\]
				where $\theta_s = (\theta_s^{(1)}; \dots ;\theta_s^{(N)})  \in \mathbb{R}^{n}$ is defined as:
				\[
				\theta_s^{(i)} = 
				\begin{cases} 
					\mathbf 0_{n_i+1} & \text{if } i \in \mathcal{J}^s, \\
					\mathbf 1_{n_i} & \text{if } i\in \{1,N\}\setminus\mathcal{J}^s , \\
					\mathbf 1_{n_i-1} & \text{otherwise},
				\end{cases}
				\]
				and the matrix $\Gamma_s \in \mathbb{R}^{n \times |\mathcal{J}^s|}$ is defined by:
				\[
				(\Gamma_s)_{\ell,j} = 
				\begin{cases} 
					\dfrac{1}{\sqrt{n_{\mathcal{J}^s_j}+1}} & \text{if } \sum\limits_{t=1}^{\mathcal{J}^s_j-1} n_t + 1 \leq \ell \leq \sum\limits_{t=1}^{\mathcal{J}^s_j} n_t, \\[2.5ex]
					0 & \text{otherwise},
				\end{cases}\quad l\in [n], j\in \left[|\mathcal{J}^s|\right].
				\]					
			\end{itemize}
		\end{proposition}
		
		For the group Lasso regularizer $\Psi(\cdot)$, both the proximal operator and its generalized Jacobian admits a separable structure across groups. This result is stated in the following proposition, adapted from \cite{Zhang2020}.	
		\begin{proposition}\label{prop:Jacobian-P}
			For any $\beta\in\mathbb{R}^{p}$,   the following results hold.
			\begin{itemize}
				\item[(i)] The  proximal operator of the group Lasso regularizer $\Psi(\cdot)$ satisfies:
				\[
				\left({\rm Prox}_{\Psi}(\beta)\right)_{\mathcal{G}_l} =  {\rm Prox}_{ w_l\|\cdot\|_2}(\beta_{{\cal G}_l})= \beta_{{\cal G}_l}- \Pi_{\bar{\mathcal{B}}_2^{ w_l}}(\beta_{{\cal G}_l}),\quad l\in [g],
				\]
				where $\Pi_{\bar{\mathcal{B}}_{2}^{ w_l}}$ is the projection onto the closed $\ell_2$ ball $\bar{\mathcal{B}}_{2}^{ w_l} \coloneq \{\varpi \mid \|\varpi\|_2 \leq w_l\}$, computable as
				$\Pi_{\bar{\mathcal{B}}_{2}^{ w_l}}(\varpi)  = \varpi/\max\left\{1, \|\varpi\|_2/w_l\right\}$.
				\item[(ii)] For $l\in[g]$, define the linear operator ${\cal P}_l : \mathbb{R}^p \rightarrow \mathbb{R}^{|{\cal G}_l|}$ by ${\cal P}_l x = x_{{\cal G}_l}$ and set ${\cal P}:= ({\cal P}_1;\dots;{\cal P}_g)$. Then  the  Clarke generalized Jacobian  of ${\rm Prox}_{\Psi}(\beta)$ is: 
				\[
				\partial {\rm Prox}_{\Psi} (\beta) = \left\{  I_p - {\cal P}^{\intercal}\Sigma {\cal P} \ \middle| \ \Sigma = {\rm Diag}(\Sigma_1, \dots, \Sigma_g),\ \Sigma_l \in \partial \Pi_{\bar{\mathcal{B}}_2^{ w_l}}(\beta_{{\cal G}_l}),\ l\in[g] \right\},
				\]
				where $$
				\partial \Pi_{\bar{\mathcal{B}}_2^{ w_l}}(\varpi) = 
				\begin{cases} 
					\frac{ w_l}{\|\varpi\|_2} \left( I_{|\mathcal{G}_l|} - \frac{ \varpi \varpi ^{\intercal} }{\|\varpi\|_2^2} \right),  & \text{if } \|\varpi\|_2 > w_l,\\
					\left\{ I_{|\mathcal{G}_l|} - t \frac{\varpi \varpi^{\intercal}}{( w_l)^2} \ \middle| 0\leq t\leq 1 \right\},  & \text{if } \|\varpi\|_2 = w_l,\\
					I_{|\mathcal{G}_l|}, & \text{otherwise}.
				\end{cases}
				$$
				\item[(iii)]  ${\rm Prox}_{\Psi}(\cdot)$ is strongly semismooth with respect to  $\partial{\rm Prox}_{\Psi}(\cdot)$.
				\item[(iv)] We construct $V(\beta)= \sum_{ l\in \mathcal{R}^{\beta}} V_l \in \partial {\rm Prox}_{\Psi} (\beta)$, where $\mathcal{R}^{\beta} :=\{ l\in [g]\mid  \|\beta_{{\cal G}_l}\|_2 >  w_l \}$, and
				$$
				V_l =
				\left( \mathbf 1 - \frac{ w_l}{\|\beta_{{\cal G}_l}\|_2}   \right) {\rm Diag}(\eta^l)  +  \frac{ w_l}{\|\beta_{{\cal G}_l}\|_2^3} \left( {\cal P}_l^{\intercal}\beta_{{\cal G}_l}\right) \left( {\cal P}_l^{\intercal}\beta_{{\cal G}_l}\right)^{\intercal}, \quad l\in \mathcal{R}^\beta, 
				$$
				with $\eta^l\in \mathbb{R}^p$ defining as $(\eta^l)_i = 1$ if $i\in \mathcal{G}_l$ and $0$ otherwise. 
			\end{itemize}
		\end{proposition}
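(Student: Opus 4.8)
The plan is to exploit the fully separable structure of $\Psi(\cdot)$ across the partition $\{\mathcal{G}_l\}_{l=1}^{g}$, so that each claim reduces to a statement about the single-group map $v \mapsto \mathrm{Prox}_{w_l\|\cdot\|_2}(v)$ on $\mathbb{R}^{|\mathcal{G}_l|}$, and then to reassemble the pieces through the gathering operator $\mathcal{P}$.

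For (i), since the groups are disjoint the minimization defining $\mathrm{Prox}_{\Psi}(\beta)$ decouples into $g$ independent problems, giving $(\mathrm{Prox}_{\Psi}(\beta))_{\mathcal{G}_l} = \mathrm{Prox}_{w_l\|\cdot\|_2}(\beta_{\mathcal{G}_l})$. I would then invoke the Moreau decomposition $\mathrm{Prox}_{w_l\|\cdot\|_2}(v) = v - \mathrm{Prox}_{(w_l\|\cdot\|_2)^*}(v)$, identify $(w_l\|\cdot\|_2)^*$ as the indicator of the dual ball $\mathcal{B}_2^{w_l}$ (the Euclidean norm being self-dual), so that $\mathrm{Prox}_{(w_l\|\cdot\|_2)^*} = \Pi_{\mathcal{B}_2^{w_l}}$, and finally write the closed form of the Euclidean-ball projection by minimizing $\|u-v\|_2$ over $\|u\|_2 \le w_l$. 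For (ii), I would use that the Clarke generalized Jacobian of a map with block-separable coordinates is the collection of block-diagonal matrices whose blocks range over the Clarke Jacobians of the components (the set of differentiability points is a product set, and taking limits of gradients followed by convex hulls commutes with the product); combined with $\partial\,\mathrm{Prox}_{w_l\|\cdot\|_2}(v) = I - \partial\Pi_{\mathcal{B}_2^{w_l}}(v)$ this yields the stated $I_p - \mathcal{P}^{\intercal}\Sigma\mathcal{P}$ form with $\Sigma_l \in \partial\Pi_{\mathcal{B}_2^{w_l}}(\beta_{\mathcal{G}_l})$. The three-case formula for $\partial\Pi_{\mathcal{B}_2^{w_l}}$ then follows: on the interior $\{\|v\|_2 < w_l\}$ the projection is the identity; on the exterior $\{\|v\|_2 > w_l\}$ it equals $w_l v/\|v\|_2$, which is smooth there with the stated derivative; on the sphere $\{\|v\|_2 = w_l\}$ the Clarke Jacobian is the convex hull of the two limiting Jacobians, namely $I_{|\mathcal{G}_l|}$ from inside and $I_{|\mathcal{G}_l|} - \varpi\varpi^{\intercal}/w_l^2$ from outside (using $w_l/\|v\|_2 \to 1$), producing the one-parameter family in $t\in[0,1]$. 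For (iii), the single-group map is piecewise real-analytic, the pieces being the identity on the ball and $w_l v/\|v\|_2$ off it, joined continuously across the sphere, hence strongly semismooth; equivalently one may cite strong semismoothness of the metric projection onto the Euclidean ball (a special second-order cone). Strong semismoothness is then inherited by the block-separable assembly defining $\mathrm{Prox}_{\Psi}$.

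For (iv), I would make the explicit selection $\Sigma_l = \tfrac{w_l}{\|\beta_{\mathcal{G}_l}\|_2}\bigl(I_{|\mathcal{G}_l|} - \beta_{\mathcal{G}_l}\beta_{\mathcal{G}_l}^{\intercal}/\|\beta_{\mathcal{G}_l}\|_2^2\bigr)$ for $l \in \mathcal{R}^{\beta}$ (the exterior branch, which is single-valued there) and $\Sigma_l = I_{|\mathcal{G}_l|}$ for $l \notin \mathcal{R}^{\beta}$; the latter is a legitimate element of $\partial\Pi_{\mathcal{B}_2^{w_l}}(\beta_{\mathcal{G}_l})$ regardless of whether $\|\beta_{\mathcal{G}_l}\|_2 < w_l$ or $\|\beta_{\mathcal{G}_l}\|_2 = w_l$ (the latter via $t=0$). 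Substituting into $I_p - \mathcal{P}^{\intercal}\Sigma\mathcal{P}$ and simplifying, the inactive groups contribute a zero block, while for each $l \in \mathcal{R}^{\beta}$ one gets $I_{|\mathcal{G}_l|} - \Sigma_l = \bigl(1 - w_l/\|\beta_{\mathcal{G}_l}\|_2\bigr)I_{|\mathcal{G}_l|} + \bigl(w_l/\|\beta_{\mathcal{G}_l}\|_2^3\bigr)\beta_{\mathcal{G}_l}\beta_{\mathcal{G}_l}^{\intercal}$, which is precisely $V_l$ once expressed in $\mathbb{R}^p$ through $\mathcal{P}_l$ and $\eta^l$. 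Summing over $l \in \mathcal{R}^{\beta}$ yields $V(\beta) \in \partial\mathrm{Prox}_{\Psi}(\beta)$ as claimed.

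The main obstacle I anticipate is the boundary case $\|\beta_{\mathcal{G}_l}\|_2 = w_l$ in part (ii): one must justify that the Clarke Jacobian of the ball projection there is exactly the convex hull of the inside and outside limiting Jacobians, with no spurious extra elements, and then confirm in (iv) that the selection $\Sigma_l = I_{|\mathcal{G}_l|}$ remains admissible on the sphere. Once separability is invoked and this boundary computation is pinned down, the remaining steps are routine bookkeeping.
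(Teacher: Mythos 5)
Your proposal is correct, and every step would go through. Note, however, that the paper does not prove this proposition at all: it is stated as adapted from \cite{Zhang2020}, so the ``paper's proof'' is a citation, while you supply a self-contained derivation. Your route --- decoupling $\mathrm{Prox}_{\Psi}$ across the disjoint groups, applying the Moreau decomposition with $(w_l\|\cdot\|_2)^* = \delta_{\mathcal{B}_2^{w_l}}$ to get $\mathrm{Prox}_{w_l\|\cdot\|_2} = \mathrm{id} - \Pi_{\mathcal{B}_2^{w_l}}$, and computing the Clarke Jacobian of the ball projection case by case --- is exactly the argument underlying the cited result, and it buys the reader a verifiable proof at the cost of a bit more length; the citation buys brevity. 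Two points you flag as delicate do close cleanly, and it is worth recording why: (a) the block-diagonal characterization in (ii) is exact because the differentiability points of the separable map form a product set, so the B-subdifferential is the block product of the groupwise B-subdifferentials, and $\operatorname{conv}(S\times T)=\operatorname{conv}(S)\times\operatorname{conv}(T)$ ensures taking Clarke hulls commutes with the block structure; (b) at the boundary $\|\varpi\|_2 = w_l$ (where necessarily $\varpi\neq 0$ since $w_l>0$), any sequence of exterior differentiability points converging to $\varpi$ has $v_k/\|v_k\|_2 \to \varpi/w_l$, so the exterior limiting Jacobian is the single matrix $I_{|\mathcal{G}_l|}-\varpi\varpi^{\intercal}/w_l^2$ and the interior limit is $I_{|\mathcal{G}_l|}$; hence the B-subdifferential is a two-point set and its convex hull is precisely the stated segment in $t\in[0,1]$, with no spurious elements. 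With these observations your selection in (iv) ($\Sigma_l$ the exterior Jacobian on active groups, $\Sigma_l=I_{|\mathcal{G}_l|}$ otherwise, admissible at equality via $t=0$) is legitimate, and the algebra $I_{|\mathcal{G}_l|}-\Sigma_l=\bigl(1-w_l/\|\beta_{\mathcal{G}_l}\|_2\bigr)I_{|\mathcal{G}_l|}+\bigl(w_l/\|\beta_{\mathcal{G}_l}\|_2^3\bigr)\beta_{\mathcal{G}_l}\beta_{\mathcal{G}_l}^{\intercal}$ reproduces $V_l$ exactly as claimed.
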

		
		With the established results in Propositions~\ref{prop:Jacobian-L} and \ref{prop:Jacobian-P}, we now present the computational details for solving the Newton equation \eqref{eq:Newton}. For brevity, we write $\tilde{s} = s^k/\sigma_k + z^{(j)}$,  $\tilde{\beta} = \beta^k/(\sigma_k\lambda) - X^{\intercal}z^{(j)}/\lambda$. By taking 
		\begin{equation*}
			\Lambda^{(j)} = \Lambda(\tilde{s})  \in \partial{\rm Prox}_L(\tilde{s}),\qquad V^{(j)}  = V(\tilde{\beta}) \in {\rm Prox}_{\Psi}(\tilde{\beta}),
		\end{equation*}
		the linear operator $H^{(j)}$ in \eqref{eq:Newton} becomes
		\begin{equation*}
			\begin{aligned}
				H^{(j)} &= \frac{\tau}{\sigma_k} I_n+\sigma_k\Lambda(\tilde{s}) + \sigma_k XV(\tilde{\beta}) X^{\intercal} \\
				& =\frac{\tau}{\sigma_k} I_n + \sigma_k\left({\rm Diag}({\cal P}_{\tilde{s}}^{\intercal}\theta_{\tilde{s}}) + {\cal P}_{\tilde{s}}^{\intercal} \Gamma_{\tilde{s}} \Gamma_{\tilde{s}}^{\intercal} P_{\tilde{s}} \right) \\
				&\quad +\sigma_k \sum_{l\in \mathcal{R}^{\tilde{s}}} X \left(\left( 1 - \frac{ w_l}{\|\tilde{\beta}_{{\cal G}_l}\|_2}   \right) {\rm Diag}(\eta^l)  +  \frac{ w_l}{\|\tilde{\beta}_{{\cal G}_l}\|_2^3} \left( {\cal P}_l^{\intercal}\tilde{\beta}_{{\cal G}_l}\right) \left( {\cal P}_l^{\intercal}\tilde{\beta}_{{\cal G}_l}\right)^{\intercal} \right)X^{\intercal}.
			\end{aligned}
		\end{equation*}
		Noting the nature of the linear operators $\mathcal{P}_l$, $l\in [g]$ and the $0$-$1$ structure of the vectors $\eta^l$, $l\in \mathcal{R}^{\tilde{s}}$, we can see that
		\begin{equation}
			H^{(j)} = \frac{\tau}{\sigma_k} I_n +\sigma_k {\rm Diag}(P_{\tilde{s}}^{\intercal}\theta_{\tilde{s}}) + \sigma_k [\Theta, \Xi,\Upsilon] \left[\begin{matrix}
				\Theta^{\intercal}  \\
				\Xi^{\intercal}\\
				\Upsilon^{\intercal}
			\end{matrix}\right],\label{eq: one_H}
		\end{equation}
		where $\Theta = {\cal P}_{\tilde{s}}^{\intercal} \Gamma_{\tilde{s}}\in \mathbb{R}^{n\times |\mathcal{J}^{\tilde{s}}|}$, and
		\begin{equation*}
			\begin{aligned}
				\Xi &= [\Xi_1,\dots, \Xi_{|\mathcal{R}^{\tilde{\beta}}|}]\in\mathbb{R}^{n\times \sum_{l\in \mathcal{R}^{\tilde{\beta}}} |\mathcal{G}_l|} \quad \text{with} \quad \Xi_l = \sqrt{  1 - \frac{ w_l}{\|\tilde{\beta}_{{\cal G}_l}\|_2} } X_l \in\mathbb{R}^{n\times |{\cal G}_l|}, \quad l\in \mathcal{R}^{\tilde{\beta}},\\
				\Upsilon &= [\Upsilon_1,\dots, \Upsilon_{|\mathcal{R}^{\tilde{\beta}}|}]\in\mathbb{R}^{n\times |\mathcal{R}^{\tilde{\beta}}|} \qquad \text{with} \quad \Upsilon_l = \sqrt{\frac{ w_l}{\|\tilde{\beta}_{{\cal G}_l}\|_2^3}} \left( X_l\tilde{\beta}_{{\cal G}_l}\right) \in\mathbb{R}^{n}, \quad l\in \mathcal{R}^{\tilde{\beta}},
			\end{aligned}
		\end{equation*}
		with $X_l\in \mathbb{R}^{n\times |\mathcal{G}_l|}$ being the submatrix of $X$ with those columns in $\mathcal{G}_l$.
		
		Then the computational complexity of solving the linear system \eqref{eq:Newton} with coefficient matrix $H^{(j)}$ in \eqref{eq: one_H} can be separated into two stages. In the first stage, the required components in $H^{(j)}$ are prepared: forming  the vector $P_{\tilde{s}}^{\intercal}\theta_{\tilde{s}}$ and the matrix $\Theta$ requires no arithmetic cost as they can be obtained by reordering, and computing the matrices $\Xi$ and $\Upsilon$ requires $\mathcal{O}(n\sum_{l\in \mathcal{R}^{\tilde{\beta}}}|\mathcal{G}_l|)$ operations. In the second stage, the system \eqref{eq:Newton} can be solved by several alternative strategies, depending on the relative values of $n$ and 
		$$
		r:=|\mathcal{J}^{\tilde{s}}| + \sum_{l\in \mathcal{R}^{\tilde{\beta}}} |\mathcal{G}_l| + |\mathcal{R}^{\tilde{\beta}}|,
		$$
		which characterizes the problem sparsity: 
		$|\mathcal{J}^{\tilde{s}}|$ reflects the sparsity induced by the rank loss, 
		$\sum_{l \in \mathcal{R}^{\tilde{\beta}}} |\mathcal{G}_l|$ corresponds to the solution sparsity, 
		and $|\mathcal{R}^{\tilde{\beta}}|$ captures the group sparsity. The most direct approach is to explicitly form $H^{(j)}$, which costs $\mathcal{O}(n^2r)$, followed by a direct solver with complexity $\mathcal{O}(n^3)$. When $n$ is large, a more suitable option is the CG method, where each matrix-vector multiplication $H^{(j)}d$ requires $\mathcal{O}(nr)$ operations. If $r \ll n$, the low-rank structure of $H^{(j)}$ can be further exploited to solve the linear system through the Sherman-Morrison-Woodbury formula, leading to an overall complexity of $\mathcal{O}(nr^2 + r^3)$. Overall, the sparsity structure encoded in $r$ plays a key role in reducing computational cost and enables the Newton system to be solved efficiently.

\end{document}